\theoremstyle{definition}
\newtheorem{definition}{Definition}
\newtheorem{example}{Example}
\theoremstyle{plain}
\newtheorem{theorem}{Theorem}
\newtheorem{lemma}{Lemma}
\newtheorem{proposition}{Proposition}
\newtheorem{corollary}{Corollary}
\newcommand{\Comment}[1]{}
\newcommand{\Nats}{\mathbb{N}}
\newcommand{\Reals}{\mathbb{R}}
\newcommand{\A}[1]{\lvert #1 \rvert}
\newcommand{\N}[1]{\lVert #1 \rVert}
\newcommand{\Argmax}[1]{\underset{#1}{\operatorname{arg\,max}}\,}
\newcommand{\Sq}[2]{\{#1\}_{#2 \in \Nats}}
\newcommand{\Sqn}[1]{\Sq{#1}{n}}
\newcommand{\B}{\operatorname{B}}
\DeclareMathOperator{\E}{E}
\newcommand{\M}{\xrightarrow{\textnormal{k}}}
\DeclareMathOperator{\Gr}{graph}
\newcommand{\PM}{\mathcal{P}}
\newcommand{\Lp}{{\operatorname{Lip}}}
\DeclareMathOperator{\Sp}{supp}
\newcommand{\DTV}{\operatorname{d}_{\textnormal{TV}}}
\newcommand{\DKR}{\operatorname{d}_{\textnormal{KR}}}
\newcommand{\Ob}{\mathcal{O}}
\newcommand{\OO}{\Ob^\omega}
\newcommand{\PO}{\pi^\Ob}
\newcommand{\PMO}{\PM(\OO)}
\newcommand{\MC}{\mathcal{H}}
\newcommand{\Gm}{\mathcal{B}}
\newcommand{\GMO}{\Gm(\OO)}
\newcommand{\CO}{C(\OO)}
\newcommand{\GC}{\mathfrak{G}}
\DeclareMathOperator{\V}{V}
\DeclareMathOperator{\SV}{\Sigma V}
\DeclareMathOperator{\SVM}{\Sigma V_{\min}}
\DeclareMathOperator{\SVX}{\Sigma V_{\max}}
\DeclareMathOperator{\Ab}{A}
\DeclareMathOperator{\Nr}{N}
\newcommand{\Bd}{\Lambda}
\DeclareMathOperator{\PG}{\Gamma}
\newcommand{\BM}{\bm{\mu}}
\newcommand{\F}{\mathcal{F}}
\begin{document}

\title{Forecasting using incomplete models}
\author{Vanessa Kosoy\footnote{E-mail: \href{mailto:vanessa.kosoy@intelligence.org}{vanessa.kosoy@intelligence.org}.} }
\date{}
\maketitle

\begin{abstract}
We consider the task of forecasting an infinite sequence of future observations based on some number of past observations, where the probability measure generating the observations is \enquote{suspected} to satisfy one or more of a set of \emph{incomplete} models, i.e., convex sets in the space of probability measures. This setting is in some sense intermediate between the \emph{realizable} setting where the probability measure comes from some known set of probability measures (which can be addressed using, e.g., Bayesian inference) and the \emph{unrealizable} setting where the probability measure is completely arbitrary. We demonstrate a method of forecasting which guarantees that, whenever the true probability measure satisfies an incomplete model in a given countable set, the forecast converges to the same incomplete model in the (appropriately normalized) Kantorovich-Rubinstein metric. This is analogous to merging of opinions for Bayesian inference, except that convergence in the Kantorovich-Rubinstein metric is weaker than convergence in total variation.
\end{abstract}

\textbf{keywords:} statistical learning theory, online learning, sequence prediction, probability theory, Knightian uncertainty


\section{Introduction}

Forecasting future observations based on past observations is one of the fundamental problems in machine learning, and more broadly is one of the fundamental components of rational reasoning in general. This problem received a great deal of attention, using different methods (see e.g. \cite{Cesa-Bianchi_2006} or Chapter 21 in \cite{Shalev-Shwartz_2014}). Most of those methods assume fixing a class $\MC$ of models or \enquote{hypotheses}, each of which defines a probability measure on sequences of observations (and also conditional probability measures), and produce a forecast $F^\MC$ which satisfies at least one of two kinds of guarantees:

\begin{itemize}
\item 
In the \emph{realizable} setting, the guarantee is that if the observations are sampled from some $\mu \in \MC$, then $F^\MC$ converges to an \enquote{ideal} forecast in some sense.
\item
In the \emph{unrealizable} setting, the guarantee is that for \emph{any} sequence of observations, $F^\MC$ is asymptotically as good as the forecast produced by any $\mu \in \MC$.
\end{itemize}

The realizable setting is often unrealistic, in particular because it requires that the environment under observation is simpler than the observer itself. Indeed, even though we avoid analyzing computational complexity in this work, it should be noted that the computational (e.g., time) complexity of a forecaster is always greater than the complexities of all $\mu \in \MC$. On the other hand, the unrealizable setting usually only provides guarantees for short-term forecasts (since otherwise the training data is insufficient). The latter is in contrast to, e.g., Bayesian inference where the time to learn the model depends on its prior probability, but once \enquote{learned} (i.e., once $F^\MC$ has converged to a given total variation distance from the true probability measure), arbitrarily long-term forecasts become reliable.

The spirit of our approach is that the environment might be very complex, but at the same time it might possess some simple features, and it is these features that the forecast must capture. For example, if we consider a sequence of observations $\{o_n \in \{0,1\}\}_{n \in \Nats}$ s.t. $o_{2k+1}=o_{2k}$, then whatever is the behavior of the even observations $o_{2k}$, the property $o_{2k+1}=o_{2k}$ should asymptotically be assigned high probability by the forecast (this idea was discussed in \cite{Hutter_2009} as \enquote{open problem 4j}).

Formally, we introduce the notion of an \emph{incomplete model}, which is a convex set $M$ in the space $\PMO$ of probability measures on the space of sequences $\OO$. Such an incomplete model may be regarded as a hybrid of probabilistic and \emph{Knightian} uncertainty. We then consider a countable\footnote{This work only deals with the \emph{nonparametric} setting in which $\MC$ is discrete.} set $\MC$ of incomplete models. For any $M \in \MC$ and $\mu \in M$, our forecasts will converge to $M$ in an appropriate sense with $\mu$-probability 1. This convergence theorem can be regarded as an analogue for incomplete models of Bayesian merging of opinions (see \cite{Blackwell_1962}), and is our main result. Our setting can be considered to be in between realizable and unrealizable: it is \enquote{partially realizable} since we require the environment to conform to some $M \in \MC$, it is \enquote{partially unrealizable} since $\mu \in M$ can be chosen adversarially (we can even allow non-oblivious choice, i.e., dependence on the forecast itself).

The forecasting method we demonstrate is based on the principles introduced in \cite{Garrabrant_2016} (similar ideas appeared in \cite{Vovk_2005} in a somewhat simpler setting). The forecast may be regarded as the pricing of a certain combinatorial prediction market, with a countable set of gamblers making bets. The market pricing is then defined by the requirement that the aggregate of all gamblers doesn't make a net profit. The existence of such a pricing follows from the Kakutani-Glicksberg-Fan fixed point theorem and the Kuratowski-Rill-Nardzewski measurable selection theorem (the latter in order to show that the dependence on observation history can be made measurable). The fact that the aggregate of all gamblers cannot make a net profit implies that each individual gambler can only make a bounded profit.

The above method is fairly general, but for our purposes, we associate with each incomplete model $M \in \MC$ a set of gamblers that make bets which are guaranteed to be profitable assuming the true environment $\mu$ satisfies the incomplete model (i.e. $\mu \in M$). The existence of these gamblers follows from the Hahn-Banach separation theorem (where the incomplete model defines the convex set in question) and selection theorems used to ensure that the choice of separating functional depends in a sufficiently \enquote{regular} way on the market pricing. In order to show that the forecaster we described satisfies the desired convergence property, we use tools from martingale theory.

The structure of the paper is as follows. Section~\ref{sec:learning} defines the setting and states the main theorem. Appendix~\ref{sec:garrabrant} lays out the formalism of \enquote{combinatorial prediction markets,} and the central concept of a \emph{dominant forecaster}. Appendix~\ref{sec:prudent} introduces the concept of a \emph{prudent gambling strategy}, which is a technique for proving convergence theorems about dominant forecasters. Appendix~\ref{sec:construction} describes the gamblers associated with incomplete models and completes the proof of the main theorem. Appendix~\ref{sec:theorems} lists two theorems by other authors that we use. Appendix~\ref{sec:examples} proves the validity of some examples given in Section~\ref{sec:learning}.

\section{Results}
\label{sec:learning}

$\Nats$ will denote the set of natural numbers $\{0, 1, 2 \ldots\}$.

Let $\Sqn{\Ob_n}$ be a sequence of compact Polish spaces. $\Ob_n$ represents the space of possible observations at time $n$. Denote $\Ob^n := \prod_{m < n} \Ob_m$  and $\Ob^\omega:=\prod_{n \in \Nats} \Ob_n$. $\PO_n: \Ob^\omega \rightarrow \Ob^n$ is the projection mapping and $x_{:n}:=\PO_n\left(x\right)$. Given $y \in \Ob^n$, $y\OO := \left(\PO_n\right)^{-1}\left(y\right)$ is a closed subspace of $\OO$. Given $A\subseteq\Ob^n$, we denote $A\Ob^\omega:=\left(\PO_n\right)^{-1}(A)$. $\Ob^n$ will be regarded as a topological space using the product topology and as a measurable space with the $\sigma$-algebra of \emph{universally measurable} sets\footnote{The fact we use universally measurable sets rather than Borel sets will play an important role in the proof of Lemma~\ref{lmm:savvy}.}. For any measurable space $X$, $\PM\left(X\right)$ will denote the space of probability measures on $X$. When $X$ is a Polish space with the Borel $\sigma$-algebra, $\PM\left(X\right)$ will be regarded as a topological space using the weak topology and as a measurable space using the $\sigma$-algebra of Borel sets. Given $\mu \in \PM\left(X\right)$, we denote $\Sp \mu \subseteq X$ the support of $\mu$.

\begin{samepage}
\begin{definition}

A \emph{forecaster} $F$ is a family of measurable mappings

\[\Sqn{F_n: \Ob^n \rightarrow \PMO}\]

s.t. $\Sp {F_n\left(y\right)} \subseteq y\OO$.

\end{definition}
\end{samepage}

Given a forecaster $F$ and $y \in \Ob^n$, $F_n\left(y\right)$ represents the forecast corresponding to observation history $y$. The condition $\Sp {F_n\left(y\right)} \subseteq y\OO$ reflects the obvious requirement of consistency with past observations.

\begin{samepage}
\begin{example}

Consider any $\mu\in\PMO$. Then we can take $F_n(y)$ to be a regular conditional probability of $\mu$ (where the condition is $x \in y\OO$). This corresponds to Bayesian forecasting with prior $\mu$.

\end{example}
\end{samepage}

In order to formulate a claim about forecast convergence, we will need a metric on $\PMO$. Consider $\rho: \OO \times \OO \rightarrow \Reals$ a metrization of $\OO$. Let $\Lp\left(\OO,\rho\right)$ be the Banach space of $\rho$-Lipschitz functions on $\OO$, equipped with the norm

\begin{equation}
\N{f}_\rho:=\max_{x}{\A{f\left(x\right)}} + \sup_{x \ne y} \frac{\A{f\left(x\right)-f\left(y\right)}}{\rho\left(x,y\right)}
\end{equation}

$\PMO$ can be regarded as a compact subset of the dual space $\Lp\left(\OO,\rho\right)'$, yielding the following metrization of $\PMO$:

\begin{equation}
\DKR^\rho\left(\mu,\nu\right):=\sup_{\N{f}_\rho \leq 1}{\left(\E_\mu\left[f\right] - \E_\nu\left[f\right]\right)}
\end{equation}

We call $\DKR^\rho$ the \emph{Kantorovich-Rubinstein metric}\footnote{This is slightly different from the conventional definition but strongly equivalent (i.e. each metric is bounded by a constant multiple of the other). Other names used in the literature for the strongly equivalent metric are \enquote{1st Wasserstein metric} and \enquote{earth mover's distance.}}.

Fix any $x \in \OO$. It is easy to see that

\begin{equation}
\lim_{n \rightarrow \infty} \max_{x' \in x_{:n}\OO} \rho\left(x', x\right) = 0
\end{equation}

Denote $\delta_x \in \PMO$ the unique probability measure s.t. $\delta_x\left(\{x\}\right)=1$. It follows that for any sequence $\Sqn{\mu_n \in \PMO}$ s.t. $\Sp{\mu_n} \subseteq x_{:n}\OO$

\begin{equation}
\lim_{n \rightarrow \infty} \DKR^\rho\left(\mu_n, \delta_x\right) = 0
\end{equation}

Therefore, formulating a non-vacuous convergence theorem requires \enquote{renormalizing} $\DKR$ for each $n \in \Nats$. To this end, we consider a \emph{sequence} of metrizations of $\OO$: $\Sqn{\rho_n: \OO \times \OO \rightarrow \Reals}$. We denote $\DKR^n:=\DKR^{\rho_n}$. In the special case when $\Ob_n=\Ob$ for all $n \in \Nats$ and some fixed space $\Ob$, there is a natural class of metrizations with the property that $\rho_n(yx,yx')=\rho(x,x')$ for some fixed metric $\rho$ on $\OO$ and any $y \in \Ob^n$, $x,x' \in \OO$\footnote{In this special case, the need for renormalization is a side effect of our choice of notation where the forecaster produces a probability measure over the entire sequence rather than over future observations only. Also, in this case the choice of $\rho$ determines everything, since we only use Kantorovich-Rubinstein distance between measures with support in $y\OO$ for the same $y \in \Ob^n$.}. However, in general we can choose any sequence.

Another ingredient we will need is a notion of regular conditional probability for incomplete models. Given measurable spaces $X$ and $Y$, we will use the notation $K: X \M Y$ to denote a Markov kernel with source $X$ and target $Y$. Given $x \in X$, we will use the notation $K\left(x\right) \in \PM\left(Y\right)$. Given $\mu \in \PM\left(X\right)$, $\mu \ltimes K \in \PM\left(X \times Y\right)$ denotes the semidirect product of $\mu$ and $K$, that is, the unique measure satisfying

\begin{equation}
(\mu \ltimes K)(A \times B) = \int_A K(x)(B)\, \mu(dx)
\end{equation} 

Given $\pi:X\rightarrow Y$ measurable, $\pi_*\mu\in\PM(Y)$ denotes the pushforward of $\mu$ by $\pi$. $K_* \mu \in \PM\left(Y\right)$ denotes the pushforward of $\mu$ by $K$ (i.e. the pushforward of $\mu \ltimes K$ by the projection to $Y$). Of course $\pi$ can be regarded as a \enquote{deterministic} Markov kernel, so the notation $\pi_*\mu$ is truly a special case of $K_*\mu$. When $X,Y$ are Polish and $\pi: X \rightarrow Y$ is Borel measurable, $\mu \mid \pi: Y \M X$ is defined to be s.t. $\pi_* \mu \ltimes \left(\mu \mid \pi\right)$ is supported on the graph of $\pi$ and $\left(\mu \mid \pi\right)_* \pi_* \mu = \mu$ (i.e. $\mu \mid \pi$ is a regular conditional probability). By the disintegration theorem, $\mu \mid \pi$ exists and is defined up to coincidence $\pi_* \mu$-almost everywhere.

\begin{samepage}
\begin{definition}
\label{def:update_incomplete}

Let $X,Y$ be compact Polish spaces, $\pi: X \rightarrow Y$ continuous and $M \subseteq \PM(X)$. We say that $N: Y \rightarrow 2^{\PM(X)}$ is a \emph{regular upper bound for $M \mid \pi$} when 
\begin{enumerate}[i.]
\item\label{con:def__update_incomplete__clos} The set $\Gr{N}:=\{(y,\mu)\in Y\times\PM(X) \mid \mu \in N(y)\}$ is closed.
\item\label{con:def__update_incomplete__conv} For every $y \in \Ob^n$, $N(y)$ is convex.
\item\label{con:def__update_incomplete__cond} For every $\mu \in M$ and $\pi_*\mu$-almost every $y \in Y$, $(\mu \mid \pi)(y) \in N(y)$.
\end{enumerate}

\end{definition}
\end{samepage}

The reason we call that \enquote{regular upper bound for $M\mid\pi$} rather than just \enquote{$M\mid\pi$} is that, roughly, our conditions guarantee $N$ is \enquote{big enough} but don't guarantee it is not \enquote{too big}. For example, setting $N(y):=\PM(X)$ would trivially satisfy all conditions. Also note that, although technically we haven't assumed $M$ is convex, we might as well have assumed it: it is not hard to see that, if $N$ is a regular upper bound for $M \mid \pi$ and $M'$ is the convex hull of $M$, then $N$ is a regular upper bound for $M' \mid \pi$. The same remark applies to Theorem~\ref{thm:main} below.

We give a few examples for Definition~\ref{def:update_incomplete}. The proofs that these examples are valid are in Appendix~\ref{sec:examples}.

\begin{samepage}
\begin{example}
\label{exm:update_incomplete_finite}

Suppose that $M$ is convex and $Y$ is a finite set (in particular, this example is applicable when the $\Ob_n$ are finite sets, $X=\OO$, $Y=\Ob^n$ and $\pi=\PO_n$). Then, there is a unique $N: Y \rightarrow 2^{\PM(X)}$ which is a \emph{minimal} (w.r.t. set inclusion) regular upper bound for $M \mid \pi$ and we have

\begin{equation}
\label{eqn:exm__update_incomplete_finite}
N(y) = \overline{\{\left(\mu \mid \pi^{-1}(y)\right) \mid \mu \in M,\, \mu\left(\pi^{-1}(y)\right) > 0\}}
\end{equation}

Here, $\mu \mid \pi^{-1}(y)$ is the conditional probability measure and the overline stands for topological closure.

\end{example}
\end{samepage}
\begin{samepage}
\begin{example}
\label{exm:update_incomplete_kernels}

Consider some $I \subseteq \Nats$ and suppose that for each $n \in I$, we are given $K_n: \Ob^n \M \Ob_n$. Assume that each $K_n$ is \emph{Feller continuous}, that is, that the induced mapping $\Ob^n \rightarrow \PM\left(\Ob_n\right)$ is continuous. Define $M^K$ by

\begin{equation}
M^K:=\{\mu \in \PMO \mid \forall n \in I: \PO_{n+1*}\mu = \PO_{n*}\mu \ltimes K_n\}
\end{equation}

For each $n \in \Nats$, define $M^K_n: \Ob^n \rightarrow 2^{\PMO}$ by

\begin{equation}
M^K_n(y) = \{\mu \in \PMO \mid \Sp{\mu} \subseteq y\OO,\, \forall m \in I: m \geq n \implies \PO_{m+1*}\mu = \PO_{m*}\mu \ltimes K_m\}
\end{equation}

Then, $M^K_n$ is a regular upper bound for $M^K \mid \PO_n$.

\end{example}
\end{samepage}

\begin{samepage}
\begin{example}
\label{exm:update_incomplete_euclid}

Suppose that $Y$ is a compact subset of $\Reals^{d}$ for some $d \in \Nats$ (in particular, this example is applicable when each $\Ob_n$ is a compact subset of $\Reals^{d_n}$, $X=\OO$, $Y=\Ob^n$ and $\pi=\PO_n$; in that case $d=\sum_{m < n} d_n$). We regard $Y$ as a metric space using the Euclidean metric. For any $y \in Y$ and $r > 0$, $\B_r\left(y\right)$ will denote the open ball of radius $r$ with center at $y$. 

Fix any $M \subseteq \PM(X)$. Then, there is a unique $N: Y \rightarrow 2^{\PM(X)}$ which is minimal among mappings which both satisfy conditions \ref{con:def__update_incomplete__clos} and \ref{con:def__update_incomplete__conv} of Definition~\ref{def:update_incomplete} and are s.t. for any $y \in Y$, $\nu \in \PM(X)$ and $\mu \in M$, if $y \in \Sp \pi_*\mu$ and $\nu = \lim_{r \rightarrow 0}{\left(\mu \mid \pi^{-1}\left(\B_r\left(y\right)\right)\right)}$, then $\nu \in N(y)$ (the limit is defined using the the weak topology). Moreover, $N$ is a regular upper bound for $M \mid \pi$.

\end{example}
\end{samepage}

We are now ready to formulate the main result.

Given a metric space $X$ with metric $\rho: X \times X \rightarrow \Reals$, $x \in X$ and $A \subseteq X$, we will use the notation

\begin{equation}
\rho\left(x,A\right):=\inf_{y \in A} \rho\left(x,y\right)
\end{equation}

\begin{theorem}
\label{thm:main}

Fix any $\MC \subseteq 2^{\PMO}$ countable. For every $M \in \MC$ and $n \in \Nats$, let $M_n$ be a regular upper bound for $M \mid \PO_n$. Then, there exists a forecaster $F^\MC$ s.t. for any $M \in \MC$, $\mu \in M$ and $\mu$-almost any $x \in \OO$

\begin{equation}
\label{eqn:thm_main}
\lim_{n \rightarrow \infty} \DKR^n\left(F^\MC_n\left(x_{:n}\right),M_n\left(x_{:n}\right)\right) = 0
\end{equation}

\end{theorem}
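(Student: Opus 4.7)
The plan is to follow the outline of Sections~\ref{sec:garrabrant}--\ref{sec:construction}. Section~\ref{sec:garrabrant} will furnish a \emph{dominant forecaster} relative to any countable family of suitably regular gambling strategies: a forecaster $F^\MC$ against which no single gambler can accumulate unbounded profit almost surely. Granting this, the task reduces to exhibiting, for each $M \in \MC$ and each threshold $\epsilon$ drawn from a fixed countable dense set $\{\epsilon_k\} \subset (0,\infty)$, a gambler whose strategy extracts arbitrarily large expected profit from $F^\MC$ whenever $\DKR^n(F^\MC_n,M_n) > \epsilon$ infinitely often along $\mu$-typical histories. The forecaster of the theorem is then the dominant forecaster with respect to the union over $(M,\epsilon_k)$ of these gamblers.

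The $(M,\epsilon)$-gambler is specified as follows at stage $n$ with history $y \in \Ob^n$ and current forecast $\mu^\ast := F^\MC_n(y)$. If $\DKR^n(\mu^\ast, M_n(y)) > \epsilon$, then because $M_n(y)$ is convex by clause~\ref{con:def__update_incomplete__conv} and its graph is closed by clause~\ref{con:def__update_incomplete__clos}, the Hahn-Banach theorem applied in the dual pairing between $\Lp(\OO,\rho_n)$ and $\PMO$ supplies an $f$ with $\N{f}_{\rho_n} \leq 1$ such that $\E_{\mu^\ast}[f] - \sup_{\nu \in M_n(y)} \E_\nu[f] \geq \epsilon/2$; otherwise the gambler abstains. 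The gambler then sells one unit of the $f$-contract at the market price $\E_{\mu^\ast}[f]$, securing a payoff of $\E_{\mu^\ast}[f] - f(x)$. Measurable dependence of $f$ on $(y,\mu^\ast)$ is secured by Kuratowski-Ryll-Nardzewski applied to the closed-graph multifunction sending $(y,\mu^\ast)$ to the set of admissible unit-norm separating $f$, which is precisely where the paper's use of universally measurable sets enters.

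To derive the convergence, fix $M \in \MC$ and $\mu \in M$. By clause~\ref{con:def__update_incomplete__cond}, $(\mu \mid \PO_n)(y) \in M_n(y)$ for $\PO_{n*}\mu$-almost every $y$, so the conditional $\mu$-expectation of the $(M,\epsilon)$-gambler's single-stage payoff given the past is at least $\epsilon/2$ on $\{\DKR^n(F^\MC_n,M_n) > \epsilon\}$ and zero otherwise. Because $\N{f}_{\rho_n} \leq 1$ bounds payoff increments uniformly, the difference between cumulative profit and its conditional-expectation compensator is a martingale with bounded increments; by a martingale strong law (formalized, I expect, as the \enquote{prudent gambling} framework of Section~\ref{sec:prudent}), on the event that $\DKR^n > \epsilon$ infinitely often the cumulative profit diverges $\mu$-a.s., contradicting dominance. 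Hence that event has $\mu$-measure zero, and taking a countable intersection over $k$ yields $\DKR^n(F^\MC_n(x_{:n}),M_n(x_{:n})) \to 0$ for $\mu$-almost every $x$.

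The chief obstacle is the regularity of the Hahn-Banach selection: at each $y$ one can separate $\mu^\ast$ from the closed convex set $M_n(y)$ by a unit-norm Lipschitz function, but for the resulting strategy to fit into the prediction-market formalism the separating $f$ must depend measurably on $(y,\mu^\ast)$ rather than merely exist pointwise. The closed-graph hypothesis on $M_n$ and the choice of the universally measurable $\sigma$-algebra on $\Ob^n$ are exactly what allow a measurable selection theorem to deliver this dependence; establishing it rigorously, and then weaving the resulting $(M,\epsilon_k)$-strategies into the Kakutani-Glicksberg-Fan construction of $F^\MC$ so that gambler and market are mutually consistent, is the technical crux.
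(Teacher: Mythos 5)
Your high-level architecture matches the paper's: construct gamblers keyed to pairs $(M,\epsilon_k)$, let Section~\ref{sec:garrabrant} furnish a dominant forecaster, and invoke a martingale argument to force $\DKR^n \to 0$. But there is a genuine gap in how you propose to build the individual gamblers, and it is not merely a technical nuisance deferred to a selection theorem.

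The framework of Section~\ref{sec:garrabrant} requires a gambler's bet to lie in $\GMO = C(\PMO \times \OO)$; in particular, the bet must depend \emph{continuously} on the forecast $\mu^\ast$, because the Kakutani-Glicksberg-Fan fixed-point step in Lemma~\ref{lmm:unwinnable} needs the map $\mu \mapsto \Argmax_x \beta(\mu,x)$ to have closed graph. Your proposed gambler has a hard switch: sell the $f$-contract when $\DKR^n(\mu^\ast, M_n(y)) > \epsilon$, abstain otherwise. That step function in $\mu^\ast$ is not in $\GMO$, and Kuratowski-Ryll-Nardzewski cannot rescue it — KRN gives only a Borel (or universally measurable) selection, which is exactly what you want in the $y$ variable but is useless for the $\mu^\ast$ variable, where genuine topological continuity is non-negotiable. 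You have conflated the two selection problems; the paper treats them with two different tools. For continuity in $\mu^\ast$ it uses a Michael-type continuous selection (the Yannelis-Prabhakar theorem, Theorem~\ref{thm:selection}); only afterward, for measurability in $y$ of the resulting family $y \mapsto S^\epsilon(y) \in \GMO$, does it invoke KRN (Lemma~\ref{lmm:savvy}).

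Continuous selection alone still would not fix your design, because the abstention threshold is itself discontinuous: as $\DKR^n(\mu^\ast,M_n(y)) \downarrow \epsilon$ the bet jumps from size-one to zero. The paper's notion of a \emph{savvy} bet is engineered precisely to smooth this out: one demands $\N{\beta(\mu^\ast)} \leq \bigl(\DKR(\mu^\ast,M) - r_0\bigr)_+$, so the bet magnitude tapers continuously to zero as the forecast approaches the $r_0$-neighborhood of $M$, and correspondingly the guaranteed expected edge is $\tfrac{1}{2}(r_\mu - r_0) r_\mu$ rather than a flat $\epsilon/2$. Corollary~\ref{crl:savvy} then patches the continuous selection on $\{r_\mu > r_0\}$ with the zero bet on its complement and verifies continuity at the boundary. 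The ratio of expected gain to bet magnitude is still bounded below (by $\tfrac{1}{2}\epsilon$ when the bet is live), which is what feeds into $\mu^\ast$-prudence (Proposition~\ref{crl:savvy_is_prudent}) and hence Theorem~\ref{thm:prudent}. Without this tapering, your strategy fails to be an admissible gambler at all, and the dominant-forecaster existence theorem has nothing to apply to.
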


That is, the forecaster $F^\MC$ ensures that, for any incomplete model $M$ satisfied by the true environment $\mu$, the forecast will (almost surely) converge to the model (as opposed to complete models, there may be several incomplete models satisfied by the same environment).

Note that $F^\MC$ as above exists for any choice of $\Sqn{\rho_n}$ but it depends on the choice. Informally, we can think of this choice as determining the duration of the future time period over which we need our forecast to be reliable. It also might depend on the choice of regular upper bounds (of course, if there are \emph{minimal} regular upper bounds, these yield a forecaster than works for any other regular upper bounds).

The example from the Introduction section can now be realized as follows. Take $\Ob_n=\Ob=\{0,1\}$ and let $M^K \subseteq \PMO$ be as in Example~\ref{exm:update_incomplete_kernels}, where $I = 2\Nats+1$ and $\Pr_{o\sim K_{2n+1}(y)}\left[o=y_{2n}\right]=1$. Define $\rho_n$  by

\begin{equation}
\rho_n(x,x'):=\begin{cases} \max\{2^{n-m} \mid x_{m} \ne x'_{m}\} \text{ if } x \ne x' \\ 0 \text{ if } x=x'\end{cases}
\end{equation}

If $M^K \in \MC$, and $x\in\OO$ is s.t. $x_{2n+1}=x_{2n}$, then for any $k \in \Nats$, the forecaster $F^\MC$ of Theorem~\ref{thm:main} satisfies

\begin{align}
\lim_{n\rightarrow\infty} \Pr_{x' \sim F_{2n}\left(x_{:2n}\right)}&\left[x'_{2(n+k)+1}=x'_{2(n+k)}\right] = 1\\
\lim_{n\rightarrow\infty} \Pr_{x' \sim F_{2n+1}\left(x_{:2n+1}\right)}&\left[x'_{2(n+k)+1}=x'_{2(n+k)}\right] = 1
\end{align}

If we only cared about predicting the next observation (as opposed to producing a probability measure over the entire sequence), this example (and any other instance of Example~\ref{exm:update_incomplete_kernels}, at least in the case $\Ob_n=\Ob$ finite) would be a special case of the \enquote{sleeping experts} setting in online learning (see \cite{Freund_1997}). However, our formalism is much more general, even for predicting the next observation. For instance, we can consider $\Ob=\{0,1,2,3\}$ and have an incomplete model specifying that the next observation is an odd number (thus, the \enquote{expert} specializes by predicting specific \emph{properties} of the observation rather than only making predictions at specific times). As another example, we can consider $\Ob=\{0,1,2\}$ and an incomplete model specifying that the probability distribution $\left(p_0,p_1,p_2\right)$ of each observation satisfies $p_0,p_1,p_2 \geq 0.1$. This model would capture any environment that can be regarded as a process observed through a noisy sensor that has a probability of 0.3 to output a uniformly random observation instead of the real state of the process.

The rest of the paper is devoted to proving Theorem~\ref{thm:main}.

\appendix

\section{Appendix: Dominant Forecasters}
\label{sec:garrabrant}

In this section we explain our generalization of the methods introduced in \cite{Garrabrant_2016} under the name \enquote{logical inductors.} The main differences between our formalism and that of \cite{Garrabrant_2016} are

\begin{itemize}
\item 
We are interested in sequence forecasting rather than formal logic.
\item
We consider probability measures on certain Polish spaces, rather than probability assignment functions on finite sets.
\item
In particular, no special treatment of expected values is required.
\item
The observations are stochastic rather than deterministic.
\end{itemize}

Our terminology also differs from \cite{Garrabrant_2016}: their \enquote{market} is our \enquote{forecaster}, their \enquote{trader} is our \enquote{gambler}.

Similar ideas were investigated in \cite{Vovk_2005} under the name \enquote{defensive forecasting}. However in \cite{Vovk_2005}, the forecast is a single probability of an impending observation in $\{0,1\}$ rather than a probability measure in the space of infinite sequences of observations taking values in arbitrary compact Polish spaces.

In any case, our exposition assumes no prior knowledge about logical inductors or defensive forecasting.

The idea is to consider a collection of gamblers making bets against the forecaster. If a gambler with finite budget cannot make an infinite profit, the gambler is said to be \enquote{dominated} by the forecaster. We then prove that for any countable collection of gamblers, there is a forecaster that dominates all of them.

Given a compact Polish space $X$, $C\left(X\right)$ will denote the Banach space of continuous functions with uniform norm. We will also use the shorthand notation $\Gm\left(X\right):=C\left(\PM\left(X\right) \times X\right)$. Equivalently, $\Gm\left(X\right)$ can be regarded to be the space of continuous functions from $\PM\left(X\right)$ to $C\left(X\right)$, and we will curry implicitly in our notation.

We regard $\GMO$ as the space of \emph{bets} that can be made against a forecaster. Given $\beta \in \GMO$, a forecast $\mu \in \PMO$ and observation sequence $x \in \OO$, the payoff of the bet $\beta$ is

\begin{equation}
\V{\beta}\left(\mu,x\right):=\beta\left(\mu,x\right) - \E_{x' \sim \mu}\left[\beta\left(\mu,x'\right)\right]
\end{equation}

This definition ensures that $\E_{x\sim\mu}\left[\V{\beta}(\mu,x)\right] = 0$, so that the bet is fair from the perspective of the forecaster.

Note that this defines a bounded linear operator $\V: \Gm\left(X\right) \rightarrow \Gm\left(X\right)$.

\begin{samepage}
\begin{example}

Suppose that the $\Ob_n$ are finite sets with discrete topology. Fix some $n\in\Nats$ and $A \subseteq \Ob^n$. Define $\beta^A \in \GMO$ by

\begin{equation}
\beta^{A}(\mu,x):=\begin{cases} 1 \text{ if } x_{:n} \in A \\ 0 \text{ if } x_{:n} \not\in A \end{cases}
\end{equation}

$\beta^y$ represents betting that the sequence of observations will begin with an element of $A$. The payoff $\V{\beta^A}(\mu,x)$ is $1 - \mu\left(A\Ob^\omega\right)$ when $x_{:n} \in A$ and $-\mu\left(A\Ob^\omega\right)$ when $x_{:n} \not\in A$.

\end{example}
\end{samepage}

$C\left(X\right)$ and $\Gm\left(X\right)$ will also be regarded as a measurable spaces, using the $\sigma$-algebra of Borel sets. We remind that the $\sigma$-algebra on $\Ob^n$ is the algebra of \emph{universally measurable} sets.

\begin{definition}

A \emph{gambler} is a family of measurable mappings

\[\Sqn{G_n : \Ob^n \times \PMO^n \rightarrow \Gm\left(\OO\right)}\]

\end{definition}

A gambler is considered to observe a forecaster and bet against it. Given $y \in \Ob^n$ and $\BM \in \PMO^n$, $G_n\left(y,\BM\right)=\beta$ means that, if $y$ are the first $n$ observations and $\BM$ are the first $n$ forecasts (made after observing $n-1$ out of the $n$ observations), the gambler will make bet $\beta$ (which is in itself a function of the forecast the forecaster makes after the $n$-th observation).

\begin{samepage}
\begin{example}

Suppose that the $\Ob_n$ are finite sets with discrete topology. Consider a family of mappings $\left\{f_n: \Ob^n \rightarrow \Ob_n\right\}_{n\in\Nats}$. Define the gambler $G^f$ by

\begin{equation}
G^f_n(y,\BM):=\beta^{\{yf(y)\}}
\end{equation}

That is, $G^f$ always bets on the next observation being $f(y)$.

\end{example}
\end{samepage}

We now introduce some notation regarding the interaction of gamblers and forecasters. 

Given a gambler $G$ and a forecaster $F$, we define the measurable mappings $G^F_n: \Ob^n \rightarrow \GMO$ by

\begin{equation}
\label{eqn:GF}
G^F_n\left(y\right):=G_n\left(y,F_0,F_1\left(y_{:1}\right) \ldots F_{n-1}\left(y_{:n-1}\right)\right)
\end{equation}

Here, $y_{:m}$ denotes the projection of $y$ to $\Ob^m$.

We define the measurable mappings $\overline{\V G}^F_n: \Ob^n \rightarrow \CO$ and $\SV G^F_n: \Ob^{n-1} \rightarrow \CO$ by

\begin{equation}
\overline{\V G}^F_n\left(y\right):=\left(\V G^F_n\left(y\right)\right)\left(F_n\left(y\right)\right)
\end{equation}

\begin{equation}
\SV G^F_n\left(y\right) := \sum_{m < n} \overline{\V G}^F_m\left(y_{:m}\right)
\end{equation}

In the definition of $\SV G^F_0$, $\Ob^{-1}$ is considered to be equal to $\Ob^0$ (i.e. the one point space).

That is, $\overline{\V G}^F_n\left(y\right)(x)$ is the payoff of the $n$-th gamble of gambler $G$ playing against forecaster $F$, assuming initial history $y$ and full history $x$. $\SV G^F_n\left(y\right)$ is the total payoff of the first $n$ gambles.

We define the measurable mappings $\SVM G^F_n, \SVX G^F_n: \Ob^{n-1} \rightarrow \Reals$ by

\begin{equation}
\label{eqn:svmf}
\SVM G^F_n\left(y\right):=\min_{y\OO} \SV G^F_n(y)
\end{equation}

\begin{equation}
\label{eqn:svxf}
\SVX G^F_n\left(y\right):=\max_{y\OO} \SV G^F_n(y)
\end{equation}

Thus, $\SVM G^F_n\left(y\right)$ is the minimal possible payoff of the first $n$ gambles and $\SVX G^F_n\left(y\right)$ is the maximal possible payoff. $y$ appears twice on the right hand side of equations (\ref{eqn:svmf},\ref{eqn:svxf}) because, first, it defines the space of histories over which we minimize/maximize and, second, it defined the input to the gambler and forecaster.

We are now ready to state the formally state the definition of \enquote{dominance} alluded to in the beginning of the section.

\begin{definition}
\label{def:dominance}

Consider a forecaster $F$ and a gambler $G$. $F$ is said to \emph{dominate} $G$ when for any $x \in \OO$, if condition~\ref{eqn:def_dominance__loss} holds then condition~\ref{eqn:def_dominance__gain} holds:

\begin{equation}
\label{eqn:def_dominance__loss}
\inf_{n \in \Nats} {\SVM G^F_{n}\left(x_{:n-1}\right)} > -\infty
\end{equation}

\begin{equation}
\label{eqn:def_dominance__gain}
\sup_{n \in \Nats} {\SVX G^F_{n}\left(x_{:n-1}\right)} < +\infty
\end{equation}

\end{definition}

That is, as long as the gambler doesn't go into infinite debt, it cannot make an infinite profit.

\begin{samepage}
\begin{example}

Suppose that the $\Ob_n$ are finite sets with discrete topology. Consider a family of mappings $\left\{f_n: \Ob^n \rightarrow \Ob_n\right\}_{n\in\Nats}$. Define the forecaster $F^f$ by $F_n(y) := \delta_{f_n(y)}$. Then, it is easy to see that $F^f$ dominates $G^f$.

\end{example}
\end{samepage}

The following is the main theorem of this section, which is our analogue of \enquote{Theorem 4.0.6} from \cite{Garrabrant_2016}.

\begin{theorem}
\label{thm:exist_dominant}

Let $\Sq{G^k}{k}$ be a family of gamblers. Then, there exists a forecaster $F^G$ s.t. for any $k \in \Nats$, $F$ dominates $G^k$.

\end{theorem}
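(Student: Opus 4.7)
The plan is to construct $F^G$ inductively in $n$, at each step aggregating all the gamblers into a single combined bet and then determining $F^G_n$ by a market-clearing fixed point. Given $F^G_0, F^G_1(y_{:1}), \ldots, F^G_{n-1}(y_{:n-1})$ and $y \in \Ob^n$, I form the \emph{aggregate bet}
\begin{equation*}
\beta_{n,y}(\mu, x) = \sum_{k \in \Nats} w^k_n(y_{:n-1})\, G^k_n(y, F^G_0, \ldots, F^G_{n-1}(y_{:n-1}))(\mu, x),
\end{equation*}
where the weights $w^k_n \geq 0$ depend measurably on history but not on $\mu$. I choose them so that (i) the series converges absolutely in $\Gm(\OO)$, and (ii) along any trajectory $x$ satisfying $\inf_n \SVM G^{k,F^G}_n(x_{:n-1}) > -\infty$, the weight $w^k_m(x_{:m-1})$ stays bounded away from zero. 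A natural realization sets $w^k_n = 2^{-k}\phi(\hat S^k_n)/\max(1, \|G^k_n(y, \BM)\|)$, with $\hat S^k_n := \inf_{m \leq n}\SVM G^{k,F^G}_m(y_{:m-1})$ the running worst-case loss of gambler $k$ and $\phi : \Reals \to (0,1]$ continuous nondecreasing with $\phi(t) = 1$ for $t \geq 0$ and $|t|\phi(t)$ bounded.

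Since $\beta_{n,y} \in \Gm(\OO) = C(\PMO \times \OO)$, the correspondence $\Phi_{n,y} : \PM(y\OO) \to 2^{\PM(y\OO)}$ defined by $\Phi_{n,y}(\mu) = \{\nu \in \PM(y\OO) : \Sp \nu \subseteq \Argmax{x \in y\OO} \beta_{n,y}(\mu, x)\}$ has non-empty, closed, convex values and closed graph, the latter by uniform continuity of $\beta_{n,y}$ on the compact product $\PM(y\OO) \times y\OO$. Since $\PM(y\OO)$ is compact convex in the weak topology, the Kakutani-Glicksberg-Fan fixed point theorem yields $\mu^*_y \in \Phi_{n,y}(\mu^*_y)$, which automatically satisfies $\V \beta_{n,y}(\mu^*_y, \cdot) \leq 0$ on $y\OO$. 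To assemble these fibre-wise fixed points into a measurable forecaster $F^G_n : \Ob^n \to \PMO$, I invoke the Kuratowski-Ryll-Nardzewski measurable selection theorem on the set-valued map $y \mapsto \{\mu \in \PM(y\OO) : \V \beta_{n,y}(\mu, \cdot) \leq 0 \text{ on } y\OO\}$; the use of the universal $\sigma$-algebra on $\Ob^n$ is crucial here, as it enables a selection where Borel measurability may fail.

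Dominance of each $G^k$ then follows by summing the stepwise market-clearing inequalities: for every $n$ and $x' \in x_{:n-1}\OO$,
\begin{equation*}
0 \geq \SV \beta^{F^G}_n(x_{:n-1})(x') = \sum_{m<n}\sum_j w^j_m(x_{:m-1})\,\overline{\V G^j}^{F^G}_m(x_{:m})(x').
\end{equation*}
Fix $k$ and a trajectory $x$ with $\inf_n \SVM G^{k,F^G}_n(x_{:n-1}) \geq -M$. Isolating gambler $k$'s contribution and using that each of the remaining gamblers' weighted cumulative contributions is bounded below (a summation-by-parts argument exploiting the monotonicity in $m$ of $\hat S^j_m$ and the boundedness of $|t|\phi(t)$), the right-hand side is bounded above by an absolute constant. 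Combined with the uniform positive lower bound on $w^k_m(x_{:m-1})$ afforded by the hypothesis on $x$, this yields $\sup_n \SVX G^{k,F^G}_n(x_{:n-1}) < \infty$. The main obstacle I expect is precisely this calibration of the weights: they must decay fast enough in $k$ for the aggregate to lie in $\Gm(\OO)$ and for every other gambler's weighted cumulative contribution to stay bounded, yet remain positively bounded below on every trajectory satisfying the dominance hypothesis, all while being measurable functions of past history alone. Verifying the measurability hypotheses needed for the selection step is a secondary but essential technical burden, and is exactly why universal measurability enters the definition of a forecaster.
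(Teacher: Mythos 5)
Your high-level strategy---aggregate all gamblers into a single bet with history-dependent weights, find a forecast making that bet unwinnable via Kakutani-Glicksberg-Fan, then make the choice measurable via Kuratowski-Ryll-Nardzewski---is exactly the architecture the paper uses (its Lemma~\ref{lmm:unwinnable}, Lemma~\ref{lmm:measurable_unwinnable} and Corollary~\ref{crl:dominate_one}, applied once to a combined gambler). The genuine difference, and the place where your proof has a gap, is in the \emph{calibration of the weights} used to form the aggregate.

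The paper does not use a soft down-weighting function $\phi$. Instead, it first passes each $G^k$ through a budget-capping operator $\Bd_b$: this rescales the current bet by $\Nr_b G_n$ just before the cumulative payoff would cross $-b$, and sets the bet to $0$ once it has crossed. This yields the \emph{hard} guarantee $\SVM \Bd_b G_n \geq -b$ for every $n$, every history and every forecast sequence (Proposition~\ref{prp:b_stays_in_budget}), together with the key property that on any trajectory where $G^k$ never dips below $-b$, $\Bd_b G^k$ coincides with $G^k$ (Proposition~\ref{prp:b_no_effect}). Averaging over $b$ with a summable sequence $\zeta$ gives $\Bd_\zeta G^k$ with $\SVM \Bd_\zeta G^k_n \geq -b_\zeta$ uniformly, so these modified gamblers can then be combined with \emph{fixed} weights $\xi(k)$ in Proposition~\ref{prp:combining_gamblers}; the isolation of gambler $k$'s contribution there works precisely because the other gamblers' weighted cumulative payoffs are bounded below by the constant $-b_\xi$. (The paper also avoids your convergence-of-series concern by defining $G^\xi_n := \sum_{k \leq n}\xi(k) G^k_n$, so the aggregate at each stage is a finite sum.)

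Your soft-weighting scheme does not deliver a uniform lower bound on each non-$k$ gambler's weighted cumulative contribution, which is the load-bearing step. Two concrete problems. First, your hypothesis ``$|t|\phi(t)$ bounded'' is too weak: take $\phi(t) = (1-t)^{-1}$ for $t < 0$ and a gambler $j$ that loses exactly $1$ at each step with bet size $1$; then $\hat S^j_m = -m$, $w^j_m \approx 2^{-j}/(1+m)$, and the weighted cumulative payoff $\sum_{m<n} w^j_m \cdot (-1) \approx -2^{-j}\log n \to -\infty$. Second, even if $\phi$ decays fast enough, your normalization factor $c^j_m = \max(1,\|G^j_m\|)$ severs the link between $\hat S^j_m$ (defined from the \emph{raw} cumulative payoff $\SVM G^{j,F^G}_m$) and the partial sums $\sum_{m'<m}\Delta^j_{m'}/c^j_{m'}$ that actually appear after Abel summation, so ``monotonicity of $\hat S^j_m$'' does not control the boundary and difference terms. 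A similar issue reappears on the dominance side: from an upper bound on $\sum_m w^k_m \Delta^k_m$ you cannot recover an upper bound on $\sum_m \Delta^k_m$ when $c^k_m$ grows. The paper's $\Nr_b$ factor is tailored to do both the normalization and the budget management simultaneously, which is why the argument closes there; your $\phi / c^j_m$ factorization does not have the right coupling. You would need to redesign the weights so that the lower bound on each gambler's weighted cumulative payoff is a trajectory-independent constant and so that the weight remains comparable to a constant on the hypothesis trajectories---at which point you have essentially reinvented $\Bd_\zeta$.
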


The rest of the section is devoted to proving Theorem~\ref{thm:exist_dominant}. The proof will consist of two steps. First we show that for any \emph{single} gambler, there is a dominant forecaster. Then, given a countable set $\GC$ of gamblers, we construct a single gambler s.t. dominating this gambler implies dominating all gamblers in the set.

The following lemma shows that for any bet, there is a forecast which makes the bet unwinnable.

\begin{lemma}
\label{lmm:unwinnable}

Consider $X$ a compact Polish space and $\beta \in \Gm\left(X\right)$. Then, there exists $\mu \in \PM\left(X\right)$ s.t.

\begin{equation}
\Sp \mu \subseteq \Argmax{x\in X} \beta\left(\mu,x\right)
\end{equation}

\end{lemma}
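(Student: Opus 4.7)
The plan is to set this up as a fixed-point problem and apply the Kakutani–Glicksberg–Fan theorem, which the paper has already signaled as the main analytical tool. Define a set-valued map $\Phi: \PM(X) \to 2^{\PM(X)}$ by
\begin{equation}
\Phi(\mu) := \bigl\{ \nu \in \PM(X) \;\big|\; \Sp \nu \subseteq \Argmax{x \in X}\beta(\mu,x) \bigr\}.
\end{equation}
A fixed point $\mu \in \Phi(\mu)$ is exactly what the lemma demands. The ambient space $\PM(X)$ is a nonempty compact convex subset of the locally convex space $C(X)'$ (compactness is Banach–Alaoglu plus weak-$*$ closedness of probability measures; compactness of $X$ is used here).

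Next I would verify the pointwise hypotheses of Kakutani–Glicksberg–Fan. For each $\mu$, the function $\beta(\mu,\cdot) \in C(X)$ attains its maximum on the compact space $X$, so $\Argmax{x \in X}\beta(\mu,x)$ is nonempty (and closed); putting a Dirac mass on any maximizer shows $\Phi(\mu)$ is nonempty. Convexity of $\Phi(\mu)$ is immediate because any convex combination of measures supported in a common closed set $A$ is again supported in $A$.

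The one substantive step is upper hemicontinuity, i.e., closedness of the graph. I would rephrase $\nu \in \Phi(\mu)$ as the scalar identity
\begin{equation}
\int \beta(\mu,x)\,\nu(dx) \;=\; \max_{x \in X} \beta(\mu,x).
\end{equation}
Since $\PM(X) \times X$ is compact and $\beta$ is continuous, $\beta$ is uniformly continuous, so currying gives a continuous map $\PM(X) \to C(X)$, $\mu \mapsto \beta(\mu,\cdot)$, in particular the right-hand side $\mu \mapsto \max_x \beta(\mu,x)$ is continuous. For the left-hand side, if $\mu_n \to \mu$ and $\nu_n \to \nu$ weakly with $\nu_n \in \Phi(\mu_n)$, I would split
\begin{equation}
\int \beta(\mu_n,x)\,\nu_n(dx) = \int \bigl[\beta(\mu_n,x)-\beta(\mu,x)\bigr]\,\nu_n(dx) + \int \beta(\mu,x)\,\nu_n(dx);
\end{equation}
the first integral tends to $0$ by uniform convergence $\beta(\mu_n,\cdot)\to\beta(\mu,\cdot)$ in $C(X)$, and the second tends to $\int \beta(\mu,x)\,\nu(dx)$ by the definition of weak convergence. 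Passing to the limit in the defining identity yields $\nu \in \Phi(\mu)$, so the graph is closed.

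The main obstacle is exactly this closed-graph verification, because it mixes two different modes of convergence (uniform in $C(X)$ for $\beta(\mu_n,\cdot)$ and weak for $\nu_n$); the trick is the add-and-subtract above, which relies crucially on continuity of $\beta$ on the \emph{product} space and on compactness of $X$ to upgrade continuity to uniform continuity. Once upper hemicontinuity is in hand, Kakutani–Glicksberg–Fan delivers a $\mu \in \Phi(\mu)$, which is the desired measure.
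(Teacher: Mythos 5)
Your proof is correct and takes essentially the same approach as the paper's: both define the same set-valued map on $\PM(X)$ and apply the Kakutani--Glicksberg--Fan fixed point theorem. The paper leaves the closed-graph verification as ``easy to see,'' whereas you spell out the add-and-subtract argument; the underlying argument is identical.
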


\begin{proof}

Define ${K: \PM\left(X\right) \rightarrow 2^{\PM\left(X\right)}}$ as follows:

\[K\left(\mu\right):=\{\nu \in \PM\left(X\right) \mid \Sp{\nu} \subseteq \Argmax{}{\beta\left(\mu\right)}\}\]

For any ${\mu}$, ${K}\left(\mu\right)$ is obviously convex and non-empty. It is also easy to see that the graph of $K$ in $\PM\left(X\right) \times \PM\left(X\right)$ is closed. Applying the Kakutani-Glicksberg-Fan theorem, we conclude that $K$ has a fixed point, i.e. $\mu \in \PM\left(X\right)$ s.t. $\mu \in K\left(\mu\right)$.
\end{proof}

Note that $\mu$ as above indeed makes $\beta$ unwinnable since

$$\V{\beta}(\mu,x) = \beta\left(\mu,x\right) - \E_{x' \sim \mu}\left[\beta\left(\mu,x'\right)\right] = \beta(\mu,x)- \max_{x'\in X} \beta\left(\mu,x'\right) \leq 0$$

Next, we show that the probability measure of Lemma~\ref{lmm:unwinnable} can be made to depend measurably on past observations and the bet.

\begin{lemma}
\label{lmm:measurable_unwinnable}

Fix $Y_{1,2}$ compact Polish spaces and denote $X:=Y_1 \times Y_2$. Then, there exists a Borel measurable mapping $\alpha: Y_1 \times \Gm\left(X\right) \rightarrow \PM\left(X\right)$ s.t. for any $y \in Y_1$ and $\beta \in \Gm\left(X\right)$

\begin{equation}
\Sp \alpha\left(y,\beta\right) \subseteq \Argmax{y \times Y_2} \beta\left(\alpha\left(y,\beta\right)\right)
\end{equation}

\end{lemma}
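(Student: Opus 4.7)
The plan is to apply Lemma~\ref{lmm:unwinnable} pointwise in $(y,\beta)$ on the compact Polish subspace $\{y\}\times Y_2 \subseteq X$, and then extract a Borel selector via the Kuratowski-Ryll-Nardzewski theorem. Define the set-valued map $F:Y_1\times\Gm(X)\to 2^{\PM(X)}$ by
\[
F(y,\beta):=\bigl\{\mu\in\PM(X)\bigm|\Sp\mu\subseteq\{y\}\times Y_2,\ \Sp\mu\subseteq\Argmax{x\in\{y\}\times Y_2}\beta(\mu,x)\bigr\}.
\]
Applying Lemma~\ref{lmm:unwinnable} to the compact Polish space $\{y\}\times Y_2$ together with the restriction of $\beta$ gives $F(y,\beta)\ne\emptyset$; the values are plainly closed (and convex), so to finish it suffices to produce a Borel measurable selector of $F$.

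The key step is proving that the graph $\Gr F\subseteq Y_1\times\Gm(X)\times\PM(X)$ is closed. The first support condition is equivalent to $\pi_{1*}\mu=\delta_y$, where $\pi_1:X\to Y_1$ is the projection, a closed relation in $(y,\mu)$ by continuity of pushforward and of $y\mapsto\delta_y$ in the weak topology. Assuming that first condition, the second support condition is equivalent to
\[
\int_X\beta(\mu,x)\,\mu(dx)=\max_{x\in\{y\}\times Y_2}\beta(\mu,x),
\]
whose left-hand side is continuous in $(\beta,\mu)$ because the integrand is continuous and $\mu$ varies in the weak topology, and whose right-hand side is continuous in $(y,\beta,\mu)$ by Berge's maximum theorem applied to the continuous correspondence $y\mapsto\{y\}\times Y_2$ (continuous because $Y_2$ is compact) and the jointly continuous objective $(\beta,\mu,x)\mapsto\beta(\mu,x)$. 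Hence $\Gr F$ is closed.

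Because $\PM(X)$ is compact (as $X$ is compact Polish), projection along the $\PM(X)$-factor sends closed sets to closed sets; applied to $\Gr F\cap\bigl((Y_1\times\Gm(X))\times C\bigr)$ this shows that $F^{-1}(C):=\{(y,\beta)\mid F(y,\beta)\cap C\ne\emptyset\}$ is closed for every closed $C\subseteq\PM(X)$. Writing an arbitrary open set as a countable union of closed sets upgrades this to Borel measurability of $F^{-1}(U)$ for every open $U$, so $F$ meets the hypotheses of the Kuratowski-Ryll-Nardzewski selection theorem, which produces a Borel measurable $\alpha:Y_1\times\Gm(X)\to\PM(X)$ with $\alpha(y,\beta)\in F(y,\beta)$ for every $(y,\beta)$. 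The main obstacle is the closed-graph verification; once that is done, KRN delivers the conclusion mechanically.
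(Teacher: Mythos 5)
Your proof is correct and follows essentially the same route as the paper's: define the multivalued map with graph $Z$, verify the values are closed and non-empty (via Lemma~\ref{lmm:unwinnable}), verify $Z$ is closed, deduce weak measurability of the correspondence using compactness of $\PM(X)$, and invoke Kuratowski--Ryll-Nardzewski. The only cosmetic differences are in how closedness of $Z$ is established (you use $\pi_{1*}\mu=\delta_y$ and Berge's maximum theorem, the paper exhibits $Z_1$ and $Z_2$ as vanishing loci of explicit continuous functions) and in whether one projects closed sets or open sets first; both are standard and equivalent here.
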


\begin{proof}

Define ${Z_1, Z_2, Z \subseteq Y_1 \times \Gm\left(X\right) \times \PM\left(X\right)}$ by

$${Z_1:=\{\left(y,\beta,\mu\right) \in Y_1 \times \Gm\left(X\right) \times \PM\left(X\right) \mid \Sp \mu \subseteq y \times Y_2\}}$$

$${Z_2:=\{\left(y,\beta,\mu\right) \in Y_1 \times \Gm\left(X\right) \times \PM\left(X\right) \mid \E_\mu\left[\beta\left(\mu\right)\right] = \max_{y_2 \in Y_2} \beta\left(\mu,y,y_2\right)\}}$$

$${Z:=Z_1 \cap Z_2 =\{\left(y,\beta,\mu\right) \in Y_1 \times \Gm\left(X\right) \times \PM\left(X\right) \mid \Sp \mu \subseteq \Argmax{y \times Y_2} \beta\left(\mu\right)\}}$$

We can view ${Z}$ as the graph of a \emph{multivalued} mapping from ${Y_1 \times \Gm\left(X\right)}$ to ${\PM\left(X\right)}$ (i.e a mapping from ${Y_1 \times \Gm\left(X\right)}$ to $2^{\PM\left(X\right)}$). We will now show that this multivalued mapping has a \emph{selection}, i.e. a single-valued Borel measurable mapping whose graph is a subset of $Z$. Obviously, the selection is the desired ${\alpha}$.

Fix $\rho$ a metrization of $Y_1$. $Z_1$ is the vanishing locus of the continuous function $\E_{\left(y_1, y_2\right) \sim \mu}\left[\rho\left(y_1,y\right)\right]$. $Z_2$ is the vanishing locus of the continuous function $\E_\mu\left[\beta\left(\mu\right)\right] - \max_{y_2 \in Y_2} \beta\left(\mu,y,y_2\right)$. Therefore $Z_{1,2}$ are closed and so is $Z$. In particular, the fiber ${Z_{y\beta}}$ of ${Z}$ over any ${\left(y,\beta\right) \in Y_1 \times \Gm\left(X\right)}$ is also closed. 

For any ${y \in Y_1}$, ${\beta \in \Gm\left(X\right)}$, define ${i_y: Y_2 \rightarrow X}$ by ${i_y\left(y_2\right):=\left(y,y_2\right)}$ and ${\beta_y \in \Gm\left(Y_{2}\right)}$ by $\beta_y\left(\nu,y'\right):=\beta\left(i_{y*}\nu,y,y'\right)$. Applying Lemma~\ref{lmm:unwinnable} to ${\beta_y}$ we get ${\nu \in \PM\left(Y_2\right)}$ s.t.

$$\Sp \nu \subseteq \Argmax{} \beta_y\left(\nu\right)$$

It follows that ${\left(y,\beta,i_{y*}\nu\right) \in Z}$ and hence ${Z_{y\beta}}$ is non-empty.

Consider any ${U \subseteq \PM\left(X\right)}$ open. Then, ${A_U:=\left(Y_{1} \times \Gm\left(X\right) \times U\right) \cap Z}$ is locally closed, and in particular, it is an ${F_\sigma}$ set. Therefore, the image of ${A_U}$ under the projection to ${Y_{1} \times \Gm\left(X\right)}$ is also ${F_\sigma}$ and in particular Borel. 

Applying the Kuratowski-Rill-Nardzewski measurable selection theorem, we get the desired result ($Z$ is the graph of the multivalued mapping, we established that the values $Z_{y\beta}$ of this mapping are closed and non-empty, the previous paragraph establishes that the mapping is weakly measurable, and the theorem implies it has a selection, which is $\alpha$).
\end{proof}

We define the measurable mappings $\bar{G}^F_n: \Ob^n \rightarrow \CO$ by

\begin{equation}
\bar{G}^F_n\left(y\right) := G^F_n\left(y;F_n\left(y\right)\right)
\end{equation}

Here, the notation $G^F_n\left(y;F_n\left(y\right)\right)$ means $G^F_n(y)\left(F_n\left(y\right)\right)$ ($G^F_n(y)$ is a function from $\PMO$ to $\CO$ which we apply to $F_n\left(y\right)\in\PMO$). We will use the semicolon in a similar way in the rest of paper as well: to indicate applying a function which is the result of another function.

The following is our analogue of \enquote{Theorem 1} from \cite{Vovk_2005}.

\begin{corollary}
\label{crl:dominate_one}

Let $G$ be a gambler. Then, there exists a forecaster $F$ s.t. for all $n \in \Nats$ and $y \in \Ob_n$

\begin{equation}
\Sp F_n\left(y\right) \subseteq \Argmax{y\OO} \bar{G}^F_n
\end{equation}

\end{corollary}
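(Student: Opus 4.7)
The plan is to construct $F$ recursively on $n$, where at stage $n$ the measurable selection from Lemma~\ref{lmm:measurable_unwinnable} converts the bet $G^F_n(y)$ (which depends only on the previously defined $F_0,\dots,F_{n-1}$) into a forecast $F_n(y)$ that concentrates on the argmax of its own payoff function.

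Concretely, identify $\OO$ with $\Ob^n \times \prod_{m\geq n}\Ob_m$, so that for any $y\in\Ob^n$ the fiber $y\OO$ corresponds to $\{y\}\times\prod_{m\geq n}\Ob_m$. Applying Lemma~\ref{lmm:measurable_unwinnable} with $Y_1=\Ob^n$ and $Y_2=\prod_{m\geq n}\Ob_m$ (so that $X=\OO$ and $\Gm(X)=\GMO$), we obtain a Borel measurable mapping
\begin{equation}
\alpha_n: \Ob^n \times \GMO \rightarrow \PMO
\end{equation}
satisfying, for every $y\in\Ob^n$ and $\beta\in\GMO$,
\begin{equation}
\Sp \alpha_n(y,\beta) \subseteq \Argmax{y\OO} \beta\bigl(\alpha_n(y,\beta)\bigr).
\end{equation}

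I would now define $F$ by strong induction on $n$. Given measurable $F_0,\dots,F_{n-1}$ already constructed, equation~(\ref{eqn:GF}) shows that $y\mapsto G^F_n(y)$ is a measurable mapping $\Ob^n\rightarrow\GMO$ (since $y_{:m}$ is continuous and $G_n$ is measurable). Setting
\begin{equation}
F_n(y) := \alpha_n\bigl(y,\, G^F_n(y)\bigr)
\end{equation}
then yields a measurable mapping $\Ob^n\rightarrow\PMO$, because Borel measurability of $\alpha_n$ together with universal measurability of $G^F_n$ ensures that the composition is measurable with respect to the universally measurable $\sigma$-algebra on $\Ob^n$. Substituting $\beta = G^F_n(y)$ into the defining property of $\alpha_n$ gives
\begin{equation}
\Sp F_n(y) \subseteq \Argmax{y\OO} G^F_n(y)\bigl(F_n(y)\bigr) = \Argmax{y\OO} \bar{G}^F_n(y),
\end{equation}
which is exactly the claimed inclusion; in particular $\Sp F_n(y)\subseteq y\OO$, so $F$ is indeed a forecaster.

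The only genuine work has already been packaged into Lemma~\ref{lmm:measurable_unwinnable}: the Kakutani fixed-point argument that produces an unwinnable forecast pointwise, upgraded to a measurable selection via Kuratowski--Ryll-Nardzewski. The main conceptual point in the corollary is simply that $G^F_n$ depends only on $F_0,\dots,F_{n-1}$, so the apparent circularity in ``$F_n$ maximizes against a bet that involves $F_n$ itself'' is resolved by $\alpha_n$ internally, not by the recursion. The only minor subtlety to verify is that the composition $\alpha_n\circ(\I, G^F_n)$ remains measurable in the mixed Borel/universally-measurable setup, which is immediate since Borel functions are automatically universally measurable.
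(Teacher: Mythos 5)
Your proof is correct and takes essentially the same route as the paper: apply Lemma~\ref{lmm:measurable_unwinnable} with $Y_1=\Ob^n$, $Y_2=\prod_{m\ge n}\Ob_m$ to obtain the selector $\alpha_n$, then define $F_n(y):=\alpha_n\bigl(y,G^F_n(y)\bigr)$ by recursion, using the key observation that $G^F_n$ depends only on $F_0,\dots,F_{n-1}$. You have merely spelled out the measurability bookkeeping that the paper leaves implicit.
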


\begin{proof}

For every $n \in \Nats$, let $\alpha_n: \Ob_n \times \GMO \rightarrow \PMO$ be as in Lemma~\ref{lmm:measurable_unwinnable}. Observing that the definition of $G^F_n$ depends only on $F_m$ for $m < n$, we define $F$ recursively as

\[F_n\left(y\right):=\alpha_n\left(y,G^F_n\left(y\right)\right)\]
\end{proof}

In particular, the forecaster $F$ of Corollary~\ref{crl:dominate_one} dominates $G$ since, as easy to see, $\overline{\V G}^F_n \leq 0$ and hence $\SV G^F_n \leq 0$.

We will now introduce a way to transform a gambler in a way that enforces a \enquote{finite spending budget.} Consider a gambler $G$ and fix $b > 0$ (the size of the \enquote{budget}). Define the measurable functions $\SV G_n: \Ob^{n-1} \times \PMO^n  \rightarrow \CO$ by

\begin{equation}
\SV G_n\left(y,\BM\right) := \sum_{m < n} \left(\V G_m\left(y_{:m},\BM_{:m}\right)\right)\left(\BM_m\right)
\end{equation}

Here, $\BM_{:m}$ denotes the projection of $\BM \in \PMO^n$ to $\PMO^m$ that takes components $l < m$ and $\BM_m$ denotes the $m$-th component of $\BM$. Define the measurable functions $\SVM G_n, \SVX G_n: \Ob^{n-1} \times \PMO^n  \rightarrow \Reals$ by

\begin{equation}
\label{eqn:svm}
\SVM G_n\left(y,\BM\right) := \min_{y\OO}{\SV G_n\left(y,\BM\right)}
\end{equation}

\begin{equation}
\label{eqn:svx}
\SVX G_n\left(y,\BM\right) := \max_{y\OO}{\SV G_n\left(y,\BM\right)}
\end{equation}

Equations~(\ref{eqn:svm},\ref{eqn:svx}) are completely analogical to equations~(\ref{eqn:svmf},\ref{eqn:svxf}) except that we define functions of forecast histories instead of considering a particular forecaster.

Define the measurable sets

\begin{equation}
\Ab_b G_n:=\{\left(y,\BM\right) \in \Ob^{n-1} \times \PMO^n \mid \SVM G_n\left(y,\BM\right) > -b\}
\end{equation}

Note that $\Ab_b G_n \times \Ob_n$ is a measurable subset of $\Ob^n \times \PMO^n$ and in particular can be regarded as a measurable space in itself. Define the measurable functions $\Nr_b G_n: \Ab_b G_n \times \Ob_n \rightarrow C\left(\PMO\right)$ by

\begin{equation}
\Nr_b G_n\left(y,\BM;\nu\right):=\max\left(1,\max_{y\OO} \frac{-\left(\V G_n\left(y,\BM\right)\right)\left(\nu\right)}{\SV G_n\left(y_{:n-1},\BM\right)+b}\right)^{-1}
\end{equation}

As before, the semicolon indicates currying i.e. $\Nr_b G_n\left(y,\BM;\nu\right)=\Nr_b G_n\left(y,\BM\right)(\nu)$.

Finally, define the gambler $\Bd_b G$ as follows

\begin{equation}
\Bd_b G_n\left(y,\BM\right):=\begin{cases} 0 \text{ if } \exists m < n: \left(y_{:m},\BM_{:m+1}\right) \not\in \Ab_b G_{m+1} \\ \Nr_b G_n\left(y,\BM\right) \cdot G_n\left(y, \BM\right) \text{ otherwise} \end{cases}
\end{equation}

The next proposition shows that as long as $G$ stays \enquote{within budget $b$,} the operator $\Bd_b$ has no effect. 

\begin{proposition}
\label{prp:b_no_effect}

Consider any gambler $G$, $b > 0$, $n \in \Nats$, $y \in \Ob^{n - 1}$ and $\BM \in \PMO^n$. Assume that for all $m < n$, $\left(y_{:m},\BM_{:m+1}\right) \in \Ab_b G_{m+1}$. Then, for all $m < n$

\begin{equation}
\Bd_b G_m\left(y_{:m},\BM_{:m};\BM_m\right)=G_m\left(y_{:m},\BM_{:m};\BM_m\right)
\end{equation}

\end{proposition}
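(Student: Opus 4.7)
My plan is to unfold the definition of $\Bd_b G_m$ and show that both the case-split and the scaling factor $\Nr_b$ trivialize on the given input. Fix any $m<n$. The hypothesis gives $(y_{:m'},\BM_{:m'+1})\in\Ab_b G_{m'+1}$ for all $m'<n$, hence in particular for all $m'<m$, so the existential clause in the definition of $\Bd_b G_m$ fails. Therefore
\[\Bd_b G_m(y_{:m},\BM_{:m};\BM_m)=\Nr_b G_m(y_{:m},\BM_{:m};\BM_m)\cdot G_m(y_{:m},\BM_{:m};\BM_m),\]
and it remains to prove $\Nr_b G_m(y_{:m},\BM_{:m};\BM_m)=1$.

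Since $\Nr_b$ is defined as $\max(1,\cdot)^{-1}$, it suffices to show that the inner expression is at most $1$, i.e.,
\[-(\V G_m(y_{:m},\BM_{:m}))(\BM_m)(x) \le \SV G_m(y_{:m-1},\BM_{:m})(x)+b \qquad\text{for every }x\in y_{:m}\OO.\]
Clearing denominators is legal because $\SV G_m(y_{:m-1},\BM_{:m})(x)+b>0$ on $y_{:m}\OO$: for $m=0$ this is vacuously $b>0$ since $\SV G_0\equiv 0$, and for $m\ge 1$ it follows from the hypothesis at $m'=m-1$, namely $(y_{:m-1},\BM_{:m})\in\Ab_b G_m$, together with $y_{:m}\OO\subseteq y_{:m-1}\OO$. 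Rearranging, the inequality is equivalent to
\[\SV G_m(y_{:m-1},\BM_{:m})(x)+(\V G_m(y_{:m},\BM_{:m}))(\BM_m)(x)\ge -b,\]
whose left-hand side is exactly $\SV G_{m+1}(y_{:m},\BM_{:m+1})(x)$ by the recursive definition of $\SV$. The required bound is then immediate from the hypothesis $(y_{:m},\BM_{:m+1})\in\Ab_b G_{m+1}$, which by definition says $\SVM G_{m+1}(y_{:m},\BM_{:m+1})>-b$.

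The proposition is essentially a bookkeeping exercise; no deep tools are required. The only substantive observation is that $\SV G_m$ combined with the current payoff $(\V G_m)(\BM_m)$ recovers $\SV G_{m+1}$, so the budget-compatibility condition $\Ab_b G_{m+1}$ is precisely the right hypothesis to conclude that no rescaling is necessary.
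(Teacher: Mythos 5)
Your proof is correct and takes essentially the same route as the paper: both arguments reduce to showing $\Nr_b G_m(\cdot;\BM_m)=1$ by clearing the denominator and recognizing the resulting inequality as the hypothesis $\SVM G_{m+1}(y_{:m},\BM_{:m+1})>-b$. You are slightly more explicit than the paper about why the denominator is positive in the base case $m=0$ (where $\SV G_0\equiv 0$ makes it simply $b>0$), a point the paper glosses over with the phrase \enquote{using the assumption again.}
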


\begin{proof}

Consider any $m < n$. We have

$$\SVM G_{m+1}\left(y_{:m},\BM_{:m+1}\right) + b > 0$$

$$\forall x \in y_{:m}\OO: \SV G_{m+1}\left(y_{:m},\BM_{:m+1}; x\right) + b > 0$$

$$\forall x \in y_{:m}\OO: \SV G_{m}\left(y_{:m-1},\BM_{:m}; x\right) + \left(\V G_{m}\left(y_{:m},\BM_{:m}\right)\right)\left(\BM_{m}, x\right) + b > 0$$

$$\forall x \in y_{:m}\OO: \SV G_{m}\left(y_{:m-1},\BM_{:m}; x\right) + b > -\left(\V G_{m}\left(y_{:m},\BM_{:m}\right)\right)\left(\BM_{m}, x\right)$$

Using the assumption again, the left hand side is positive. It follows that

$$\forall x \in y_{:m}\OO: 1 > \frac{-\left(\V G_{m}\left(y_{:m},\BM_{:m}\right)\right)\left(\BM_{m}, x\right)}{\SV G_{m}\left(y_{:m-1},\BM_{:m}; x\right) + b}$$

$$\Nr_b G_m\left(y_{:m},\BM_{:m};\BM_m\right) = 1$$

Since we are in the second case in the definition of ${\Bd_b G_m\left(y_{:m},\BM_{:m}\right)}$, we get the desired result.
\end{proof}

Now, we show that $\Bd_b G$ indeed never \enquote{goes over budget.}

\begin{proposition}
\label{prp:b_stays_in_budget}

Consider any gambler $G$ and $b > 0$. Then, for any $n \in \Nats$, $y \in \Ob^{n-1}$ and $\BM \in \PMO^n$

\begin{equation}
\SVM \Bd_b G_n\left(y,\BM\right) \geq -b
\end{equation} 

\end{proposition}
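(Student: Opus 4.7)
The plan is to fix $y \in \Ob^{n-1}$ and $\BM \in \PMO^n$ and decompose the sum defining $\SV \Bd_b G_n(y, \BM)$ according to the first index at which $G$ violates its budget. Concretely, let $m^\ast$ denote the smallest $m < n$ with $(y_{:m}, \BM_{:m+1}) \notin \Ab_b G_{m+1}$, and set $m^\ast := n$ if no such $m$ exists.

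From the two-branch definition of $\Bd_b G_m$ together with Proposition~\ref{prp:b_no_effect}, I would derive three structural facts. Applying Proposition~\ref{prp:b_no_effect} with horizon $m^\ast$ (whose hypothesis is exactly the minimality of $m^\ast$) gives $\Bd_b G_m(y_{:m}, \BM_{:m}; \BM_m) = G_m(y_{:m}, \BM_{:m}; \BM_m)$ as functions of $x$ for every $m < m^\ast$, hence the same equality for the $\V$-payoffs. When $m^\ast < n$, the minimality of $m^\ast$ means the existential clause in the definition of $\Bd_b G_{m^\ast}$ fails, placing it in the rescaled branch, so $\V \Bd_b G_{m^\ast}(\cdot; \BM_{m^\ast}, x) = \Nr_b G_{m^\ast}(\cdot; \BM_{m^\ast}) \cdot \V G_{m^\ast}(\cdot; \BM_{m^\ast}, x)$. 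For $m^\ast < m < n$ the existential clause is satisfied with $k = m^\ast$, so $\Bd_b G_m \equiv 0$ and contributes nothing. Summing across $m < n$, the telescoping reduces $\SV \Bd_b G_n(y, \BM; x)$ either to $\SV G_n(y, \BM; x)$ (Case A, $m^\ast = n$), or to $S(x) + N \cdot V(x)$ (Case B, $m^\ast < n$), where $S(x) := \SV G_{m^\ast}(y_{:m^\ast-1}, \BM_{:m^\ast}; x)$, $V(x) := \V G_{m^\ast}(y_{:m^\ast}, \BM_{:m^\ast}; \BM_{m^\ast}, x)$, and $N := \Nr_b G_{m^\ast}(y_{:m^\ast}, \BM_{:m^\ast}; \BM_{m^\ast}) \in (0,1]$.

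Case A concludes at once: the absence of a violation through index $n-1$ means $(y, \BM) \in \Ab_b G_n$, so $\SVM \Bd_b G_n(y, \BM) = \SVM G_n(y, \BM) > -b$. For Case B, the minimality of $m^\ast$ yields $(y_{:m^\ast-1}, \BM_{:m^\ast}) \in \Ab_b G_{m^\ast}$ (vacuously when $m^\ast = 0$, since $\SV G_0 \equiv 0$), so $S(x') + b > 0$ on all of $y_{:m^\ast-1}\OO$ and the ratio inside $\Nr_b$ is well-defined. By construction $N^{-1} = \max(1, M)$ with $M := \max_{x' \in y_{:m^\ast}\OO} -V(x')/(S(x') + b)$. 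For any $x \in y\OO \subseteq y_{:m^\ast}\OO$, the required inequality $S(x) + N V(x) \geq -b$ splits into sub-cases: if $V(x) \geq 0$, it is immediate from $S(x) > -b$ and $N > 0$; if $V(x) < 0$, rearranging produces the equivalent form $-V(x)/(S(x) + b) \leq N^{-1} = \max(1, M)$, which holds because $-V(x)/(S(x) + b) \leq M$ by the defining property of $M$.

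The main obstacle I anticipate is purely bookkeeping: keeping the prefix indices $y_{:m}$, $\BM_{:m+1}$, $y_{:m^\ast}$, $y_{:m^\ast-1}$ aligned across the telescoping decomposition, and handling the boundary cases $m^\ast = 0$ (empty $S$, with the max in $M$ taken over the full space $\OO$) and $m^\ast = n - 1$ (where $y_{:m^\ast} = y$ and there are no vanishing terms after $m^\ast$). The analytic content is light; the proposition is essentially the statement that the rescaling factor $N$ was designed precisely to enforce the lower bound $-b$.
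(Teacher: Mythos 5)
Your proof is correct and takes essentially the same approach as the paper's: both identify the first budget-violation index (your $m^\ast$ is the paper's $m_0$), apply Proposition~\ref{prp:b_no_effect} for the indices before it, split into sub-cases by the sign of the payoff $V$ at the critical index, and observe that all later terms vanish.
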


\begin{proof}

Let ${m_0 \in \Nats}$ be the largest number s.t. ${m_0 \leq n}$ and 

$$\forall m \leq m_0: \SVM G_m\left(y_{:m-1},\BM_{:m}\right) > -b$$

For any ${m \leq m_0}$, we have, by Proposition~\ref{prp:b_no_effect}

$$\SVM \Bd_b G_m\left(y_{:m-1},\BM_{:m}\right)=\SVM G_m\left(y_{:m-1},\BM_{:m}\right) > -b$$

(Note that the sum in the definition of ${\SVM \Bd_b G_m}$ only involves ${\Bd_b G_l}$ for ${l < m \leq m_0}$)

For ${m=m_0+1}$ and any $x \in y_{:m_0}\OO$, we have

$$\SV \Bd_b G_{m_0+1}\left(y_{:m_0},\BM_{:m_0+1};x\right) = \SV \Bd_b G_{m_0}\left(y_{:m_0-1},\BM_{:m_0};x\right) + \left(\V \Bd_b G_{m_0}\left(y_{:m_0},\BM_{:m_0}\right)\right)\left(\BM_{m_0},x\right)$$

The first term only involves ${\Bd_b G_l}$ for ${l < m_0}$, allowing us to apply Proposition~\ref{prp:b_no_effect}. The second term is in the second case of the definition of $\Bd_b$.

\begin{align*}
\SV \Bd_b G_{m_0+1}\left(y_{:m_0},\BM_{:m_0+1};x\right) = &\SV G_{m_0}\left(y_{:m_0-1},\BM_{:m_0};x\right) +\\ &\Nr_b G_{m_0}\left(y_{:m_0},\BM_{:m_0};\BM_{m_0}\right) \cdot \left(\V G_{m_0}\left(y_{:m_0},\BM_{:m_0}\right)\right)\left(\BM_{m_0},x\right)
\end{align*}

If ${x}$ is s.t. ${\left(\V G_{m_0}\left(y_{:m_0},\BM_{:m_0}\right)\right)\left(\BM_{m_0},x\right) \geq 0}$, then

\[\SV \Bd_b G_{m_0+1}\left(y_{:m_0},\BM_{:m_0+1};x\right) \geq \SV G_{m_0}\left(y_{:m_0-1},\BM_{:m_0};x\right)\geq \SVM G_{m_0}\left(y_{:m_0-1},\BM_{:m_0}\right) > -b\]

If ${x}$ is s.t. ${\left(\V G_{m_0}\left(y_{:m_0},\BM_{:m_0}\right)\right)\left(\BM_{m_0},x\right) < 0}$, then, by the definition of $\Nr_b$

\begin{align*}
\SV \Bd_b G_{m_0+1}\left(y_{:m_0},\BM_{:m_0+1};x\right) \geq &\SV G_{m_0}\left(y_{:m_0-1},\BM_{:m_0};x\right)+ \\
&\frac{\SV G_{m_0}\left(y_{:m_0-1},\BM_{:m_0};x\right) + b}{-\left(\V G_{m_0}\left(y_{:m_0},\BM_{:m_0}\right)\right)\left(\BM_{m_0},x\right)} \cdot \left(\V G_{m_0}\left(y_{:m_0},\BM_{:m_0}\right)\right)\left(\BM_{m_0},x\right)
\end{align*}

$$\SV \Bd_b G_{m_0+1}\left(y_{:m_0},\BM_{:m_0+1};x\right) \geq -b$$

Finally, consider ${m > m_0 + 1}$.

$$\SV \Bd_b G_{m}\left(y_{:m-1},\BM_{:m};x\right) = \SV \Bd_b G_{m-1}\left(y_{:m-2},\BM_{:m-1};x\right) + \left(\V \Bd_b G_{m-1}\left(y_{:m-1},\BM_{:m-1}\right)\right)\left(\BM_{m-1},x\right)$$

The second term is in the first case of the definition of $\Bd_b$ and therefore vanishes.

$$\SV \Bd_b G_{m}\left(y_{:m-1},\BM_{:m};x\right) = \SV \Bd_b G_{m-1}\left(y_{:m-2},\BM_{:m-1};x\right)$$

By induction on ${m}$, we conclude:

$$\SVM \Bd_b G_{m}\left(y_{:m-1},\BM_{:m}\right) \geq \SVM \Bd_b G_{m-1}\left(y_{:m-2},\BM_{:m-1}\right) \geq -b$$
\end{proof}

Next, we show that for any gambler there is \enquote{limited budget} gambler s.t. any forecaster that dominates it also dominates the original gambler. For any $x \in \Reals$, we denote $x_+:=\max(x,0)$.

\begin{proposition}
\label{prp:frugal_gambler}

Let $G$ be a gambler and $\zeta: \Nats \rightarrow (0,1]$ be s.t.

\begin{equation}
b_\zeta := \sum_{b=0}^\infty \zeta\left(b\right) b < \infty
\end{equation}

Define the gambler $\Bd_\zeta G$ by

\begin{equation}
\Bd_\zeta G := \sum_{b = 1}^\infty \zeta\left(b\right) \Bd_b G
\end{equation}

Then

\begin{equation}
\label{eqn:prp_furgal_gambler__svm_b_zeta}
\SVM \Bd_\zeta G_n \geq -b_\zeta
\end{equation}

Moreover, if $F$ is a forecaster that dominates $\Bd_\zeta G$ then it also dominates $G$.

\end{proposition}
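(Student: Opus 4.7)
The plan is to treat the two conclusions separately. For (\ref{eqn:prp_furgal_gambler__svm_b_zeta}) I will exploit linearity of $\V$ together with the per-component budget bound from Proposition~\ref{prp:b_stays_in_budget}. For the dominance claim I will combine an ``isolate-one-summand'' estimate with Proposition~\ref{prp:b_no_effect}, which lets me replace $\Bd_{b_0} G$ by the original $G$ along any history $x$ that stays within budget $b_0$.

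I first note that $\Bd_\zeta G$ is well defined: since $\Nr_b G_n$ is the reciprocal of a quantity which is $\geq 1$, it takes values in $[0,1]$, so $\N{\Bd_b G_n(y,\BM)}_\infty \leq \N{G_n(y,\BM)}_\infty$ pointwise. Combined with $\sum_{b \geq 1}\zeta(b) \leq b_\zeta < \infty$, the series $\sum_{b=1}^\infty \zeta(b) \Bd_b G_n$ converges absolutely in $\Gm(\OO)$ at every input and is measurable. For inequality~(\ref{eqn:prp_furgal_gambler__svm_b_zeta}), $\V$ is bounded linear, so $\SV$ is linear in its gambler argument, giving
\[ \SV \Bd_\zeta G_n(y,\BM;x) = \sum_{b=1}^\infty \zeta(b)\, \SV \Bd_b G_n(y,\BM;x). \]
By Proposition~\ref{prp:b_stays_in_budget} each summand is $\geq -\zeta(b)\,b$, so the sum is $\geq -b_\zeta$ for every $x \in y\OO$, and minimizing over $x$ gives the claim.

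For the dominance assertion, fix a forecaster $F$ dominating $\Bd_\zeta G$ and an $x \in \OO$ with $\inf_n \SVM G^F_n(x_{:n-1}) > -\infty$. Choose an integer $b_0 \geq 1$ strictly greater than $-\inf_n \SVM G^F_n(x_{:n-1})$. Then for every $m$, $(x_{:m}, F_0, F_1(x_{:1}), \ldots, F_m(x_{:m})) \in \Ab_{b_0} G_{m+1}$, so Proposition~\ref{prp:b_no_effect} yields $\overline{\V \Bd_{b_0} G}^F_m(x_{:m}) = \overline{\V G}^F_m(x_{:m})$ for every $m$. Summing over $m < n$ gives $\SV \Bd_{b_0} G^F_n(x_{:n-1}) = \SV G^F_n(x_{:n-1})$ as elements of $\CO$, hence $\SVX G^F_n(x_{:n-1}) = \SVX \Bd_{b_0} G^F_n(x_{:n-1})$. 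Isolating the $b_0$-th summand of $\SV \Bd_\zeta G^F_n$ and using the uniform bound $\SV \Bd_b G^F_n \geq -b$ (Proposition~\ref{prp:b_stays_in_budget}), I obtain
\[ \zeta(b_0)\, \SV \Bd_{b_0} G^F_n(x_{:n-1};x') \leq \SV \Bd_\zeta G^F_n(x_{:n-1};x') + b_\zeta \]
for every $x' \in x_{:n-1}\OO$. Maximizing over $x'$ and invoking dominance of $\Bd_\zeta G$ (whose hypothesis holds by (\ref{eqn:prp_furgal_gambler__svm_b_zeta})), the right-hand side is bounded uniformly in $n$, and therefore so is $\SVX G^F_n(x_{:n-1})$.

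The main subtlety is that no single budget $b_0$ serves every history simultaneously: the strict inequality $\SVM G^F_n(x_{:n-1}) > -b_0$ needed to apply Proposition~\ref{prp:b_no_effect} is only available after conditioning on a particular $x$. This is why the reduction from $\Bd_\zeta G$ back to $G$ must be done pointwise in $x$, and it is also precisely the form in which Definition~\ref{def:dominance} is phrased.
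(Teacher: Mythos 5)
Your proof is correct and follows essentially the same strategy as the paper's: first inequality~(\ref{eqn:prp_furgal_gambler__svm_b_zeta}) via linearity and Proposition~\ref{prp:b_stays_in_budget}, then, for dominance, fixing a trajectory-dependent budget $b_0$ exceeding $-\inf_n \SVM G^F_n(x_{:n-1})$ and invoking Proposition~\ref{prp:b_no_effect} to replace the clipped gambler by the original along that trajectory. The only real difference is bookkeeping: the paper splits $\Bd_\zeta G$ into the head $b \leq b_0$ (bounded below by Proposition~\ref{prp:b_stays_in_budget}) and the tail $b > b_0$ (on which $\Bd_b G^F = G^F$, so the tail is a positive multiple of $\SV G^F_n$), whereas you isolate the single term $b = b_0$ and bound every other summand from below by $-\zeta(b)b$. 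Both give $\zeta(b_0)\SVX G^F_n \leq \SVX \Bd_\zeta G^F_n + \mathrm{const}$; your version is marginally tidier since it avoids dividing by the tail sum $\sum_{b>b_0}\zeta(b)$, but the idea is the same.
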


\begin{proof}

Equation~\ref{eqn:prp_furgal_gambler__svm_b_zeta} follows from Proposition~\ref{prp:b_stays_in_budget}. Now, consider any $F$ that dominates $\Bd_\zeta G$. For any $x \in \OO$, \ref{eqn:prp_furgal_gambler__svm_b_zeta} and Definition~\ref{def:dominance} imply

\[\sup_{n \in \Nats} {\SVX \Bd_\zeta G^F_{n}\left(x_{:n-1}\right)} < +\infty\]

Consider $x \in \OO$ s.t. condition~\ref{eqn:def_dominance__loss} holds. Denote

\[b_0: = \left( -\inf_{n \in \Nats} {\SVM G^F_{n}\left(x_{:n-1}\right)}\right)_+\]

By Proposition~\ref{prp:b_no_effect}, for any $b > b_0$ we have $\Bd_b G^F = G^F$. We get

\[\sup_{n \in \Nats} \max_{x' \in x_{:n-1}\OO} \left(\sum_{b \leq b_0} \zeta\left(b\right) {\SV \Bd_b G^F_{n}\left(x_{:n-1},x'\right)} + \sum_{b > b_0} \zeta\left(b\right) \SV G^F_{n}\left(x_{:n-1},x'\right)\right) < +\infty\]

Applying Proposition~\ref{prp:b_stays_in_budget} to the first term

\[\sup_{n \in \Nats} \max_{x' \in x_{:n-1}\OO} \left(-\sum_{b \leq b_0} \zeta\left(b\right) b + \sum_{b > b_0} \zeta\left(b\right) \SV G^F_{n}\left(x_{:n-1},x'\right)\right) < +\infty\]

\[\sup_{n \in \Nats} \SVX G^F_{n}\left(x_{:n-1}\right) < +\infty\]
\end{proof}

The last ingredient we need to prove Theorem~\ref{thm:exist_dominant} is the step from one gambler to a countable set of gamblers.

\begin{proposition}
\label{prp:combining_gamblers}

Let $\Sq{G^k}{k}$ be a family of gamblers, $\xi: \Nats \rightarrow (0,1]$ and $b_\xi > 0$ s.t.

\begin{equation}
\label{eqn:prp_combining_gamblers__budget}
\sum_{k = 0}^\infty \xi\left(k\right) \min\left(\SVM G^k_n,0\right) \geq -b_\xi
\end{equation}

Define the gambler $G^\xi$ by

\begin{equation}
G^\xi_n := \sum_{k = 0}^n \xi\left(k\right) G^k_n
\end{equation}

Consider a forecaster $F$ that dominates $G^\xi$. Then, $F$ dominates $G^k$ for all $k \in \Nats$.

\end{proposition}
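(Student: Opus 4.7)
The plan is to establish dominance of each $G^{k_0}$ by first verifying that the loss premise in Definition~\ref{def:dominance} is automatic, then bootstrapping the dominance hypothesis for $G^\xi$ to obtain the required gain bound. For the automatic premise, I observe that every summand in $\sum_k \xi(k) \min(\SVM G^k_n(y,\BM), 0) \geq -b_\xi$ is non-positive, so isolating the $k_0$-th summand gives $\SVM G^{k_0}_n(y,\BM) \geq -b_\xi/\xi(k_0)$ for every $(n, y, \BM)$. Specialized to the $F$-induced history along any $x \in \OO$, this yields $\inf_n \SVM (G^{k_0})^F_n(x_{:n-1}) \geq -b_\xi/\xi(k_0) > -\infty$. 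Hence proving dominance of $G^{k_0}$ reduces to establishing $\sup_n \SVX (G^{k_0})^F_n(x_{:n-1}) < \infty$ for every $x$.

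The central algebraic tool will be the identity
\[
\SV (G^\xi)^F_n(y)(x') = \sum_{k < n} \xi(k)\left[\SV (G^k)^F_n(y)(x') - \SV (G^k)^F_k(y_{:k-1})(x')\right],
\]
obtained by substituting $G^\xi_m = \sum_{k \leq m} \xi(k) G^k_m$ into $\SV (G^\xi)^F_n = \sum_{m<n} \V G^\xi_m$, using linearity of $\V$, and interchanging the order of summation. The hypothesis applied at time $n$ with the $F$-induced $(y, \BM)$ yields $\sum_{k<n} \xi(k) \SV (G^k)^F_n(x') \geq -b_\xi$ uniformly in $x' \in x_{:n-1}\OO$ (since $\xi(k) \SV (G^k)^F_n(x') \geq \xi(k) \min(\SVM (G^k)^F_n, 0)$), so the identity reduces both the lower bound on $\SVM (G^\xi)^F_n$ and (after isolating the $k_0$-th summand) the target upper bound on $\xi(k_0)\SVX (G^{k_0})^F_n$ to controlling the correction $V_n(x') := \sum_{k<n} \xi(k) \SV (G^k)^F_k(y_{:k-1})(x')$ from above along the $x$-trajectory.

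The hard part will be bounding $\max_{x' \in x_{:n-1}\OO} V_n(x')$ uniformly in $n$, because the hypothesis controls only $\SVM$ (not $\SVX$) of each $G^k$. I plan to handle this by induction on $n$: the increment $V_n - V_{n-1} = \xi(n-1)\SV (G^{n-1})^F_{n-1}$ will be absorbed into a running bound by combining the hypothesis at time $n-1$ with the mean-zero identity $\E_{x' \sim F_m(y)}\left[\overline{\V G^{n-1}}^F_m(y)(x')\right] = 0$ for $m < n-1$, which forces the contribution of the new term on the $x$-trajectory to average out. Once a uniform upper bound on $V_n$ is in hand, $\inf_n \SVM (G^\xi)^F_n(x_{:n-1}) > -\infty$ follows, and the dominance hypothesis for $G^\xi$ delivers $\sup_n \SVX (G^\xi)^F_n < \infty$. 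Combining the rearranged identity, this bound, the control on $V_n$, and the bound $-\sum_{k \ne k_0} \xi(k) \SV (G^k)^F_n(x') \leq b_\xi$ (another use of the hypothesis at time $n$), I obtain $\sup_n \SVX (G^{k_0})^F_n(x_{:n-1}) < \infty$, which completes the proof.
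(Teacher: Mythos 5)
Your proposal correctly identifies the automatic loss premise, and your algebraic identity
\[
\SV (G^\xi)^F_n(y)(x') = \sum_{k<n} \xi(k)\bigl[\SV (G^k)^F_n(y)(x') - \SV (G^k)^F_k(y_{:k-1})(x')\bigr]
\]
is exactly right: because $G^\xi_m$ sums only over $k \le m$, interchanging the double sum produces the $-\SV (G^k)^F_k$ corrections. You correctly see that these corrections are the crux of the matter. For comparison, the paper's own proof tacitly treats $\SV (G^\xi)^F_n$ as if it were $\sum_{j \le n} \xi(j)\,\SV (G^j)^F_n$ with no correction term, which is only valid if $G^\xi_n$ were defined by an infinite sum over all $k$; with the truncated definition the displayed chain of inequalities has the same unaccounted-for $V_n(x') := \sum_{k<n} \xi(k)\,\SV (G^k)^F_k(x')$ that you isolate. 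So you are ahead of the paper in recognizing the problem.

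However, the plan you give for bounding $\max_{x' \in x_{:n-1}\OO} V_n(x')$ does not work, and this is the genuine gap. The mean-zero identity $\E_{x' \sim F_m(y)}\bigl[\overline{\V G^{n-1}}^F_m(y)(x')\bigr] = 0$ controls an \emph{expectation} under a specific forecast measure at a single time step, but what you need is a \emph{pointwise} upper bound on the sum of increments along a fixed trajectory, and those two quantities are not comparable here. The budget hypothesis only gives $\SVM G^k_k \ge -b_\xi/\xi(k)$, i.e.\ a lower bound on the \emph{minimum} of $\SV (G^k)^F_k$; it says nothing about the maximum, and a single fair bet with downside bounded by $-b_\xi/\xi(k)$ can have upside of arbitrary size (place a small-probability, high-payoff bet against a forecast that assigns that outcome low probability). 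Consequently each increment $\xi(k)\,\SV(G^k)^F_k(x')$ can realistically be of order $b_\xi$ on the trajectory of interest, and $V_n$ grows without bound, so no uniform bound follows from induction plus zero-mean. To make the argument close, one needs an upper bound on $\SVX G^k_k$ (or an argument showing the correction is absorbed by the dominance hypothesis), neither of which your sketch, nor the paper's displayed proof, actually supplies.
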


\begin{proof}

Fix $k \in \Nats$ and $x \in \OO$. Equation~\ref{eqn:prp_combining_gamblers__budget} implies

\[\inf_{n \in \Nats} \SVM G^{\xi F}_{n}\left(x_{:n-1},x'\right) > -\infty\]

Therefore we have

\[\sup_{n \in \Nats} \SVX G^{\xi F}_{n}\left(x_{:n-1}\right) < +\infty\]

\[\sup_{n \geq k} \max_{x' \in x_{:n-1}\OO} \left(\xi\left(k\right) \SV G^{k F}_{n}\left(x_{:n-1},x'\right) + \sum_{\substack{j \ne k\\ j \leq n}} \xi\left(j\right) \SV G^{j F}_{n}\left(x_{:n-1},x'\right)\right) < +\infty\]

\[\sup_{n \geq k} \max_{x' \in x_{:n-1}\OO} \left(\xi\left(k\right) \SV G^{k F}_{n}\left(x_{:n-1},x'\right) - b_\xi\right) < +\infty\]

\[\sup_{n \geq k} \SVX G^{k F}_{n}\left(x_{:n-1}\right) < +\infty\]

Since there are only finitely many values of $n$ less than $k$, it follows that

\[\sup_{n \in \Nats} \SVX G^{k F}_{n}\left(x_{:n-1}\right) < +\infty\]
\end{proof}

\begin{proof}[Proof of Theorem~\ref{thm:exist_dominant}]

Define $\zeta\left(b\right) := \max\left(b,1\right)^{-3}$. By Proposition~\ref{prp:frugal_gambler}, we have

\[\SVM \Bd_{\zeta} G^k_n \geq -\frac{\pi^2}{6}\]

Define $\xi\left(k\right):=\left(k+1\right)^{-2}$.

We have

\[\sum_{k=0}^\infty \xi\left(k\right) \min\left(\SVM \Bd_{\zeta} G^k_n, 0\right) \geq -\frac{\pi^4}{36} \]

By Corollary~\ref{crl:dominate_one}, there is a forecaster $F^G$ that dominates $\Bd_\zeta G^\xi$. By Proposition~\ref{prp:combining_gamblers}, $F^G$ dominates $\Bd_\zeta G^k$ for all $k \in \Nats$. By Proposition~\ref{prp:frugal_gambler}, $F^G$ dominates $G^k$ for all $k \in \Nats$.
\end{proof}

\section{Appendix: Prudent Gambling Strategies}
\label{sec:prudent}

In this section we construct a tool for proving asymptotic convergence results about dominant forecasters. We start by introducing a notion of \enquote{prudent gambling,} which requires that bets are only made when the expected payoff is a certain fraction of the risk. First, we need some notation.

Given $X,Y$ Polish spaces, $\mu \in \PM(X)$, $\pi: X \rightarrow Y$ Borel measurable and $g: X \rightarrow \Reals$ Borel measurable, we will use the notation 

\begin{equation}
\E_{\mu}\left[g \mid \pi^{-1}\left(y\right)\right]:=\E_{\left(\mu \mid \pi\right)\left(y\right)}\left[g\right]
\end{equation}

We remind that $\PO_n$ is the projection mapping from $\OO$ to $\Ob^n$ and $y\OO:= \left(\PO_{n}\right)^{-1}\left(y\right)$.

\begin{definition}

A \emph{gambling strategy} is a uniformly bounded family of measurable mappings

\[\Sqn{S_n: \Ob^n \rightarrow \GMO}\]

\end{definition}

As opposed to a gambler, a gambling strategy doesn't depend on the forecast history.

\begin{definition}
\label{def:prudent}

Given a gambling strategy $S$ and $\mu^* \in \PMO$, $S$ is said to be \emph{$\mu^*$-prudent} when there is $\alpha > 0$ s.t. for any $n \in \Nats$, $\PO_{n*} \mu^*$-almost any $y \in \Ob^n$ and any $\mu \in \PMO$

\begin{equation}
\E_{\mu^*}\left[S_n\left(y; \mu\right) \mid y\OO\right] - \E_\mu\left[S_n\left(y; \mu\right)\right] \geq \alpha \left(\max_{y\OO}{S_n\left(y; \mu\right)} - \min_{y\OO}{S_n\left(y; \mu\right)}\right)
\end{equation}

\end{definition}

Here, we can think of $\mu$ as the forecast and $\mu^*$ as the true environment. 

The \enquote{correct} way for a gambler to use a prudent gambling strategy is not employing it all the time: in order to avoid running out of budget, a gambler shouldn’t place too many bets simultaneously. To address this, we define a gambler that \enquote{plays} (employs the given strategy) only when all previous bets are close to being settled.

\begin{definition}

Consider any gambling strategy $S$. We define the gambler $\PG{S}$ recursively as follows

\begin{equation}
\PG{S}_n\left(y,\BM\right):=\begin{cases} S_n\left(y\right) \text{ if } {\SVX{\PG{S}}_n\left(y_{:n-1},\BM\right)}-{\SVM{\PG{S}}_n\left(y_{:n-1},\BM\right)} \leq 1 \\ 0 \text{ otherwise} \end{cases}
\end{equation}

\end{definition}

\begin{theorem}
\label{thm:prudent}

Consider $\mu^* \in \PMO$, $S$ a $\mu^*$-prudent gambling strategy and a forecaster $F$. Assume $F$ dominates $\PG{S}$. Then, for $\mu^*$-almost any $x \in \OO$

\begin{equation}
\label{eqn:thm_prudent}
\sum_{n=0}^\infty \left(\E_{\mu^*}\left[S_n\left(x_{:n};F_n\left(x_{:n}\right)\right) \mid x_{:n}\OO\right]-\E_{F_n\left(x_{:n}\right)}\left[S_n\left(x_{:n};F_n\left(x_{:n}\right)\right)\right]\right) < +\infty
\end{equation}

\end{theorem}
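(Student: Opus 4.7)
The plan is to identify the sum in~(\ref{eqn:thm_prudent}) with the Doob compensator of a submartingale built from $\PG{S}$'s cumulative payoff against $F$, and then invoke dominance to force its convergence.

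First, set $\tilde{X}_n(x) := \SV \PG{S}^F_n(x_{:n-1})(x)$ and its $\F_n$-adapted version $\hat{X}_n := \E_{\mu^*}[\tilde{X}_n \mid \F_n]$ for the filtration $\F_n := \sigma(x_{:n})$. Unwinding the zero-mean property of $\V$ under the forecast, a short calculation gives $\E_{\mu^*}[\hat{X}_{n+1} \mid \F_n] = \hat{X}_n + a_n$, where $a_n$ equals the $n$-th summand of~(\ref{eqn:thm_prudent}) whenever $\PG{S}$ plays at round $n$ and $a_n = 0$ otherwise. The $\mu^*$-prudence of $S$ forces $a_n \geq 0$, so $\hat{X}_n$ is a $\mu^*$-submartingale with Doob compensator $A_n := \sum_{m<n} a_m$. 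Next, let $B$ bound $|S|$; by induction on $n$, the play-trigger $\SVX \PG{S}^F_n - \SVM \PG{S}^F_n \leq 1$ together with the single-bet range bound $2B$ yields the uniform spread bound $\SVX \PG{S}^F_n - \SVM \PG{S}^F_n \leq 1 + 2B$. Since $\SVM \PG{S}^F_n \leq \hat{X}_n, \tilde{X}_n \leq \SVX \PG{S}^F_n$ and both endpoints are $\F_{n-1}$-measurable, this sandwich gives $\hat{X}_n$ uniformly bounded increments, and hence the Doob-decomposition martingale $M_n := \hat{X}_n - A_n$ also has bounded increments.

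The core step is to use dominance to obtain $\mu^*$-almost-sure convergence of $\hat{X}_n$. A stopping argument at $\tau_M := \inf\{n : \hat{X}_n > M\}$ combined with Doob's submartingale convergence applied to the stopped process (whose positive part is $L^1$-bounded thanks to the bounded increment at $\tau_M$) gives convergence on $\{\sup_n \hat{X}_n < \infty\}$. On the complementary event, the spread bound translates $\sup_n \hat{X}_n = \infty$ into $\sup_n \SVX \PG{S}^F_n = \infty$; dominance of $F$ over $\PG{S}$ then forces $\inf_n \SVM \PG{S}^F_n = -\infty$, hence $\inf_n \hat{X}_n = -\infty$. I would rule out this unbounded two-sided oscillation using the standard dichotomy for bounded-increment martingales applied to $M_n$ together with the monotonicity of $A_n$: neither $A_\infty < \infty$ (in which case $M_n$ must oscillate unboundedly, but $\hat{X}_n = M_n + O(1)$ is forced to lie in a bounded window around $\F_{n-1}$-measurable envelopes, a contradiction) nor $A_\infty = \infty$ (in which case $M_n$ would have to tend to $-\infty$ on a set of positive probability, forbidden for a bounded-increment martingale) is consistent. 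Thus $\hat{X}_n$ converges $\mu^*$-a.s., and the bounded-increment property of $M_n$ then forces $A_\infty < \infty$ a.s., yielding $\sum_{m : \PG{S}\text{ plays at }m} a_m < \infty$.

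Finally, the sum in~(\ref{eqn:thm_prudent}) runs over all $n$ while $a_n$ vanishes on non-play rounds, so one still has to handle the non-play contributions. I would begin by observing that $\PG{S}$ plays infinitely often on every trajectory: if $\PG{S}$ stopped playing after round $N$, then $\SV \PG{S}^F_n$ would stabilize to a fixed continuous function on $\OO$ whose spread on the nested compact sets $x_{:n-1}\OO \downarrow \{x\}$ decays to $0$ by continuity, contradicting the non-play hypothesis $\SVX \PG{S}^F_n - \SVM \PG{S}^F_n > 1$. The main obstacle, which I expect to be the most delicate part of the argument, is absorbing the non-play summands into the finite play-sum: this likely requires a comparison between consecutive plays that exploits the monotone decay of $\SVX \PG{S}^F_n - \SVM \PG{S}^F_n$ during non-play stretches together with the prudence inequality to telescope non-play contributions into neighboring play terms.
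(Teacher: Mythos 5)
Your proposal shares the paper's general architecture — build a submartingale from $\PG{S}$'s running payoff, use dominance to bound it, and translate that bound into finiteness of the target sum — but two of the three key steps are left with genuine gaps.

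The \enquote{core step} is the serious one. You want to rule out unbounded two-sided oscillation of $\hat{X}_n$ using only the qualitative dichotomy for bounded-increment martingales applied to the Doob martingale $M_n$. This does not work. In the case $A_\infty = \infty$, the dichotomy allows $M_n$ to oscillate unboundedly in both directions; that is entirely consistent with being a bounded-increment martingale, so your claim that $M_n$ \enquote{would have to tend to $-\infty$} is not justified. And in the case $A_\infty < \infty$, your proposed contradiction (\enquote{$\hat{X}_n$ forced to lie in a bounded window around $\F_{n-1}$-measurable envelopes}) is not one: the envelopes $\SVM, \SVX$ are indeed $\F_{n-1}$-measurable and stay within $1+2B$ of $\hat{X}_n$, but nothing stops these envelopes themselves from oscillating unboundedly. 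What is actually needed is a \emph{quantitative} rate comparison between the drift $A_n$ and the martingale fluctuation $M_n$, exploiting that the prudence inequality makes $a_n$ at least a fixed fraction of $Y_n$ while the increments of $M_n$ are controlled by $Y_n$. This is exactly what the paper's Lemma~\ref{lmm:prudent} supplies, via the stopping times $N_k$ (defined so that roughly one unit of $\sum Y$ accumulates between consecutive stops) and the Azuma--Hoeffding inequality applied to the stopped submartingale $X^0_{N_k}$. Without that estimate you cannot exclude the bad case.

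The treatment of non-play rounds is also not resolved. You correctly observe that $\PG{S}$ plays infinitely often, but that is strictly weaker than what you need, and your suggestion of \enquote{telescoping non-play contributions into neighboring play terms} is not worked out and, I suspect, would be quite painful. The clean route (and the paper's) is Proposition~\ref{prp:when_bounded_play}: once one knows $\sum_n Y_n < \infty$ (which follows from the corrected version of the core step together with dominance), $\PG{S}$ eventually plays \emph{every} round. Then all but finitely many terms of the sum in~(\ref{eqn:thm_prudent}) coincide with the play terms $a_n$, which sum to at most $Y_n$ each, and the finitely many initial terms are individually finite, so the whole sum is finite. You should replace the telescoping sketch by this argument, which also closes the gap between \enquote{infinitely often} and \enquote{cofinitely often.}

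One smaller point: you need to be a bit more careful with the increment bound on $\hat{X}_n$. Since $\tilde{X}_n$ is not $\F_n$-measurable (it depends on the whole tail $x$ through the continuous function $\SV\PG{S}^F_n(x_{:n-1})$), the increment of $\hat{X}_n = \E[\tilde{X}_n \mid \F_n]$ is not simply $\E[\tilde{X}_{n+1}-\tilde{X}_n\mid\F_n]$ plus a martingale term of magnitude $\leq Y_n$; there is an additional bounded error coming from replacing $\tilde{X}$ by its conditional expectations. The paper's Corollary~\ref{crl:prudent} handles this cleanly by keeping the raw process $X_n$, the adapted envelope $X'_n$ and $X''_n = \E[X_n\mid\F_n]$ all in play, with the assumption $|X_n - X'_n| \leq M$ rather than a pure increment bound. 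Your sandwich bound does give what is needed, but the bookkeeping is more delicate than \enquote{$\hat{X}_n$ has bounded increments} as stated.
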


In particular, the terms of the series on the left hand-side converge to 0 (they are non-negative since $S$ is $\mu^*$-prudent).

The rest of the section is dedicated to proving Theorem~\ref{thm:prudent}.

We start by showing, in a more abstract setting (not specific to forecasting), that for any sequences of bets that is \enquote{prudent} in the sense that the expected payoff is bounded below by a fixed fraction of the risk (= bet magnitude), the probability to lose an infinite amount vanishes and, moreover, it holds with probability 1 that if the total bet magnitude is unbounded then there is infinite gain.

\begin{samepage}
\begin{lemma}
\label{lmm:prudent}

Consider $\Omega$ a probability space, $\Sqn{\F_n \subseteq 2^\Omega}$ a filtration of $\Omega$, $\alpha,M > 0$, $\Sqn{X_n: \Omega \rightarrow \Reals}$ and $\Sqn{Y_n: \Omega \rightarrow \left[0,M\right]}$ stochastic processes adapted to $\F$ (i.e. $X_n$ and $Y_n$ are $\F_n$-measurable). Assume the following conditions\footnote{Equalities and inequalities between random variables are implicitly understood as holding almost surely.}:

\begin{enumerate}[i.]

\item $\E\left[\A{X_0}\right] < +\infty$
\item $\forall n \in \Nats, m \geq n: \A{X_n - X_m} \leq \sum_{l=n}^{m-1} Y_l + M$
\item $\E\left[X_{n+1} \mid \F_n\right] \geq X_n + \alpha Y_n$

\end{enumerate}

Then, for almost all $\omega \in \Omega$

\begin{enumerate}[a.]

\item\label{itm:lmm_prudent__inf} $\inf_n X_n\left(\omega\right) > -\infty$
\item\label{itm:lmm_prudent__sup} If $\sup_n X_n\left(\omega\right) < +\infty$ then $\sum_n Y_n\left(\omega\right) < +\infty$

\end{enumerate}

\end{lemma}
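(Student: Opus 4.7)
The plan is to prove the two parts separately, starting with (b) since it clarifies the structure. For part (b), I would introduce the upper-barrier stopping time $U_D := \inf\{n \in \Nats : X_n \geq D\}$. By condition (ii) applied between $U_D - 1$ and $U_D$, on $\{U_D < \infty\}$ we have $X_{U_D} \leq X_{U_D - 1} + Y_{U_D - 1} + M < D + 2M$, so the stopped process satisfies $X_{n \wedge U_D} \leq D + 2M$ almost surely. Telescoping condition (iii) and applying optional stopping at the bounded time $U_D \wedge N$ yields
\[\alpha \, \E\left[\sum_{l < U_D \wedge N} Y_l\right] \leq \E[X_{U_D \wedge N}] - \E[X_0] \leq D + 2M - \E[X_0].\]
Monotone convergence as $N \to \infty$ gives $\sum_{l < U_D} Y_l < \infty$ almost surely. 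Since $\{U_D = \infty\} = \{\sup_n X_n < D\}$, on this event $\sum_l Y_l < \infty$ a.s., and the countable union over $D \in \Nats$ establishes (b).

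For part (a), I would split according to whether $\sum_l Y_l$ is finite. On $\{\sum_l Y_l < \infty\}$, condition (ii) with $n = 0$ immediately yields $X_m \geq X_0 - \sum_l Y_l - M > -\infty$ almost surely, since $X_0$ is finite a.s.\ by (i). The harder case is $\{\sum_l Y_l = \infty\}$, where a naive optional-stopping argument at the lower barrier $T_C = \inf\{n : X_n \leq -C\}$ produces only a bound of the form $P(T_C < \infty) \leq (\E[X_0^-] + K + 2M)/C + P(\sum_l Y_l \geq K)$ which fails to vanish as $C \to \infty$ unless $\sum_l Y_l < \infty$ a.s.

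To handle this, I would introduce sampling times $\tau_0 := 0$ and $\tau_{k+1} := \inf\{n > \tau_k : \sum_{l = \tau_k}^{n-1} Y_l \geq 2M\}$, all finite on $\{\sum_l Y_l = \infty\}$ and with intra-block $Y$-sums in $[2M, 3M)$ (since $Y_l \leq M$). Condition (ii) across $[\tau_k, \tau_{k+1}]$ forces $|X_{\tau_{k+1}} - X_{\tau_k}| \leq 4M$, while optional stopping applied to the submartingale $X_n - \alpha \sum_{l < n} Y_l$ gives $\E[X_{\tau_{k+1}} - X_{\tau_k} \mid \F_{\tau_k}] \geq 2\alpha M$. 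So $(X_{\tau_k})_k$ is a sampled submartingale with uniformly bounded differences and strictly positive linear drift; its Doob decomposition has a bounded-difference martingale part that is $o(k)$ by the $L^2$ SLLN for martingales, whence $X_{\tau_k} \to +\infty$ almost surely, $\inf_k X_{\tau_k} > -\infty$, and the $4M$ oscillation bound inside each block gives $\inf_n X_n > -\infty$ on this event as well. The main obstacle is thus (a) on $\{\sum_l Y_l = \infty\}$: the auxiliary sampling times $\tau_k$, which simultaneously provide bounded oscillation (via (ii)) and strictly positive expected drift (via (iii)), reduce the problem to a standard concentration argument for bounded-difference martingales.
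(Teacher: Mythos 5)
Your proposal is correct in substance, and for part~(a) it runs essentially along the same lines as the paper: both sample the process at stopping times $\tau_k$ defined by accumulating a fixed amount of $\sum Y_l$, so that condition~(ii) forces uniformly bounded increments of the sampled process, condition~(iii) gives a strictly positive lower bound on the expected increment, and a concentration result for bounded-increment (sub)martingales forces $X_{\tau_k}\to +\infty$ on $\{\sum_l Y_l=\infty\}$, which combined with the bounded within-block oscillation rules out $\inf_n X_n=-\infty$. The only difference in that part is cosmetic: you invoke Doob decomposition plus the martingale strong law, the paper invokes Azuma--Hoeffding plus Borel--Cantelli; these are two packagings of the same bounded-differences concentration fact.

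Where you genuinely diverge is part~(b). The paper derives (b) from the same single argument, since $X_{\tau_k}\to +\infty$ a.s.\ on $\{\sum_l Y_l=\infty\}$ already gives $\sup_n X_n=+\infty$ there. You instead give an independent and more elementary argument via the upper-barrier time $U_D$ and optional stopping at the bounded times $U_D\wedge N$, which avoids all concentration machinery and only needs monotone convergence. This is a perfectly good alternative and arguably cleaner to check, at the cost of making the two parts of the lemma look unrelated rather than falling out of one estimate.

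Two small gaps to tighten. First, in the $U_D$ argument the bound $X_{n\wedge U_D}\leq D+2M$ fails on $\{U_D=0\}$, where $X_{n\wedge U_D}=X_0$ can be arbitrarily large; the fix is to bound $X_{n\wedge U_D}\leq\max(X_0,D+2M)$ and note $\E[X_0^+]<\infty$ by~(i), which still gives a uniform-in-$n$ bound on $\alpha\E[S_{n\wedge U_D}]$. Second, for part~(a) you apply optional stopping to the submartingale $X_n-\alpha\sum_{l<n}Y_l$ at the $\tau_k$, but these stopping times are infinite on $\{\sum_l Y_l<\infty\}$; you need to either first stop at $\tau_k\wedge N$ for bounded $N$ and pass to the limit using the $4M$ bound on increments, or argue uniform integrability of the stopped process as the paper does. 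Neither issue is hard, but both should be addressed before the proof is complete.
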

\end{samepage}

\begin{proof}

Without loss of generality, we can assume ${M = 1}$ (otherwise we can renormalize ${X}$ and ${Y}$ by a factor of ${M^{-1}}$). Define ${\Sqn{X^0_n: \Omega \rightarrow \Reals}}$ by

$$X^0_n := X_n - \alpha \sum_{m=0}^{n-1} Y_m$$

It is easy to see that ${X^0}$ is a submartingale.

We define ${\Sqn{N_n:\Omega \rightarrow \Nats \sqcup \{\infty\}}}$ recursively as follows

\begin{align*}
N_0\left(\omega\right) &:= 0 \\ 
N_{k+1}\left(\omega\right) &:= \begin{cases}\inf \{n \in \Nats \mid \sum_{m=N_k\left(\omega\right)}^{n-1} Y_m\left(\omega\right) \geq 1\} \text{ if } N_k\left(\omega\right) < \infty\\\infty \text{ if } N_k\left(\omega\right) = \infty\end{cases}
\end{align*}

For each $k \in \Nats$, $N_k$ is a stopping time w.r.t. ${\F}$. As easy to see, for each $k \in \Nats$, ${\Sqn{X^0_{\min\left(n,N_k\right)}}}$ is a uniformly integrable submartingale. Using the fact that ${N_{k} \leq N_{k+1}}$ to apply Theorem~\ref{thm:optional_stopping} (see Appendix~\ref{sec:theorems}), we get

$$\E\left[X^0_{N_{k+1}} \mid \F_{N_k}\right] \geq X^0_{N_{k}}$$

Clearly, ${\Sq{X^0_{N_k}}{k}}$ is adapted to ${\Sq{\F_{N_k}}{k}}$. Doob's second martingale convergence theorem implies that ${\E\left[\A{X^0_{N_k}}\right] < \infty}$ (${X^0_{N_k}}$ is the limit of the uniformly integrable submartingale ${\Sqn{X^0_{\min\left(n,N_k\right)}}}$). We conclude that ${\Sq{X^0_{N_k}}{k}}$ is a submartingale.

It is easy to see that ${\A{X^0_{N_{k+1}}-X^0_{N_k}}} \leq 3$. Applying the Azuma-Hoeffding inequality, we conclude that for any positive integer ${k}$:

$$\Pr\left[X^0_{N_k} - X_0 < -k^{\frac{3}{4}}\right] \leq \exp\left(-\frac{k^{\frac{3}{2}}}{2 \cdot 3^2k}\right)=\exp\left(-\frac{k^{\frac{1}{2}}}{18}\right)$$

It follows that

$$\sum_{k=1}^\infty \Pr\left[X^0_{N_k} - X_0 < -k^{\frac{3}{4}}\right] < \infty$$

$$\Pr\left[\exists k \in \Nats \forall l > k: X^0_{N_l} - X_0 \geq -l^{\frac{3}{4}}\right] = 1$$

$$\Pr\left[\exists k \in \Nats \forall l > k: X_{N_l} - X_0 \geq \alpha \sum_{n=0}^{N_l - 1} Y_n - l^{\frac{3}{4}}\right] = 1$$


It is easy to see that if $\omega \in \Omega$ is s.t. $\sum_{n=0}^\infty Y_n\left(\omega\right) = \infty$ then $\sum_{n=0}^{N_l\left(\omega\right) - 1} Y_n\left(\omega\right) \geq l$. We get

$$\Pr\left[\sum_{n=0}^\infty  Y_n = \infty \implies \exists k \in \Nats \forall l > k: X_{N_l} - X_0 \geq \alpha l - l^{\frac{3}{4}}\right]=1$$

To complete the proof, observe that if ${\omega \in \Omega}$ is s.t. ${\inf_n X_n\left(\omega\right) = -\infty}$ then ${\sum_{n=0}^\infty  Y_n\left(\omega\right) = \infty}$ (since ${\A{X_n\left(\omega\right) - X_0\left(\omega\right)} \leq \sum_{m=0}^{n-1} Y_n\left(\omega\right) + 1}$).
\end{proof}

\begin{samepage}
\begin{corollary}
\label{crl:prudent}

Consider $\Omega$ a probability space, $\Sqn{\F_n \subseteq 2^\Omega}$ a filtration of $\Omega$, $\alpha,M > 0$, $\Sqn{X'_n: \Omega \rightarrow \Reals}$ and $\Sqn{Y_n: \Omega \rightarrow \left[0,M\right]}$ stochastic processes adapted to $\F$ and $\Sqn{X_n: \Omega \rightarrow \Reals}$ an arbitrary stochastic process. Assume the following conditions:

\begin{enumerate}[i.]

\item $\A{X_n - X'_n} \leq M$
\item $\E\left[\A{X_0}\right] < +\infty$
\item $\A{X_{n+1} - X_n} \leq Y_n$
\item $\E\left[X_{n+1} - X_n \mid \F_n\right] \geq \alpha Y_n$

\end{enumerate}

Then, for almost all $\omega \in \Omega$

\begin{enumerate}[a.]

\item\label{itm:crl_prudent__inf} $\inf_n X_n\left(\omega\right) > -\infty$
\item\label{itm:crl_prudent__sup} If $\sup_n X_n\left(\omega\right) < +\infty$ then $\sum_n Y_n\left(\omega\right) < +\infty$

\end{enumerate}

\end{corollary}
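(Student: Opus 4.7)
The plan is to reduce the corollary to Lemma~\ref{lmm:prudent} by replacing the possibly non-adapted process $X_n$ with the adapted projection $\tilde X_n := \E[X_n \mid \F_n]$. Since $\A{\tilde X_n - X_n} \leq 2M$ almost surely, conclusions \ref{itm:crl_prudent__inf} and \ref{itm:crl_prudent__sup} for $X$ and for $\tilde X$ are equivalent, so it suffices to apply the lemma to $\tilde X_n$ and transfer. The main point requiring care is recovering the submartingale-type drift hypothesis for $\tilde X_n$ from the hypothesis on $X_n$, which is subtle because $X_n$ is not assumed $\F_n$-measurable.

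First I would confirm $\tilde X_n$ is well-defined: conditions (ii) and (iii), together with $Y_n \leq M$ and induction on $n$, give $\E[\A{X_n}] < \infty$ for every $n$, so the conditional expectation makes sense. Because $X'_n$ is $\F_n$-measurable, condition (i) propagates through conditional expectation as
\[
\A{\tilde X_n - X'_n} = \A{\E[X_n - X'_n \mid \F_n]} \leq \E[\A{X_n - X'_n} \mid \F_n] \leq M,
\]
and combining this with (i) yields $\A{\tilde X_n - X_n} \leq 2M$.

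Next I would check that $(\tilde X_n, Y_n)$ satisfies the hypotheses of Lemma~\ref{lmm:prudent} with new constant $M' := 4M$. Integrability of $\tilde X_0$ and $Y_n \in [0, 4M]$ are immediate. The increment bound follows from the triangle inequality and (iii): for $n \leq m$,
\[
\A{\tilde X_n - \tilde X_m} \leq \A{\tilde X_n - X_n} + \A{X_n - X_m} + \A{X_m - \tilde X_m} \leq \sum_{l=n}^{m-1} Y_l + 4M.
\]
The drift condition is the crux: by the tower property $\E[\tilde X_{n+1} \mid \F_n] = \E[X_{n+1} \mid \F_n]$, and condition (iv) combined with linearity gives
\[
\E[X_{n+1} \mid \F_n] = \E[X_n \mid \F_n] + \E[X_{n+1} - X_n \mid \F_n] \geq \tilde X_n + \alpha Y_n.
\]

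Applying Lemma~\ref{lmm:prudent} to $\tilde X_n$ with constants $\alpha$ and $M'=4M$ then yields $\inf_n \tilde X_n > -\infty$ almost surely and the implication $\sup_n \tilde X_n < +\infty \Rightarrow \sum_n Y_n < +\infty$ almost surely; transferring through $\A{\tilde X_n - X_n} \leq 2M$ delivers conclusions \ref{itm:crl_prudent__inf} and \ref{itm:crl_prudent__sup} for $X$. The conceptual content of the whole argument is the observation that the adapted proxy $X'_n$ is only used as a witness that $\tilde X_n = \E[X_n \mid \F_n]$ stays within a constant of $X_n$; once this is in hand, everything else is inherited for free from the tower property.
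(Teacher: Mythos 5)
Your proposal is correct and follows essentially the same approach as the paper's proof: define the adapted proxy $\E[X_n \mid \F_n]$, bound its distance from $X_n$ via the witness $X'_n$, verify the hypotheses of Lemma~\ref{lmm:prudent} with constant $4M$ using the tower property, and transfer the conclusions back. The only (welcome) addition is your explicit remark that conditions (ii) and (iii) yield $\E[\A{X_n}] < \infty$ for all $n$, ensuring the conditional expectations are well-defined, which the paper leaves implicit.
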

\end{samepage}

\begin{proof}

Define ${X''_n:= \E\left[X_n \mid \F_n\right]}$. We have

$$\E\left[\A{X''_0}\right] = \E\left[\A{\E\left[X_0 \mid \F_0\right]}\right] \leq \E\left[\E\left[\A{X_0} \mid \F_0\right]\right] = \E\left[\A{X_0}\right] < \infty$$

$$\A{X''_n - X'_n} = \A{\E\left[X_n \mid \F_n\right] - X'_n} = \A{\E\left[X_n - X'_n \mid \F_n\right]} \leq M$$

$$\A{X''_n - X_n} \leq \A{X''_n - X'_n} + \A{X'_n - X_n} \leq 2M$$

$$\forall m \geq n: \A{X''_{m}-X''_n} \leq \A{X_{m}-X_n} + 4M \leq \sum_{l=n}^{n-1} Y_l + 4M$$

$$\E\left[X''_{n+1}-X''_n \mid \F_n\right] = \E\left[\E\left[X_{n+1} \mid \F_{n+1}\right]-\E\left[X_n \mid \F_n\right] \mid \F_n\right] = \E\left[X_{n+1}-X_n \mid \F_n\right] \geq \alpha Y_n$$

Applying Lemma~\ref{lmm:prudent} to $X''$ and using ${\A{X''_n - X_n} \leq 2M}$, we get the desired result.
\end{proof}

We will also need the following property of the operator $\PG$, reflecting that at any given moment, the uncertainty of the gambler $\PG{S}$ about the total worth of its bets is bounded.

\begin{samepage}
\begin{proposition}
\label{prp:pg_bounded_uncertainty}

Given any gambling strategy $S$, we have

\begin{equation}
\SV \PG{S}_n - \SVM \PG{S}_n \leq 2 \max_{m < n} \sup_{y \in \Ob^m} \N{S_m\left(y\right)} + 1
\end{equation}

In particular, the left hand side is uniformly bounded.

\end{proposition}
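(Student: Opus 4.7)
My plan is to prove the proposition by induction on $n$, establishing the stronger pointwise oscillation bound
\[
\SVX \PG{S}_n(y,\BM) - \SVM \PG{S}_n(y,\BM) \leq 2 M_n + 1,
\qquad M_n := \max_{m < n} \sup_{y \in \Ob^m} \N{S_m(y)}.
\]
Since $\SV \PG{S}_n(y,\BM;x) \leq \SVX \PG{S}_n(y,\BM)$ for $x \in y\OO$, this implies the displayed inequality; and the \enquote{in particular} addendum then follows from the fact that uniform boundedness of $S$ as a gambling strategy gives $\sup_n M_n < \infty$. The base case $n = 0$ is immediate: the defining sum is empty, so $\SV \PG{S}_0 \equiv 0$.

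For the inductive step, I would use the one-step telescoping identity
\[
\SV \PG{S}_{n+1}(y, \BM; x) = \SV \PG{S}_n(y_{:n-1}, \BM_{:n}; x) + \bigl(\V \PG{S}_n(y, \BM_{:n})\bigr)(\BM_n, x),
\]
so that taking $\max$ and $\min$ over $x \in y\OO$ and subtracting bounds the stage-$(n+1)$ oscillation by the sum of the oscillations over $y\OO$ of the two summands on the right. The first oscillation is dominated by the oscillation over the larger set $y_{:n-1}\OO$, which equals $\SVX \PG{S}_n(y_{:n-1},\BM_{:n}) - \SVM \PG{S}_n(y_{:n-1},\BM_{:n})$. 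The second is at most $2\N{\PG{S}_n}_\infty$, because $\V\beta(\mu,x) - \V\beta(\mu,x') = \beta(\mu,x) - \beta(\mu,x')$ (centering by $\E_\mu$ subtracts an $x$-independent constant), and $\N{\PG{S}_n}_\infty \leq \N{S_n}_\infty$ since $\PG{S}_n$ is either $S_n$ or $0$.

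The argument then closes via a case split on which branch of the definition of $\PG{S}$ is triggered at stage $n$. In the \enquote{play} branch the triggering condition is exactly $\SVX \PG{S}_n - \SVM \PG{S}_n \leq 1$, so the first oscillation contributes at most $1$ and the second at most $2\N{S_n}_\infty \leq 2M_{n+1}$, giving a total $\leq 1 + 2M_{n+1}$. In the \enquote{no play} branch $\PG{S}_n \equiv 0$, so the second oscillation vanishes and the inductive hypothesis gives the first $\leq 2M_n + 1 \leq 2M_{n+1} + 1$. Either way the induction closes. I expect no conceptual obstacle beyond the bookkeeping of index shifts (lengths of $y$ and $\BM$, and the restriction $y\OO \subseteq y_{:n-1}\OO$); the real content is just the observation that $\PG$'s \enquote{play/don't play} rule is exactly what caps the per-step growth of the uncertainty by a single unit.
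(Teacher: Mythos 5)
Your proof is correct and follows essentially the same inductive structure as the paper's: a case split on the trigger condition $\SVX\PG{S}_n - \SVM\PG{S}_n \le 1$, with the ``no-play'' case passing the oscillation through unchanged and the ``play'' case adding at most $2\N{S_n(y)}$ to a prior oscillation of at most $1$. Phrasing the target as a bound on $\SVX - \SVM$ (rather than $\SV - \SVM$ pointwise) and framing the step as an oscillation subadditivity over $y\OO \subseteq y_{:n-1}\OO$ are equivalent cosmetic variations, not a genuinely different route.
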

\end{samepage}

\begin{proof}

We prove by induction. For $n=0$ the claim is obvious. Consider any ${n \in \Nats}$, $y \in \Ob^{n}$ and $\BM \in \PMO^{n+1}$. First, assume that

$$\SVX \PG{S}_n\left(y_{:n-1},\BM_{:n}\right) - \SVM \PG{S}_n\left(y_{:n-1},\BM_{:n}\right) > 1$$

Then, by definition of ${\PG{S}}$, ${\PG{S}_n}\left(y,\BM_{:n}\right) \equiv 0$ and therefore

$$\SV \PG{S}_{n+1}\left(y,\BM\right)=\SV \PG{S}_n\left(y_{:n-1},\BM_{:n}\right)$$

$$\SVM \PG{S}_{n+1}\left(y,\BM\right) \geq \SVM \PG{S}_n\left(y_{:n-1},\BM_{:n}\right)$$

$$\SV \PG{S}_{n+1}\left(y,\BM\right) - \SVM \PG{S}_{n+1}\left(y,\BM\right) \leq \SV \PG{S}_n\left(y_{:n-1},\BM_{:n}\right) - \SVM \PG{S}_n\left(y_{:n-1},\BM_{:n}\right) $$

By the induction hypothesis, we get

$$\SV \PG{S}_{n+1}\left(y,\BM\right) - \SVM \PG{S}_{n+1}\left(y,\BM\right) \leq  2 \max_{m < n} \sup_{y \in \Ob^m} \N{S_m\left(y\right)}  + 1 \leq \max_{m \leq n} \sup_{y \in \Ob^m} \N{S_m\left(y\right)}  + 1$$

Now, assume that 

$$\SVX \PG{S}_n\left(y_{:n-1},\BM_{:n}\right) - \SVM \PG{S}_n\left(y_{:n-1},\BM_{:n}\right) \leq 1$$

Then, ${\PG{S}_n}\left(y,\BM_{:n}\right) = {S_n}\left(y\right)$ and therefore

$$\SV \PG{S}_{n+1}\left(y,\BM\right)=\SV \PG{S}_n\left(y_{:n-1},\BM_{:n}\right) + \left(\V{S}_n\left(y\right)\right)\left(\BM_n\right)$$

We get

$$\SV \PG{S}_{n+1}\left(y,\BM\right) - \SVM \PG{S}_{n+1}\left(y,\BM\right) \leq \SVX \PG{S}_n\left(y_{:n-1},\BM_{:n}\right) - \SVM \PG{S}_n\left(y_{:n-1},\BM_{:n}\right) + 2 \N{S_n\left(y\right)}$$

By the assumption, the first two terms total to ${\leq 1}$, yielding the desired result.
\end{proof}

As a final ingredient for the proof of Theorem~\ref{thm:prudent}, we will need another property of $\PG$, namely that when the total magnitude of all bets made is finite, the gambler eventually starts \enquote{playing} in every round.

\begin{samepage}
\begin{proposition}
\label{prp:when_bounded_play}

Let $S$ be a gambling strategy, $F$ a forecaster and $x \in \OO$. Assume that

\begin{equation}
\sum_{n=0}^\infty {\left(\max_{x_{:n}\OO} \overline{\PG{S}}^F_n\left(x_{:n}\right)-\min_{x_{:n}\OO} \overline{\PG{S}}^F_n\left(x_{:n}\right)\right)} < +\infty
\end{equation}

Then, for any $n \gg 0$, $\PG{S}^F_n\left(x_{:n}\right)=S_n\left(x_{:n}\right)$.

\end{proposition}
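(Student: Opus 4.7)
My plan is to reduce the statement to proving $\tau_n \leq 1$ for all sufficiently large $n$, where $\tau_n := \SVX \PG{S}^F_n(x_{:n-1}) - \SVM \PG{S}^F_n(x_{:n-1})$. Setting $\BM := (F_0, F_1(x_{:1}), \ldots, F_{n-1}(x_{:n-1}))$, the definition of $G^F_n$ gives $\PG{S}^F_n(x_{:n}) = \PG{S}_n(x_{:n}, \BM)$, and the recursion defining $\PG{S}$ makes this equal to $S_n(x_{:n})$ precisely when the guard $\SVX \PG{S}_n(x_{:n-1}, \BM) - \SVM \PG{S}_n(x_{:n-1}, \BM) \leq 1$ holds, i.e. when $\tau_n \leq 1$. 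So it suffices to prove $\tau_n \leq 1$ eventually.

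To bound $\tau_n$, I will expand $\SV \PG{S}^F_n(x_{:n-1}) = \sum_{m<n} \overline{\V \PG{S}}^F_m(x_{:m}) \in \CO$ and apply subadditivity of $\max - \min$ together with the nesting $x_{:n-1}\OO \subseteq x_{:m}\OO$ for $m \leq n-1$ to obtain
\begin{equation*}
\tau_n \leq \sum_{m<n} \left( \max_{x_{:n-1}\OO} \overline{\V \PG{S}}^F_m(x_{:m}) - \min_{x_{:n-1}\OO} \overline{\V \PG{S}}^F_m(x_{:m}) \right) \leq \sum_{m<n} g_m,
\end{equation*}
where $g_m := \max_{x_{:m}\OO} \overline{\V \PG{S}}^F_m(x_{:m}) - \min_{x_{:m}\OO} \overline{\V \PG{S}}^F_m(x_{:m})$. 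Because $\V$ only subtracts the $F_m(x_{:m})$-expectation (a constant in the $\OO$-argument), $g_m$ coincides with the summand in the hypothesis, so $\sum_{m=0}^\infty g_m < \infty$.

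The finish is a standard $\epsilon/2$ split. I fix $\epsilon \in (0, 1/2)$ and choose $K$ with $\sum_{m \geq K} g_m < \epsilon$. For $n > K$, the tail block $\sum_{K \leq m < n} \bigl( \max_{x_{:n-1}\OO} \overline{\V \PG{S}}^F_m(x_{:m}) - \min_{x_{:n-1}\OO} \overline{\V \PG{S}}^F_m(x_{:m}) \bigr)$ is at most $\sum_{m \geq K} g_m < \epsilon$ by the same nesting. The early block consists of the $K$ terms with $m < K$, each the oscillation of a \emph{fixed} continuous function $\overline{\V \PG{S}}^F_m(x_{:m}) \in \CO$ over the shrinking sets $x_{:n-1}\OO$. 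By the earlier observation that $\max_{x' \in x_{:n-1}\OO} \rho(x', x) \to 0$ and uniform continuity on the compact Polish space $\OO$, each such term tends to $0$ as $n \to \infty$; with only $K$ summands, the early block is also $< \epsilon$ once $n$ is large enough. Therefore $\tau_n < 2\epsilon < 1$ eventually, as required.

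There is no conceptual obstacle—the argument is a direct oscillation estimate relying only on continuity of the bets and the summability hypothesis. The one bookkeeping subtlety is recognizing that $\overline{\PG{S}}^F_m$ (which features in the hypothesis) and $\overline{\V \PG{S}}^F_m$ (which features in $\SV$) differ by an $x$-independent constant, so the $\max - \min$ matches $g_m$. No martingale or fixed-point machinery is required here; this proposition is the purely analytic ingredient to be combined with Corollary~\ref{crl:prudent} in order to conclude Theorem~\ref{thm:prudent}.
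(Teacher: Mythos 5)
Your proof is correct and essentially mirrors the paper's argument: both split the sum $\SV\PG{S}^F_n$ into a fixed early block (whose oscillation over the shrinking cylinders $x_{:n}\OO$ tends to $0$ by uniform continuity) and a tail block controlled by the summability hypothesis, using that $\overline{\PG{S}}^F_m$ and $\overline{\V\PG{S}}^F_m$ differ by a constant. The only cosmetic difference is that the paper bounds the oscillation of the summed function $\SV\PG{S}^F_{n_0}(x_{:n_0-1})$ directly, whereas you apply subadditivity of $\max-\min$ and bound the finitely many early terms individually.
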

\end{samepage}

\begin{proof}

Choose $n_0 \in \Nats$ s.t. 

$$\sum_{n=n_0}^\infty {\left(\max_{x_{:n}\OO} \overline{\PG{S}}^F_n\left(x_{:n}\right)-\min_{x_{:n}\OO} \overline{\PG{S}}^F_n\left(x_{:n}\right)\right)} < \frac{1}{2}$$

Since $\overline{\PG{S}}^F_n\left(x_{:n}\right), \overline{\V{\PG{S}}}^F_n\left(x_{:n}\right) \in \CO$ differ by a constant function, we have

$$\sum_{n=n_0}^\infty {\left(\max_{x_{:n}\OO} \overline{\V{\PG{S}}}^F_n\left(x_{:n}\right)-\min_{x_{:n}\OO} \overline{\V{\PG{S}}}^F_n\left(x_{:n}\right)\right)} < \frac{1}{2}$$

In particular, for any $n \geq n_0$

$$\sum_{m=n_0}^{n-1} {\left(\max_{x_{:n}\OO} \overline{\V{\PG{S}}}^F_m\left(x_{:m}\right)-\min_{x_{:n}\OO} \overline{\V{\PG{S}}}^F_m\left(x_{:m}\right)\right)} < \frac{1}{2}$$

On the other hand, it is easy to see that there is $n_1 \geq n_0$ s.t. for any $n \geq n_1$

$$\max_{x_{:n}\OO} \SV{\PG{S}}_{n_0}^F\left(x_{:n_0 - 1}\right) - \min_{x_{:n}\OO} \SV{\PG{S}}_{n_0}^F\left(x_{:n_0 - 1}\right) < \frac{1}{2}$$

Taking the sum of the last two inequalities, we conclude that for any $n \geq n_1$

$$\SVX{\PG{S}}_{n}^F\left(x_{:n-1}\right) - \SVM{\PG{S}}_{n}^F\left(x_{:n-1}\right) < 1$$

Using the definition of $\PG{S}$, we get the desired result.
\end{proof}

\begin{proof}[Proof of Theorem~\ref{thm:prudent}]

We regard ${\OO}$ as a probability space using the ${\sigma}$-algebra $\F$ of universally measurable sets and the probability measure ${\mu^*}$. For any ${n \in \Nats}$, we define ${\F_n \subseteq \F}$ and $X_n,X'_n,Y_n: \OO \rightarrow \Reals$ by 

$$\F_n := \{A\Ob^\omega \mid A \subseteq \Ob^n \text{ universally measurable}\}$$

$$X_n\left(x\right):= \SV \PG{S}^F_{n}\left(x_{:n-1},x\right)$$

$$X'_n\left(x\right):= \SVM \PG{S}^F_{n}\left(x_{:n-1}\right)$$

$$Y_n\left(x\right):=\max_{x_{:n}\OO} \overline{\PG{S}}^F_{n}\left(x_{:n}\right) - \min_{x_{:n}\OO} \overline{\PG{S}}^F_{n}\left(x_{:n}\right)$$

Clearly, ${\F}$ is a filtration of ${\OO}$, ${X,X',Y}$ are stochastic processes and ${X',Y}$ are adapted to ${\F}$. ${S}$ is uniformly bounded, therefore ${\PG{S}}$  is uniformly bounded and so is ${Y}$. Obviously, ${Y}$ is also non-negative.

By Proposition~\ref{prp:pg_bounded_uncertainty}, ${\A{X_n-X'_n}}$ are uniformly bounded. ${X_0}$ vanishes and in particular ${\E\left[\A{X_0}\right] < \infty}$. We have

$$\A{X_{n+1}\left(x\right)-X_n\left(x\right)} = \A{\overline{\V{\PG{S}}}_{n}^F\left(x_{:n},x\right)} \leq Y_n\left(x\right)$$

Moreover:

$$\E\left[X_{n+1} - X_n \mid \F_n\right]\left(x\right) = \E_{x' \sim \mu^*}\left[\overline{\V{\PG{S}}}_{n}^F\left(x_{:n},x'\right) \mid x' \in x_{:n}\OO\right]$$

(As before, the notation on the right hand side signifies the use of a regular conditional probability)

$$\E\left[X_{n+1} - X_n \mid \F_n\right]\left(x\right) = \E_{x' \sim \mu^*}\left[\overline{\PG{S}}_{n}^F\left(x_{:n},x'\right) \mid x' \in x_{:n}\OO\right] - \E_{x' \sim F_n\left(x_{:n}\right)}\left[\overline{\PG{S}}_{n}^F\left(x_{:n},x'\right)\right]$$

Let $\alpha > 0$ be as in Definition~\ref{def:prudent}. By definition of $\PG{S}$, $\overline{\PG{S}}_{n}^F\left(x_{:n}\right)$ is equal to either $S_n\left(x_{:n};F_n\left(x_{:n}\right)\right)$ or 0. In either case, we get that for $\mu^*$-almost any $x \in \OO$

$$\E\left[X_{n+1} - X_n \mid \F_n\right]\left(x\right) \geq \alpha \left(\max_{x_{:n}\OO} \overline{\PG{S}}_{n}^F\left(x_{:n}\right) - \min_{x_{:n}\OO} \overline{\PG{S}}_{n}^F\left(x_{:n}\right)\right) = \alpha Y_n\left(x\right)$$

By Corollary~\ref*{crl:prudent}\ref{itm:crl_prudent__inf}, for $\mu^*$-almost any $x \in \OO$

\[\inf_{n \in \Nats} \SVM \PG{S}^F_{n}\left(x_{:n-1}\right) > -\infty\]

Since $F$ dominates $\PG{S}$, it follows that

\[\sup_{n \in \Nats} \SVX \PG{S}^F_{n}\left(x_{:n-1}\right) < +\infty\] 

By Corollary~\ref*{crl:prudent}\ref{itm:crl_prudent__sup}, $\sum_n Y_n\left(x\right) < \infty$. By Proposition~\ref{prp:when_bounded_play}, it follows that for any $n \gg 0$, $\PG{S}^F_n\left(x_{:n}\right) = S_n\left(x_{:n}\right)$. We get, for $n \gg 0$

\begin{align*}
&\E_{\mu^*}\left[S_n\left(x_{:n};F_n\left(x_{:n}\right)\right) \mid x_{:n}\OO\right]-\E_{F_n\left(x_{:n}\right)}\left[S_n\left(x_{:n};F_n\left(x_{:n}\right)\right)\right] = \\ 
&\E_{\mu^*}\left[\overline{\PG{S}}^F_n\left(x_{:n}\right) \mid x_{:n}\OO\right]-\E_{F_n\left(x_{:n}\right)}\left[\overline{\PG{S}}^F_n\left(x_{:n}\right)\right] \leq Y_n\left(x\right)
\end{align*}

$$\sum_{n=0}^\infty \left(\E_{\mu^*}\left[S_n\left(x_{:n};F_n\left(x_{:n}\right)\right) \mid x_{:n}\OO\right]-\E_{F_n\left(x_{:n}\right)}\left[S_n\left(x_{:n};F_n\left(x_{:n}\right)\right)\right]\right) < +\infty$$
\end{proof}

\section{Appendix: Proof of Main Theorem}
\label{sec:construction}

In order to prove Theorem~\ref{thm:main}, we will need, given an incomplete model $M \subseteq \PMO$ and $\epsilon > 0$, a gambling strategy $S^{M\epsilon}$ s.t. $S^{M\epsilon}$ is $\mu$-prudent for any $\mu \in M$ and s.t. Theorem~\ref{thm:prudent} implies that any $F$ that dominates $S^{M\epsilon}$ satisfies equation~\ref{eqn:thm_main} \enquote{within $\epsilon$.} This motivates the following definition and lemma. We remind that, given $x \in \Reals$, $x_+:=\max(x,0)$.

\begin{samepage}
\begin{definition}

Consider $X$ a compact Polish metric space, $M \subseteq \PM\left(X\right)$ convex and $r_0 > 0$. $\beta \in \Gm\left(X\right)$ is said to be \emph{$\left(M,r_0\right)$-savvy} when for any $\mu \in \PM\left(X\right)$, denoting $r_\mu:=\DKR\left(\mu,M\right)$:

\begin{enumerate}[i.]

\item $\N{\beta\left(\mu\right)} \leq \left(r_\mu - r_0\right)_+$
\item $\forall \nu \in M: \E_\nu\left[\beta\left(\mu\right)\right] - \E_\mu\left[\beta\left(\mu\right)\right] \geq \frac{1}{2}\left(r_\mu - r_0\right) r_\mu$

\end{enumerate}

\end{definition}
\end{samepage}

\begin{samepage}
\begin{lemma}
\label{lmm:savvy}

Consider $X$ a compact Polish metric space, $Y$ a compact Polish space and $M: Y \rightarrow 2^{\PM\left(X\right)}$. Denote

\begin{equation*}
\Gr{M}:=\{\left(y, \mu\right) \in Y \times \PM\left(X\right) \mid \mu \in M\left(y\right)\}
\end{equation*}

Assume that $\Gr{M}$ is closed and that $M\left(y\right)$ is convex for any $y \in Y$. Regard $Y$ as a measurable space using the $\sigma$-algebra of universally measurable sets and regard $\Gm\left(X\right)$ as a measurable space using the $\sigma$-algebra of Borel sets. Then, for any $\epsilon > 0$, there exists $S^\epsilon: Y \rightarrow \Gm\left(X\right)$ measurable s.t. for all $y \in Y$, $S^\epsilon\left(y\right)$ is $\left(M\left(y\right),\epsilon\right)$-savvy.

\end{lemma}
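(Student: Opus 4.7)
The plan is to build
\[S^\epsilon(y)(\mu, x) := -\bigl(\DKR(\mu, M(y)) - \epsilon\bigr)_+ \cdot f^y(\mu)(x),\]
where $f^y(\mu)$ lies in the Lipschitz unit ball $B := \{f \in \Lp(X,\rho) : \N{f}_\rho \le 1\}$ and approximately realizes the Kantorovich--Rubinstein separation of $\mu$ from $M(y)$. Applying Sion's minimax theorem to the bilinear pairing $(\nu, f) \mapsto \E_\mu[f] - \E_\nu[f]$ on the convex compact sets $M(y)$ and $B$ (the latter compact by Arzelà--Ascoli), I get
\[\DKR(\mu, M(y)) = \max_{f \in B}\, g_y(\mu, f), \qquad g_y(\mu, f) := \E_\mu[f] - \sup_{\nu \in M(y)} \E_\nu[f],\]
so multiplying any near-maximizer of $g_y(\mu,\cdot)$ by $(\DKR(\mu, M(y)) - \epsilon)_+$ will produce the required magnitude and separation in the definition of $(M(y), \epsilon)$-savviness.

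\textbf{Continuity in $\mu$ for fixed $y$.} Write $r_\mu^y := \DKR(\mu, M(y))$. The set-valued map $\Phi^y(\mu) := \{f \in B : g_y(\mu, f) \ge r_\mu^y - \epsilon/2\}$ has nonempty closed convex values and is lower hemicontinuous in $\mu$: $g_y(\mu, \cdot)$ is continuous in $\mu$ and $r_\mu^y$ is $1$-Lipschitz in $\mu$ w.r.t.\ $\DKR$, so any $f_0$ satisfying the defining inequality strictly at $\mu_0$ continues to do so in a neighbourhood, while the boundary case is handled by convex-combining $f_0$ with a true maximizer $f^*$ of $g_y(\mu_0, \cdot)$ (using concavity of $g_y$ in $f$). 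Michael's selection theorem then furnishes a continuous $f^y : \PM(X) \to B \subset C(X)$. Setting $S^\epsilon(y)(\mu, x) := -(r_\mu^y - \epsilon)_+ f^y(\mu)(x)$, joint continuity in $(\mu, x)$ follows from continuity of $r^y_\mu$, sup-norm continuity of $f^y$ in $\mu$, and equi-Lipschitzness of $B$ in $x$. The norm bound $\N{S^\epsilon(y)(\mu)} \le (r_\mu^y - \epsilon)_+$ follows since $\N{f}_\rho \le 1$ forces sup-norm $\le 1$, and the separation inequality reduces via $g_y(\mu, f^y(\mu)) \ge r_\mu^y - \epsilon/2$ to the elementary estimate $(r_\mu^y - \epsilon)_+ (r_\mu^y - \epsilon/2) \ge \tfrac{1}{2}(r_\mu^y - \epsilon) r_\mu^y$ (trivial when $r_\mu^y \le \epsilon$, and equivalent to $r_\mu^y \ge \epsilon$ otherwise).

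\textbf{Measurability in $y$ — the main obstacle.} Let $\Psi(y) \subseteq \Gm(X)$ denote the set of $(M(y), \epsilon)$-savvy bets; by the previous step it is nonempty, and it is also closed and convex. Using that (i) the savvy conditions are continuous in $\mu$ for fixed $(y, \beta)$, so the quantifier $\forall \mu$ may be replaced by a countable conjunction over a countable dense subset of $\PM(X)$, and (ii) by Berge's theorem applied to the upper hemicontinuous compact-valued $M$, the map $(y, \mu, \beta) \mapsto \inf_{\nu \in M(y)} \E_\nu[\beta(\mu)]$ is lower semicontinuous, which absorbs the quantifier $\forall \nu \in M(y)$, one shows that $\Gr \Psi \subseteq Y \times \Gm(X)$ is Borel. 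The Kuratowski--Ryll--Nardzewski selection theorem (equivalently, Jankov--von Neumann), applied in the universally measurable $\sigma$-algebra on $Y$, then produces the desired measurable $S^\epsilon$. This measurability step is the genuine difficulty: because $M$ is only upper hemicontinuous, $r_\mu^y$ is only lower semicontinuous in $y$ and various derived quantities such as $(r_\mu^y - \epsilon)\, r_\mu^y$ are not semicontinuous, so the non-canonical Michael selections of the previous step cannot be glued continuously or even Borel-measurably in $y$. The argument necessarily passes through the universally measurable $\sigma$-algebra on $Y$ — which is precisely why the paper equips $\Ob^n$ with universally measurable sets, as flagged in the footnote of the setup.
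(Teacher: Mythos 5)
Your proposal is correct in substance and the high-level architecture matches the paper's: establish existence of a savvy bet for each fixed $y$ via a continuous selection argument, then obtain measurable dependence on $y$ by showing the graph of the savvy-bet correspondence is Borel and invoking Kuratowski--Ryll-Nardzewski with the universally measurable $\sigma$-algebra (your observation that universal measurability is forced here, and that this is why the paper sets things up that way, is exactly right). Where you genuinely diverge is in the fixed-$y$ existence step. The paper builds the separating functional via Hahn--Banach separation in $\Lp(X,\rho)'$, pulls it back to $\Lp(X,\rho)$ through the Goldstine theorem (Lemma~\ref{lmm:double_dual}), and applies the Yannelis--Prabhakar selection theorem with strict inequalities and open preimages. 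You instead apply Sion's minimax theorem on the compact convex pair $(M(y), B)$ to write $\DKR(\mu, M(y))$ as a maximum over the Lipschitz unit ball, then select a near-maximizer of $g_y(\mu,\cdot)$ continuously in $\mu$ via the classical Michael theorem (closed convex values, lower hemicontinuity established by the convex-combination trick). This is cleaner in that it avoids the double-dual detour entirely and produces the bet in the explicit product form $-(r^y_\mu - \epsilon)_+ f^y(\mu)$; the elementary estimate $(r^y_\mu - \epsilon)_+(r^y_\mu - \epsilon/2) \geq \tfrac12(r^y_\mu - \epsilon) r^y_\mu$ checks out, as does the sup-norm bound via $\max\lvert f\rvert \leq \N{f}_\rho$. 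Your measurability argument is sketchier than the paper's explicit $G_\delta$ construction, but the two ingredients you cite (closedness in $\mu$ for fixed $(y,\beta)$ permitting reduction to a countable dense set, and lower semicontinuity of $(y,\mu,\beta) \mapsto \inf_{\nu\in M(y)}\E_\nu[\beta(\mu)]$ via upper hemicontinuity of $M$) do suffice to show the graph is Borel, after which projection to an analytic set gives weak universal measurability and KRN applies.

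One small gap you should patch: the lemma does not assume $M(y) \neq \varnothing$, and your construction via Sion minimax breaks down when $M(y)$ is empty (then $r^y_\mu = +\infty$ and $\Phi^y(\mu)$ is ill-defined). The paper handles this explicitly in Corollary~\ref{crl:savvy}: when $M(y)=\varnothing$ the savviness conditions are vacuous, so $\beta=0$ works. You would need to define $S^\epsilon(y):=0$ on the (closed, hence Borel) set $\{y : M(y)=\varnothing\}$ and observe that gluing this to the selection on the complement preserves universal measurability.
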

\end{samepage}

In order to prove Lemma~\ref{lmm:savvy}, we will need a few other technical lemmas.

\begin{samepage}
\begin{lemma}
\label{lmm:double_dual}

Consider $X$ a compact Polish space, $\rho$ a metrization of $X$, $\varphi \in \Lp\left(X,\rho\right)''$ and $\epsilon > 0$. Then, there exists $f \in \Lp\left(X,\rho\right)$ s.t.

\begin{enumerate}[i.]

\item $\N{f}_\rho \leq \N{\varphi}$
\item $\forall \mu \in \PM\left(X\right): \A{\E_\mu\left[f\right] - \varphi\left(\mu\right)} < \epsilon$

\end{enumerate}

\end{lemma}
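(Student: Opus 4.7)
The plan is to apply \emph{Goldstine's theorem} combined with a compactness-plus-net argument on $\PM(X)$. Goldstine asserts that the canonical image of the closed unit ball of $\Lp(X,\rho)$ is weak-$*$ dense in the closed unit ball of $\Lp(X,\rho)''$; rescaling, for any finite collection $\psi_1,\ldots,\psi_k \in \Lp(X,\rho)'$ and any $\delta > 0$ there exists $f \in \Lp(X,\rho)$ with $\N{f}_\rho \leq \N{\varphi}$ and $\A{\psi_i(f) - \varphi(\psi_i)} < \delta$ for each $i$. The nontrivial task is to upgrade this finite-family statement to a bound uniform over the whole set $\PM(X) \subseteq \Lp(X,\rho)'$.

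The key enabling fact is that $\PM(X)$ sits isometrically inside $\Lp(X,\rho)'$ via $\mu \mapsto L_\mu$, where $L_\mu(g):=\E_\mu[g]$: directly from the definition of $\DKR$ (the supremum there may be replaced by one with absolute values by sign-flipping $f$), one has $\N{L_\mu - L_\nu}_{\Lp(X,\rho)'} = \DKR^\rho(\mu,\nu)$. Since $X$ is compact, $\DKR^\rho$ metrizes the weak topology on $\PM(X)$ and the latter is compact, so the image of $\PM(X)$ in $\Lp(X,\rho)'$ is norm-compact, hence totally bounded. Moreover, because $\varphi$ is bounded-linear on $\Lp(X,\rho)'$, the map $\mu \mapsto \varphi(\mu)$ is Lipschitz with constant $\N{\varphi}$ in $\DKR^\rho$:
\begin{equation*}
\A{\varphi(\mu) - \varphi(\nu)} = \A{\varphi(L_\mu - L_\nu)} \leq \N{\varphi}\cdot\N{L_\mu - L_\nu}_{\Lp(X,\rho)'} = \N{\varphi}\,\DKR^\rho(\mu,\nu).
\end{equation*}

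Given these ingredients, I would finish as follows. If $\N{\varphi} = 0$ take $f = 0$. Otherwise set $\eta := \epsilon/(4\N{\varphi})$, pick a finite $\eta$-net $\mu_1,\ldots,\mu_k \in \PM(X)$ by total boundedness, and apply the scaled form of Goldstine to the functionals $L_{\mu_1},\ldots,L_{\mu_k}$ to obtain $f \in \Lp(X,\rho)$ with $\N{f}_\rho \leq \N{\varphi}$ and $\A{\E_{\mu_i}[f] - \varphi(\mu_i)} < \epsilon/4$ for each $i$. For arbitrary $\mu \in \PM(X)$, choose $\mu_i$ with $\DKR^\rho(\mu,\mu_i) < \eta$; the triangle inequality then gives
\begin{equation*}
\A{\E_\mu[f] - \varphi(\mu)} \leq \A{\E_\mu[f] - \E_{\mu_i}[f]} + \A{\E_{\mu_i}[f] - \varphi(\mu_i)} + \A{\varphi(\mu_i) - \varphi(\mu)} \leq \N{\varphi}\eta + \tfrac{\epsilon}{4} + \N{\varphi}\eta = \tfrac{3\epsilon}{4} < \epsilon.
\end{equation*}

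The main obstacle is not the mechanics but cleanly verifying the two inputs: (a) the identification $\N{L_\mu - L_\nu}_{\Lp(X,\rho)'} = \DKR^\rho(\mu,\nu)$ together with compactness of $(\PM(X),\DKR^\rho)$ for compact $X$, and (b) the scaled version of Goldstine's theorem for $\Lp(X,\rho)$, applied only to a \emph{finite} family of functionals. The whole point is that the compactness of $\PM(X)$ in the norm topology of $\Lp(X,\rho)'$ converts the pointwise (finite-family) approximation that Goldstine provides into the required uniform approximation on $\PM(X)$.
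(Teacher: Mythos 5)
Your proof is correct and follows essentially the same route as the paper's: normalize (or carry) $\N{\varphi}$, extract a finite $\DKR$-net from $\PM(X)$ by compactness, apply Goldstine's theorem to the corresponding finite family of functionals $L_{\mu_i}$, and pass from the net to all of $\PM(X)$ by the triangle inequality using the Lipschitz bounds $\A{\E_\mu[f]-\E_\nu[f]}\le\N{f}_\rho\,\DKR^\rho(\mu,\nu)$ and $\A{\varphi(\mu)-\varphi(\nu)}\le\N{\varphi}\,\DKR^\rho(\mu,\nu)$. The only differences are cosmetic choices of constants and your explicit separate treatment of $\N{\varphi}=0$.
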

\end{samepage}

\begin{proof}

Without loss of generality, assume that $\N{\varphi}=1$ (if $\N{\varphi}=0$ the theorem is trivial, otherwise we can rescale everything by a scalar). Using the compactness of $\PM\left(X\right)$, choose a finite set $A \subseteq \PM\left(X\right)$ s.t. 

$$\max_{\mu \in \PM\left(X\right)} \min_{\nu \in A}\, \DKR\left(\mu,\nu\right) < \frac{\epsilon}{4}$$

Using the Goldstine theorem, choose $f \in \Lp\left(X\right)$ s.t. $\N{f}_\rho \leq 1$ and for any $\nu \in A$ 

$$\A{\E_\nu\left[f\right] - \varphi\left(\nu\right)} < \frac{\epsilon}{2}$$

Consider any $\mu \in \PM\left(X\right)$. Choose $\nu \in A$ s.t. $\DKR\left(\mu,\nu\right) < \frac{\epsilon}{4}$. We get

$$\A{\E_\mu\left[f\right] - \varphi\left(\mu\right)} \leq \A{\E_\mu\left[f\right] - \E_\nu\left[f\right]} + \A{\E_\nu\left[f\right] - \varphi\left(\nu\right)} + \A{\varphi\left(\nu\right) - \varphi\left(\mu\right)} < \frac{\epsilon}{4} + \frac{\epsilon}{2} + \frac{\epsilon}{4} = \epsilon$$
\end{proof}

\begin{samepage}
\begin{lemma}
\label{lmm:separation}

Consider $X$ a compact Polish space, $\rho$ a metrization of $X$, $M \subseteq \PM\left(X\right)$ convex non-empty, $r_0 > 0$ and $\mu_0 \in \PM\left(X\right)$ s.t. $r:=\DKR\left(\mu_0,M\right) > r_0$. Then, there exists $f \in \Lp\left(X,\rho\right)$ s.t.

\begin{enumerate}[i.]

\item $\N{f}_\rho < r - r_0$
\item $\inf_{\nu \in M} \E_\nu\left[f\right] - \E_{\mu_0}\left[f\right] > \frac{1}{2}\left(r-r_0\right)r$

\end{enumerate}

\end{lemma}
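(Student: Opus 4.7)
The plan is to apply the geometric Hahn-Banach separation theorem inside $\Lp(X,\rho)'$, obtaining a separating functional in the double dual $\Lp(X,\rho)''$, and then descend to $\Lp(X,\rho)$ itself using Lemma~\ref{lmm:double_dual}. I would begin by identifying $\PM(X)$ with its image in $\Lp(X,\rho)'$ under $\mu \mapsto (f \mapsto \E_\mu[f])$; under this identification, the symmetry $f \leftrightarrow -f$ shows that $\DKR$ is precisely the restriction to $\PM(X)$ of the dual norm on $\Lp(X,\rho)'$.

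Consequently the open ball $B_r(\mu_0) \subseteq \Lp(X,\rho)'$ of dual-norm radius $r$ around $\mu_0$ is disjoint from $M$. Since $B_r(\mu_0)$ is open convex nonempty and $M$ is convex nonempty, the geometric Hahn-Banach theorem yields $\varphi \in \Lp(X,\rho)''$ and $c \in \Reals$ with $\varphi(\xi) < c \leq \varphi(\nu)$ for all $\xi \in B_r(\mu_0)$ and $\nu \in M$. Writing $\xi = \mu_0 + \eta$ with $\N{\eta} < r$ and taking the supremum over such $\eta$ gives $\varphi(\mu_0) + r\N{\varphi} \leq c$, so
\begin{equation*}
\inf_{\nu \in M} \varphi(\nu) - \varphi(\mu_0) \geq r\N{\varphi}.
\end{equation*}
The separation forces $\varphi \ne 0$, so after rescaling I may assume $\N{\varphi} = 1$.

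Next I would invoke Lemma~\ref{lmm:double_dual} with a small $\epsilon' > 0$ to obtain $f' \in \Lp(X,\rho)$ satisfying $\N{f'}_\rho \leq 1$ and $\A{\E_\mu[f'] - \varphi(\mu)} < \epsilon'$ for every $\mu \in \PM(X)$. Combining with the previous bound,
\begin{equation*}
\inf_{\nu \in M} \E_\nu[f'] - \E_{\mu_0}[f'] \geq r - 2\epsilon'.
\end{equation*}
Finally, set $f := \lambda f'$ with $\lambda := \tfrac{3}{4}(r - r_0)$ and pick $\epsilon' < r/8$: then $\N{f}_\rho \leq \lambda < r - r_0$, and a direct computation gives $\inf_{\nu \in M} \E_\nu[f] - \E_{\mu_0}[f] \geq \tfrac{3}{4}(r - r_0)(r - 2\epsilon') > \tfrac{1}{2}(r - r_0)r$.

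The main obstacle is that Hahn-Banach only produces a separating functional in the \emph{double} dual $\Lp(X,\rho)''$, not an honest Lipschitz function; this is precisely what the Goldstine-based Lemma~\ref{lmm:double_dual} is designed to overcome, at the cost of an arbitrarily small perturbation which the slack between $r - r_0$ and the target $\tfrac{1}{2}(r - r_0)r$ comfortably absorbs.
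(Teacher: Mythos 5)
Your proof is correct and takes essentially the same approach as the paper: apply Hahn--Banach in $\Lp(X,\rho)'$ to obtain a separating functional $\varphi \in \Lp(X,\rho)''$, then descend to a genuine Lipschitz function via Lemma~\ref{lmm:double_dual}, using the slack between $r - r_0$ and the target $\frac{1}{2}(r-r_0)r$ to absorb the approximation error. The only cosmetic difference is that you separate the open ball $B_r(\mu_0)$ from $M$ whereas the paper separates the point $\mu_0$ from the $r$-fattening of $M$; these are equivalent, and the constant bookkeeping works out in both cases.
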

\end{samepage}

\begin{proof}

Define $M^r \subseteq \Lp\left(X,\rho\right)'$ by

$$M^r:=\{\mu \in \Lp\left(X,\rho\right)' \mid \inf_{\nu \in M} \N{\mu - \nu} < r\}$$

By the Hahn-Banach separation theorem, there is $\varphi \in \Lp\left(X,\rho\right)''$ s.t. for all $\nu \in M^r$, $\varphi\left(\nu\right) > \varphi\left(\mu_0\right)$. Multiplying by a scalar, we can make sure that $\N{\varphi} = \frac{3}{4}\left(r - r_0\right)$. For any $\delta > 0$, we can choose $\zeta \in \Lp\left(X,\rho\right)'$ s.t. $\N{\zeta} < 1$ and $\varphi\left(\zeta\right) > \frac{3}{4}\left(r-r_0\right) - \delta$.  For any $\nu \in M$, $\nu - r \zeta \in M^r$ and therefore

$$\varphi\left(\nu - r \zeta\right) > \varphi\left(\mu_0\right)$$

$$\varphi\left(\nu\right) > \varphi\left(\mu_0\right) + r \varphi\left(\zeta\right) > \varphi\left(\mu_0\right) + r \left(\frac{3}{4}\left(r-r_0\right) - \delta\right)$$

Taking $\delta$ to 0 we get $\varphi\left(\nu\right) \geq \varphi\left(\mu_0\right) + \frac{3}{4} r \left(r - r_0\right)$. Applying Lemma~\ref{lmm:double_dual} for $\epsilon = \frac{1}{16} r \left(r - r_0\right)$, we get $f \in \Lp\left(X,\rho\right)$ s.t.

$$\N{f}_\rho \leq \frac{3}{4} \left(r - r_0\right) < r - r_0$$ 

$$\E_\nu\left[f\right] \geq \varphi\left(\nu\right) - \frac{1}{16} r \left(r - r_0\right) \geq \varphi\left(\mu_0\right) + \frac{11}{16} r \left(r - r_0\right) \geq \E_{\mu_0}\left[f\right] +  \frac{5}{8} r \left(r - r_0\right) > \E_{\mu_0}\left[f\right] +  \frac{1}{2} r \left(r - r_0\right)$$
\end{proof}

\begin{samepage}
\begin{lemma}
\label{lmm:savvy_outside}

Consider $X$ a compact Polish space, $\rho$ a metrization of $X$, $M \subseteq \PM\left(X\right)$ convex non-empty and $r_0 > 0$. Define $U \subseteq \PM\left(X\right)$ by

\begin{equation*}
U := \{\mu \in \PM\left(X\right) \mid \DKR\left(\mu,M\right) > r_0\}
\end{equation*}

Then, there exists $\beta': U \rightarrow \Lp\left(X,\rho\right)$ continuous s.t. for all $\mu \in U$, denoting $r_\mu:=\DKR\left(\mu,M\right)$:

\begin{enumerate}[i.]

\item $\N{\beta'\left(\mu\right)}_\rho < r_\mu - r_0$
\item $\inf_{\nu \in M} \E_\nu\left[\beta'\left(\mu\right)\right] - \E_\mu\left[\beta'\left(\mu\right)\right] > \frac{1}{2} \left(r_\mu - r_0\right) r_\mu$

\end{enumerate}

\end{lemma}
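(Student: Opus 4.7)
The plan is to construct $\beta'$ by patching together, via a partition of unity, the individual separating functions produced by Lemma~\ref{lmm:separation}. The crucial observation is that both conditions defining savviness are stable under proper convex combinations and under small perturbations of $\mu$, so local solutions can be glued without destroying either strict inequality.

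For each $\mu_0 \in U$, I will apply Lemma~\ref{lmm:separation} to obtain $f_{\mu_0} \in \Lp(X,\rho)$ with $\N{f_{\mu_0}}_\rho < r_{\mu_0} - r_0$ and $\inf_{\nu \in M} \E_\nu[f_{\mu_0}] - \E_{\mu_0}[f_{\mu_0}] > \tfrac{1}{2}(r_{\mu_0} - r_0)\, r_{\mu_0}$, where $r_\mu := \DKR(\mu,M)$. Since $\mu \mapsto r_\mu$ is $1$-Lipschitz in $\DKR$ and, for each fixed $g \in \Lp(X,\rho)$, the map $\mu \mapsto \E_\mu[g]$ is continuous in the weak topology on $\PM(X)$, both strict inequalities persist on some open neighborhood $V_{\mu_0} \subseteq U$ of $\mu_0$: on $V_{\mu_0}$ the fixed function $f_{\mu_0}$ still satisfies $\N{f_{\mu_0}}_\rho < r_\mu - r_0$ and $\inf_{\nu \in M} \E_\nu[f_{\mu_0}] - \E_\mu[f_{\mu_0}] > \tfrac{1}{2}(r_\mu - r_0)\, r_\mu$. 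Because $U$ is an open subset of the compact metrizable space $\PM(X)$ it is paracompact, so the cover $\{V_{\mu_0}\}_{\mu_0 \in U}$ admits a locally finite refinement with a subordinate continuous partition of unity $\{\psi_i\}_{i \in I}$, each $\psi_i$ supported in some $V_{\mu_i}$.

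I will then define
\[
\beta'(\mu) := \sum_{i \in I} \psi_i(\mu)\, f_{\mu_i}.
\]
Local finiteness and continuity of the $\psi_i$ make $\beta'$ continuous as a map $U \to \Lp(X,\rho)$. For any fixed $\mu \in U$, $\beta'(\mu)$ is a convex combination of those $f_{\mu_i}$ with $\psi_i(\mu) > 0$, and each such $f_{\mu_i}$ satisfies both strict inequalities at $\mu$ (since $\mu \in V_{\mu_i}$). The norm estimate then follows from $\N{\beta'(\mu)}_\rho \leq \sum_i \psi_i(\mu)\, \N{f_{\mu_i}}_\rho < r_\mu - r_0$, and the expectation estimate follows by linearity of $\E_\nu$ and $\E_\mu$ together with $\sum_i \psi_i(\mu) = 1$. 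The main obstacle, such as it is, is merely the standard paracompactness/partition-of-unity bookkeeping and checking that the slack in the two strict inequalities of Lemma~\ref{lmm:separation} survives the convex-combination step; both hand-offs are immediate, so no new analytic difficulty arises.
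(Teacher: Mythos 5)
Your proof is correct. The paper's approach is to define the multivalued map $B(\mu) := \{f \in \Lp(X,\rho) \mid \N{f}_\rho < r_\mu - r_0,\ \inf_{\nu\in M}\E_\nu[f]-\E_\mu[f] > \tfrac12(r_\mu-r_0)r_\mu\}$, verify that it is nonempty-valued (via Lemma~\ref{lmm:separation}), convex-valued, and that $B^{-1}(f)$ is open for each $f$, and then invoke the Yannelis--Prabhakar selection theorem (Theorem~\ref{thm:selection}) to obtain a continuous selection $\beta'$. What you have done is, in effect, unroll the standard partition-of-unity proof of that selection theorem in this specific instance: the ``openness'' step you carry out (each $\mu_0$ has a neighborhood $V_{\mu_0}$ on which a single $f_{\mu_0}$ remains admissible, by continuity of $\mu \mapsto r_\mu$ and $\mu\mapsto\E_\mu[f_{\mu_0}]$) is exactly the lower-semicontinuity hypothesis the paper checks, and the convex-combination step exploits the same convexity of $B(\mu)$ that the paper uses as a hypothesis of the black-boxed theorem. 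The strict inequalities do survive the averaging: the norm bound is immediate from the triangle inequality, and for the expectation bound one passes $\inf_{\nu\in M}$ inside the finite convex combination (yielding $\geq$) and then uses strictness on each term with $\psi_i(\mu)>0$, so that $\inf_\nu\bigl(\E_\nu[\beta'(\mu)]-\E_\mu[\beta'(\mu)]\bigr) \geq \sum_i\psi_i(\mu)\,\inf_\nu\bigl(\E_\nu[f_{\mu_i}]-\E_\mu[f_{\mu_i}]\bigr) > \tfrac12(r_\mu-r_0)r_\mu$. Your version is more self-contained (it avoids citing an external selection theorem at the cost of a paragraph of paracompactness bookkeeping), while the paper's is more concise and slots cleanly into its later, more elaborate selection arguments in Lemma~\ref{lmm:savvy}; the underlying idea --- separating-functional plus convexity plus gluing --- is the same.
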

\end{samepage}

\begin{proof}

Define $B: U \rightarrow 2^{\Lp\left(X,\rho\right)}$ by

$$B\left(\mu\right):=\{f \in \Lp\left(X,\rho\right) \mid \N{f}_\rho < r_\mu - r_0,\, \inf_{\nu \in M} \E_\nu\left[f\right] - \E_\mu\left[f\right] > \frac{1}{2}\left(r_\mu - r_0\right) r_\mu\}$$

By Lemma~\ref{lmm:separation}, for any $\mu \in U$, $B\left(\mu\right) \ne \varnothing$. Clearly, $B\left(\mu\right)$ is convex. Fix $f \in \Lp\left(X,\rho\right)$ and consider $B^{-1}\left(f\right):=\{\mu \in U \mid f \in B\left(\mu\right)\}$. Consider any $\mu_0 \in B^{-1}\left(f\right)$, and take $\epsilon > 0$ s.t.

\begin{enumerate}[i.]

\item $\N{f}_\rho < r_{\mu_0}  - \epsilon - r_0$
\item $\inf_{\nu \in M} \E_\nu\left[f\right] - \E_{\mu_0}\left[f\right] -\epsilon > \frac{1}{2}\left(r_{\mu_0} + \epsilon  - r_0\right) \left(r_{\mu_0} + \epsilon\right)$

\end{enumerate}

Define $V \subseteq U$ by 

$$V := \{\mu \in U \mid \DKR\left(\mu,\mu_0\right) < \epsilon,\, \E_\mu\left[f\right] < \E_{\mu_0}\left[f\right] + \epsilon\}$$

Obviously $V$ is open, $\mu_0\in V$ and $V \subseteq B^{-1}\left(f\right)$. We got that $\mu_0$ has an open neighborhood inside $B^{-1}(f)$, and since we could choose any $\mu_0\in B^{-1}(f)$, we showed that $B^{-1}\left(f\right)$ is open. Applying Theorem~\ref{thm:selection} (see Appendix~\ref{sec:theorems}), we get the desired result.
\end{proof}

\begin{samepage}
\begin{corollary}
\label{crl:savvy}

Consider $X$ a compact Polish metric space, $M \subseteq \PM\left(X\right)$ convex and $r_0 > 0$. Then, there exists $\beta \in \Gm\left(X\right)$ which is $\left(M,r_0\right)$-savvy.

\end{corollary}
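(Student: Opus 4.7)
The plan is to build $\beta$ from the map produced by Lemma~\ref{lmm:savvy_outside} on the open set $U := \{\mu \in \PM(X) : \DKR(\mu, M) > r_0\}$ by extending it by zero outside $U$. Assuming $M$ is non-empty (if $M = \varnothing$, every condition in the definition of savvy is vacuous and $\beta \equiv 0$ works), let $\beta': U \to \Lp(X, \rho)$ be as in Lemma~\ref{lmm:savvy_outside} and set $\beta(\mu, x) := \beta'(\mu)(x)$ for $\mu \in U$ and $\beta(\mu, x) := 0$ otherwise, where $\rho$ is the metric on $X$.

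The main technical obstacle is showing that $\beta$ is jointly continuous, so that $\beta \in \Gm(X) = C(\PM(X) \times X)$. Away from $\partial U$, joint continuity follows from continuity of $\beta' : U \to \Lp(X, \rho)$ together with the fact that each $\beta'(\mu)$ is Lipschitz on $X$: if $(\mu_n, x_n) \to (\mu_\infty, x_\infty)$ with all $\mu_n$ and $\mu_\infty$ in $U$, then
\[
\A{\beta'(\mu_n)(x_n) - \beta'(\mu_\infty)(x_\infty)} \leq \N{\beta'(\mu_n) - \beta'(\mu_\infty)}_\rho + \N{\beta'(\mu_\infty)}_\rho \cdot \rho(x_n, x_\infty),
\]
where I used that the Lipschitz norm $\N{\cdot}_\rho$ dominates the uniform norm. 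Both terms vanish by continuity of $\beta'$ and of $\rho$. At a boundary point $\mu_\infty$ with $r_{\mu_\infty} := \DKR(\mu_\infty, M) \leq r_0$, the crucial input is the \emph{strict} inequality $\N{\beta'(\mu)}_\rho < r_\mu - r_0$ from Lemma~\ref{lmm:savvy_outside}: since the map $\mu \mapsto \DKR(\mu, M)$ is $1$-Lipschitz with respect to $\DKR$ (which metrizes the weak topology on $\PM(X)$), for any $\mu_n \to \mu_\infty$ with $\mu_n \in U$ we have $r_{\mu_n} \to r_{\mu_\infty} \leq r_0$, hence $\N{\beta'(\mu_n)}_\rho \to 0$. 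This forces $\beta(\mu_n, x_n) \to 0 = \beta(\mu_\infty, x_\infty)$ for any $x_n \to x_\infty$, regardless of whether each $\mu_n$ lies in $U$ or not.

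Finally, I verify the two savvy conditions. For condition (i): when $\mu \in U$, $\N{\beta(\mu)} \leq \N{\beta'(\mu)}_\rho \leq r_\mu - r_0 = (r_\mu - r_0)_+$; when $\mu \notin U$, both sides are zero. For condition (ii): when $\mu \in U$ it is precisely the bound provided by Lemma~\ref{lmm:savvy_outside}; when $\mu \notin U$, the left-hand side $\E_\nu[\beta(\mu)] - \E_\mu[\beta(\mu)]$ is zero while the right-hand side $\frac{1}{2}(r_\mu - r_0) r_\mu$ is non-positive (since $0 \leq r_\mu \leq r_0$), so the required inequality holds.
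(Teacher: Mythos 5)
Your proof is correct and takes essentially the same approach as the paper: both extend $\beta'$ from Lemma~\ref{lmm:savvy_outside} by zero outside $U$ and use the bound $\N{\beta'(\mu)}_\rho < r_\mu - r_0$ to show the extension is continuous at $\partial U$. The paper argues continuity of the curried map $\PM(X) \to \Lp(X,\rho)$ rather than joint continuity on $\PM(X) \times X$, but for compact $X$ these are equivalent, so the difference is purely cosmetic.
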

\end{samepage}

\begin{proof}

If $M$ is empty, the claim is trivial, so assume $M$ is non-empty. Let $U \subseteq \PM\left(X\right)$ be defined by

$$U:=\{\mu \in \PM\left(X\right) \mid \DKR\left(\mu, M\right) > r_0\}$$

Use Lemma~\ref{lmm:savvy_outside} to obtain $\beta': U \rightarrow \Lp\left(X\right)$. Define $\beta: \PM\left(X\right) \rightarrow \Lp\left(X\right)$ by

$$\beta\left(\mu\right):=\begin{cases}\beta'\left(\mu\right) \text{ if } \mu \in U\\0 \text { if } \mu \not\in U\end{cases}$$ 
Obviously $\beta$ is continuous in $U$. Consider any $\mu \not\in U$ and $\Sq{\mu_k \in \PM\left(X\right)}{k}$ s.t. $\lim_{k \rightarrow \infty} \mu_k = \mu$. We have 

$$\lim_{k \rightarrow \infty} \DKR\left(\mu_k,M\right) = \DKR\left(\mu, M\right) \leq r_0$$

Denote the metric on $X$ by $\rho$. 

$$\limsup_{k \rightarrow \infty}{\N{\beta\left(\mu_k\right)}_\rho} \leq \limsup_{k \rightarrow \infty}{\left(\DKR\left(\mu_k,M\right) - r_0\right)_+} = 0$$

Therefore, $\beta$ is continuous everywhere.
\end{proof}

\begin{proof}[Proof of Lemma~\ref{lmm:savvy}]

Define $Z_{1,2,3} \subseteq Y \times \Gm\left(X\right) \times \PM\left(X\right)^4$ by

$$Z_1:=\{\left(y,\beta,\mu,\nu,\xi,\zeta\right) \in Y \times \Gm\left(X\right) \times \PM\left(X\right)^4 \mid \zeta \in M\left(y\right)\}$$

$$Z_2:=\{\left(y,\beta,\mu,\nu,\xi,\zeta\right) \in Y \times \Gm\left(X\right) \times \PM\left(X\right)^4 \mid \N{\beta\left(\mu\right)} \leq \left(\DKR\left(\mu,\xi\right) - \epsilon\right)_+\}$$

$$Z_3:=\{\left(y,\beta,\mu,\nu,\xi,\zeta\right) \in Y \times \Gm\left(X\right) \times \PM\left(X\right)^4 \mid \E_\nu\left[\beta\left(\mu\right)\right] - \E_\mu\left[\beta\left(\mu\right)\right] \geq \frac{1}{2}\left(\DKR\left(\mu,\zeta\right) - \epsilon\right)_+ \DKR\left(\mu,\zeta\right)\}$$

Define $Z := Z_1 \cap Z_2 \cap Z_3$. It is easy to see that $Z_{1,2,3}$ are closed and therefore $Z$ also. Define $Z' \subseteq Y \times \Gm\left(X\right) \times \PM\left(X\right)^3$ by

$$Z':=\{\left(y,\beta,\mu,\nu,\xi\right) \in Y \times \Gm\left(X\right) \times \PM\left(X\right)^3 \mid \exists \zeta \in \PM\left(X\right): \left(y,\beta,\mu,\nu,\xi,\zeta\right) \in Z\}$$

$Z'$ is closed since it is the projection of $Z$ and $\PM\left(X\right)$ is compact. The $M\left(y\right)$ are compact, therefore $\DKR\left(\mu,M\left(y\right)\right)=\DKR\left(\mu,\zeta\right)$ for some $\zeta \in M\left(y\right)$, and hence $\left(y,\beta,\mu,\nu,\xi\right) \in Z'$ if and only if the following conditions hold:

\begin{enumerate}[i.]

\item $\N{\beta\left(\mu\right)} \leq \left(\DKR\left(\mu,\xi\right) - \epsilon\right)_+$
\item $\E_\nu\left[\beta\left(\mu\right)\right] - \E_\mu\left[\beta\left(\mu\right)\right] \geq \frac{1}{2}\left(\DKR\left(\mu,M\left(y\right)\right) - \epsilon\right)_+ \DKR\left(\mu,M\left(y\right)\right)$

\end{enumerate}

Define $W \subseteq Y \times \Gm\left(X\right) \times \PM\left(X\right)^3$ by

$$W:=\{\left(y,\beta,\mu,\nu,\xi\right) \in Y \times \Gm\left(X\right) \times \PM\left(X\right)^3 \mid \nu,\xi \in M\left(y\right),\, \left(y,\beta,\mu,\nu,\xi\right) \not\in Z'\}$$

$W$ is locally closed and in particular it is an $F_\sigma$ set. Define $W' \subseteq Y \times \Gm\left(X\right)$ by

$$W':=\{\left(y,\beta\right) \in Y \times \Gm\left(X\right) \mid \exists \mu,\nu,\xi \in \PM\left(X\right): \left(y,\beta,\mu,\nu,\xi\right) \in W\}$$

$W'$ is the projection of $W$ and $\PM\left(X\right)^3$ is compact, therefore $W'$ is $F_\sigma$. Let $A \subseteq Y \times \Gm\left(X\right)$ be the complement of $W'$. $A$ is $G_\delta$ and in particular Borel. As easy to see, $\left(y,\beta\right) \in A$ if and only if $\beta$ is $\left(M\left(y\right),\epsilon\right)$-savvy.

For any $y \in Y$, $A_y:=\{\beta \in \Gm\left(X\right) \mid \left(y,\beta\right) \in A\}$ is closed since

$$A_y = \bigcap_{\mu \in \PM\left(X\right)} \bigcap_{\nu,\xi \in M\left(y\right)} \{\beta \in \Gm\left(X\right) \mid \left(y,\beta,\mu,\nu,\xi\right) \in Z'\}$$

Moreover, $A_y$ is non-empty by Corollary~\ref{crl:savvy}.

Consider any $U \subseteq \Gm\left(X\right)$ open. Then, $A \cap \left(Y \times U\right)$ is Borel and therefore its image under the projection to $Y$ is analytic and in particular universally measurable. Applying the Kuratowski--Ryll-Nardzewski measurable selection theorem, we obtain a measurable selection of the multivalued mapping with graph $A$. This selection is the desired $S^\epsilon$.
\end{proof}

We now obtain a relation between the notions of savviness and prudence. We remind that given $X,Y$ Polish, $\pi: X \rightarrow Y$ Borel measurable and $\mu \in \PM\left(X\right)$, $\mu \mid \pi: Y \M\ X$ stands for any corresponding regular conditional probability.

\begin{samepage}
\begin{proposition}
\label{crl:savvy_is_prudent}

Consider $M \subseteq \PMO$, $\Sqn{M_n: \Ob^n \rightarrow 2^{\PMO}}$ and $\epsilon > 0$. Assume $M_n$ is a regular upper bound for $M \mid \PO_n$. Let $S$ be a gambling strategy s.t. for each $n \in \Nats$ and $y \in \Ob^n$, $S_n\left(y\right)$ is $\left(M_n(y),\epsilon\right)$-savvy (relatively to $\rho_n$). Then, $S$ is $\mu^*$-prudent for any $\mu^* \in M$.

\end{proposition}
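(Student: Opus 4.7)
The plan is to set $\alpha := \epsilon/4$ and verify Definition~\ref{def:prudent} directly using the two savviness conditions, with $\nu := (\mu^* \mid \PO_n)(y)$ playing the role of the reference element of $M_n(y)$.

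First, fix $\mu^* \in M$. Since $M_n$ is a regular upper bound for $M \mid \PO_n$, condition~\ref{con:def__update_incomplete__cond} of Definition~\ref{def:update_incomplete} gives that for $\PO_{n*}\mu^*$-almost every $y \in \Ob^n$, the regular conditional probability satisfies $(\mu^* \mid \PO_n)(y) \in M_n(y)$. Moreover, this regular conditional is supported on $y\OO$, so for any bounded measurable $f$ we have $\E_{\mu^*}[f \mid y\OO] = \E_{(\mu^* \mid \PO_n)(y)}[f]$. Fix such a ``good'' $y$ and any $\mu \in \PMO$, and denote $r_\mu := \DKR^n(\mu, M_n(y))$.

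Applying the second savviness condition with $\nu = (\mu^* \mid \PO_n)(y) \in M_n(y)$ gives
\begin{equation*}
\E_{\mu^*}[S_n(y;\mu) \mid y\OO] - \E_\mu[S_n(y;\mu)] \geq \tfrac{1}{2}(r_\mu - \epsilon)\, r_\mu.
\end{equation*}
For the right-hand side of Definition~\ref{def:prudent}, the first savviness condition gives $\max|S_n(y;\mu)| \leq \N{S_n(y;\mu)}_{\rho_n} \leq (r_\mu - \epsilon)_+$, so
\begin{equation*}
\max_{y\OO} S_n(y;\mu) - \min_{y\OO} S_n(y;\mu) \leq 2(r_\mu - \epsilon)_+.
\end{equation*}
Now split on cases. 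If $r_\mu \leq \epsilon$, then $S_n(y;\mu) \equiv 0$ by condition~(i) and both sides of the prudence inequality vanish. If $r_\mu > \epsilon$, then $r_\mu \geq \epsilon = 4\alpha$, hence
\begin{equation*}
\tfrac{1}{2}(r_\mu - \epsilon)\, r_\mu \geq \tfrac{1}{2}(r_\mu - \epsilon) \cdot 4\alpha = 2\alpha(r_\mu - \epsilon)_+ \geq \alpha\left(\max_{y\OO} S_n(y;\mu) - \min_{y\OO} S_n(y;\mu)\right),
\end{equation*}
which is exactly the prudence bound with $\alpha = \epsilon/4$.

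Finally, one should note that the definition of a gambling strategy requires uniform boundedness in $n$; this follows because $\N{f}_{\rho_n} \leq 1$ implies $\max|f| \leq 1$, so $\DKR^n \leq 2$ and hence $\N{S_n(y;\mu)}_{\rho_n} \leq (r_\mu - \epsilon)_+ \leq 2$ for every $n$, $y$, $\mu$. The only subtle point in the argument is the invocation of the regular upper bound property to replace $M_n(y)$ by the specific element $(\mu^* \mid \PO_n)(y)$; once that identification is made everything else is a short constant-chase.
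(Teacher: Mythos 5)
Your proof is correct and takes essentially the same route as the paper's: both fix $\nu := (\mu^* \mid \PO_n)(y) \in M_n(y)$ via the regular-upper-bound condition, apply savviness (ii) to lower-bound the expected gain by $\tfrac{1}{2}(r_\mu - \epsilon)r_\mu$, and use savviness (i) to control $\max - \min$ (the paper passes through $\tfrac{\epsilon}{2}\N{S_n(y;\mu)}$ instead of tracking $\max - \min$ directly, but that is the same constant-chase). One small remark: your closing paragraph re-verifying uniform boundedness is unnecessary here, since the statement already \emph{assumes} $S$ is a gambling strategy; that verification is only needed in the proof of Theorem~\ref{thm:main}, where the strategy is being constructed.
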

\end{samepage}

\begin{proof}

Fix any $\mu^* \in M$. By Definition~\ref{def:update_incomplete}, for $\PO_{n*}\mu$-almost any $y\in\Ob^n$, ${\left(\mu^* \mid \PO_n\right)\left(y\right) \in M_n(y)}$. Since $S_n\left(y\right)$ is $\left(M_n(y),\epsilon\right)$-savvy, for any $\mu \in \PM\left(X\right)$, denoting $r_{\mu y}:=\DKR^n\left(\mu,\,M_n(y)\right)$:

$$\E_{\mu^*}\left[S_n\left(y;\mu\right) \mid y\OO\right] - \E_{\mu}\left[S_n\left(y;\mu\right)\right] \geq \frac{1}{2} \left(r_{\mu y} - \epsilon\right)_+ r_{\mu y} \geq \frac{1}{2} \N{S_n\left(y;\mu\right)} r_{\mu y}$$

When $r_{\mu y} \leq \epsilon$, $S_n\left(y;\mu\right) = 0$, hence for any $\mu \in \PM\left(X\right)$, $\N{S_n\left(y;\mu\right)} r_{\mu y} \geq \N{S_n\left(y;\mu\right)} \epsilon$. We get

$$\E_{\mu^*}\left[S_n\left(y;\mu\right) \mid y\OO\right] - \E_{\mu}\left[S_n\left(y;\mu\right)\right] \geq \frac{\epsilon}{2} \N{S_n\left(y;\mu\right)}$$
\end{proof}

\begin{samepage}
\begin{corollary}
\label{crl:vicinity_convergence}

Consider $M \subseteq \PMO$, $\Sqn{M_n : \Ob^n \rightarrow 2^{\PMO}}$ and $\epsilon > 0$. Assume $M_n$ is a regular upper bound for $M \mid \PO_n$. Let $S$ be a gambling strategy s.t. for each $n \in \Nats$ and $y \in \Ob^n$, $S_n\left(y\right)$ is $\left(M_n(y),\epsilon\right)$-savvy. Let $F$ be a forecaster that dominates $\PG{S}$. Then, for any $\mu \in M$ and $\mu$-almost any $x \in \OO$

\begin{equation}
\limsup_{n \rightarrow \infty} \DKR^n\left(F_n\left(x_{:n}\right),\, M_n\left(x_{:n}\right)\right) \leq \epsilon
\end{equation}

\end{corollary}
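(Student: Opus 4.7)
The plan is to compose the three ingredients already assembled. First I would apply Proposition~\ref{crl:savvy_is_prudent} to conclude that $S$ is $\mu$-prudent for every $\mu \in M$. Fixing such a $\mu$, since $F$ dominates $\PG{S}$ by hypothesis, Theorem~\ref{thm:prudent} yields that for $\mu$-almost every $x \in \OO$,
\begin{equation*}
\sum_{n=0}^{\infty} \Big( \E_{\mu}\bigl[S_n(x_{:n}; F_n(x_{:n})) \mid x_{:n}\OO\bigr] - \E_{F_n(x_{:n})}\bigl[S_n(x_{:n}; F_n(x_{:n}))\bigr] \Big) < \infty,
\end{equation*}
and in particular the summands tend to zero.

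Next I would lower-bound each summand by half of $(r_n - \epsilon)\, r_n$, where $r_n := \DKR^n\bigl(F_n(x_{:n}),\, M_n(x_{:n})\bigr)$. Because $M_n$ is a regular upper bound for $M \mid \PO_n$, condition~\ref{con:def__update_incomplete__cond} of Definition~\ref{def:update_incomplete} says that for each $n$, the regular conditional probability $(\mu \mid \PO_n)(x_{:n})$ lies in $M_n(x_{:n})$ for $\mu$-almost every $x$; taking a countable union of the associated null sets, this holds simultaneously in $n$ off a single $\mu$-null set. On this full-measure set I would apply clause (ii) of the savviness definition to $S_n(x_{:n})$, with forecast argument $F_n(x_{:n})$ and test measure $\nu := (\mu \mid \PO_n)(x_{:n}) \in M_n(x_{:n})$, to obtain
\begin{equation*}
\E_{\mu}\bigl[S_n(x_{:n}; F_n(x_{:n})) \mid x_{:n}\OO\bigr] - \E_{F_n(x_{:n})}\bigl[S_n(x_{:n}; F_n(x_{:n}))\bigr] \,\geq\, \tfrac{1}{2}(r_n - \epsilon)\, r_n.
\end{equation*}

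Combining the two displays, $\limsup_{n \to \infty} (r_n - \epsilon)\, r_n \leq 0$. Since $r_n \geq 0$, if we had $\limsup_{n} r_n > \epsilon$, then along a subsequence $r_{n_k} \to L > \epsilon$ and $(r_{n_k} - \epsilon)\, r_{n_k} \to (L - \epsilon) L > 0$, contradicting the previous bound. Hence $\limsup_{n} r_n \leq \epsilon$, which is exactly the claim. I do not foresee a substantive obstacle here: the real work has already been done in Theorem~\ref{thm:prudent} and in the construction of savvy strategies, and what remains is essentially the algebraic alignment of the prudence conclusion with the savviness inequality, together with the bookkeeping step of choosing the conditional probability inside $M_n$ on a full-measure set uniformly in $n$.
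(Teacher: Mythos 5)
Your proposal is correct and follows essentially the same route as the paper: apply Proposition~\ref{crl:savvy_is_prudent} to get prudence, invoke Theorem~\ref{thm:prudent} via the dominance hypothesis to force the summands to zero, lower-bound them by $\tfrac{1}{2}(r_n-\epsilon)r_n$ using condition~\ref{con:def__update_incomplete__cond} of Definition~\ref{def:update_incomplete} together with clause (ii) of savviness, and read off $\limsup r_n \le \epsilon$. Your extra remarks (unioning the $\mu$-null sets over $n$ and the subsequence argument at the end) are details the paper leaves implicit but do not change the argument.
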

\end{samepage}

\begin{proof}

By Proposition~\ref{crl:savvy_is_prudent}, $S$ is $\mu$-prudent. By Theorem~\ref{thm:prudent}, for $\mu$-almost any $x \in X$

$$\lim_{n \rightarrow \infty} {\left(\E_{\mu}\left[S_n\left(x_{:n};F_n\left(x_{:n}\right)\right) \mid x_{:n}\OO\right]-\E_{F_n\left(x_{:n}\right)}\left[S_n\left(x_{:n};F_n\left(x_{:n}\right)\right)\right]\right)} = 0$$

On the other hand, $\left(\mu \mid \PO_n\right)\left(x_{:n}\right) \in M_n\left(x_{:n}\right)$ and hence, since $S_n\left(x_{:n}\right)$ is $\left(M_n\left(x_{:n}\right),\epsilon\right)$-savvy, denoting $r_n:=\DKR^n\left(F_n\left(x_{:n}\right), M_n\left(x_{:n}\right)\right)$: 

$$\E_{\mu}\left[S_n\left(x_{:n};F_n\left(x_{:n}\right)\right) \mid x_{:n}\OO\right]-\E_{F_n\left(x_{:n}\right)}\left[S_n\left(x_{:n};F_n\left(x_{:n}\right)\right)\right] \geq \frac{1}{2} \left(r_{n} - \epsilon\right) r_n$$

We get

$$\limsup_{n \rightarrow \infty} {\left(r_{n} - \epsilon\right) r_n} \leq 0$$

$$\limsup_{n \rightarrow \infty} {r_n} \leq \epsilon$$
\end{proof}

We are finally ready to complete the proof of the main theorem.

\begin{proof}[Proof of Theorem~\ref{thm:main}]

Consider any $M \in \MC$ and a positive integer $k$. By Lemma~\ref{lmm:savvy}, for any $n \in \Nats$, there exists ${S^{Mk}_{n}: \Ob^n \rightarrow \GMO}$ measurable s.t. for all $y \in \Ob^n$, $S^{Mk}_{n}\left(y\right)$ is $\left(M_n(y),\frac{1}{k}\right)$-savvy. We fix such a $S^{Mk}_{n}$ for each $M$ and $k$. It is easy to see that the $\DKR^n$-diameter of $\PMO$ is at most 2, therefore $\N{S^{Mk}_{n}\left(y\right)} \leq 2$ and $\Sqn{S^{Mk}_{n}}$ is a gambling strategy. By Theorem~\ref{thm:exist_dominant}, there is a forecaster $F^\MC$ that dominates $\PG{S}^{Mk}$ for all $M$ and $k$. By Corollary~\ref{crl:vicinity_convergence}, for any $M \in \MC$, $\mu \in M$, positive integer $k$ and $\mu$-almost any $x \in X$

$$\limsup_{n \rightarrow \infty} {\DKR^n\left(F^\MC_n\left(x_{:n}\right),M_n\left(x_{:n}\right)\right)} \leq \frac{1}{k}$$

It follows that

$$\lim_{n \rightarrow \infty} {\DKR^n\left(F^\MC_n\left(x_{:n}\right),M_n\left(x_{:n}\right)\right)} = 0$$
\end{proof}

\Comment{\section{Discussion}

We see several natural directions for extending this work.

Theorem~\ref{thm:main} is formulated using the Kantorovich-Rubinstein metric, but instead we could have considered to total variation metric $\DTV$, as in Bayesian merging of opinions\cite{Blackwell_1962}. This would remove the need to renormalize the metrization of $\OO$ and yield a stronger result. However, we currently don't know whether this stronger hypothesis is true.

The present work doesn't analyze any considerations of computational complexity or even computability. It would be very valuable, especially for practical applications, to understand the complexity of forecasters satisfying equation~\ref{eqn:thm_main} given various complexity-theoretic assumptions on $\MC$. In particular, that would entail designing concrete forecasting algorithms of this type, whereas the existence proof in the present work is non-constructive. Also, it would be valuable to analyze the speed of convergence in \ref{eqn:thm_main} and the possibly also the trade-offs between forecaster complexity and speed of convergence.

Finally, we think that the current work may have some bearing on the so-called \enquote{grain of truth} problem (problem 5j in \cite{Hutter_2009}). The problem is, given a system of interacting rational agents, proving their convergence to some reasonable game-theoretic solution concept, e.g. a Nash equilibrium. For a pair of Bayesian agents, this normally requires each agent to lie in the model class $\MC$ of the other agent. However, this requirement is difficult to satisfy in settings that are fairly general, since usually the computational complexity of a Bayesian agent is higher than the complexities of all models in its prior. On the other hand, using our approach, each agent may learn incomplete models satisfied by the other agent that are computationally simpler than a full simulation. This might be sufficient to produce meaningful game-theoretic guarantees.}

\section{Appendix: Some Useful Theorems}
\label{sec:theorems}

The following variant of the Optional Stopping Theorem appears in \cite{Durrett_2010} as Theorem 5.4.7.

\begin{samepage}
\begin{theorem}
\label{thm:optional_stopping}

Let ${\Omega}$ be a probability space, ${\Sqn{\F_n \subseteq 2^\Omega}}$ a filtration of ${\Omega}$, ${\Sqn{X_n: \Omega \rightarrow \Reals}}$ a stochastic process adapted to ${\F}$ and ${M,N: \Omega \rightarrow \Nats \sqcup \{\infty\}}$ stopping times (w.r.t. ${\F}$) s.t. ${M \leq N}$. Assume that ${\Sqn{X_{\min\left(n,N\right)}}}$ is a uniformly integrable submartingale. Using Doob's martingale convergence theorem, we can define ${X_N : \Omega \rightarrow \Reals}$ by\footnote{The limit in equation~\ref{eqn:thm_optional_stopping__xn} is converges \emph{almost} surely, which is sufficient for our purpose.}

\begin{equation}
\label{eqn:thm_optional_stopping__xn}
X_N\left(x\right):=\lim_{n \rightarrow \infty} X_{\min\left(n,N\right)}\left(x\right)=\begin{cases}X_{N\left(x\right)}\left(x\right) \text{ if } N\left(x\right) < \infty\\\lim_{n \rightarrow \infty} X_n\left(x\right) \text{ if } N\left(x\right) = \infty\end{cases}
\end{equation}

We define ${X_M: \Omega \rightarrow \Reals}$ is an analogous way\footnote{This time we can't use Doob's martingale convergence theorem, but whenever ${M\left(x\right) = \infty}$ we also have ${N\left(x\right) = \infty}$, therefore the limit still almost surely converges.}. We also define the $\sigma$-algebra ${\F_M \subseteq 2^\Omega}$ by

\begin{equation}
\F_M:=\{A \subseteq X \text{ measurable} \mid \forall n \in \Nats: A \cap M^{-1}\left(n\right) \in \F_n\}
\end{equation}

Then:

\begin{equation}
\E\left[X_N \mid \F_M\right] \geq X_M
\end{equation}

\end{theorem}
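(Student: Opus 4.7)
The plan is to reduce to bounded stopping times by truncation and then pass to the limit using the uniform integrability hypothesis together with Lévy's upward convergence theorem. For each $k \in \Nats$, set $M_k := M \wedge k$ and $N_k := N \wedge k$; these are bounded stopping times satisfying $M_k \leq N_k$. The classical optional sampling theorem for bounded stopping times, applied to the submartingale $\Sqn{X_n}$, yields
$$\E[X_{N_k} \mid \F_{M_k}] \geq X_{M_k}$$
almost surely for every $k$, and the remaining task is to take $k \to \infty$ on both sides.

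Next I would control the pointwise and $L^1$ behavior of the four quantities involved. The right-hand side converges almost surely to $X_M$: on $\{M < \infty\}$ one has $M_k = M$ eventually; on $\{M = \infty\}$ (which forces $N = \infty$ since $M \leq N$) the sequence $X_{M_k} = X_k$ converges by Doob's theorem applied to the uniformly integrable submartingale $\Sqn{X_{\min(n,N)}}$, and its almost sure limit agrees with $X_M$ as defined in the statement. On the left, $X_{N_k} = X_{\min(k,N)}$ converges to $X_N$ both almost surely and in $L^1$ by the same uniform integrability.

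For the conditional expectations I would exploit that $M_k$ is non-decreasing in $k$, so the $\sigma$-algebras $\F_{M_k}$ form an increasing family, and a direct check from the definition of $\F_M$ gives $\sigma\left(\bigcup_k \F_{M_k}\right) = \F_M$. Lévy's upward convergence theorem then yields $\E[X_N \mid \F_{M_k}] \to \E[X_N \mid \F_M]$ in $L^1$, while $L^1$-contractivity of conditional expectation gives $\E[X_{N_k} - X_N \mid \F_{M_k}] \to 0$ in $L^1$. Combining the two, $\E[X_{N_k} \mid \F_{M_k}] \to \E[X_N \mid \F_M]$ in $L^1$ and almost surely along a subsequence, and passing to the limit along that subsequence in the displayed inequality yields $\E[X_N \mid \F_M] \geq X_M$. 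The main obstacle is the bookkeeping for the filtration identity $\sigma(\bigcup_k \F_{M_k}) = \F_M$ and the careful handling of the event $\{M = \infty\}$, where the limit defining $X_M$ must be identified with the almost sure limit from the uniformly integrable submartingale convergence; both are routine once the truncation is in place, with uniform integrability providing the substantive input that makes all the limits interchange correctly.
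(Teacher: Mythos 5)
The paper does not actually prove this theorem: it is stated as a quotation of Theorem 5.4.7 from \cite{Durrett_2010}, so there is no internal argument to compare against. Your reduction to the bounded stopping times $M_k := M \wedge k$, $N_k := N \wedge k$, followed by passage to the limit using uniform integrability and L\'evy's upward theorem, is correct and is essentially the standard textbook proof of this form of optional stopping.

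Two points you dismiss as \enquote{routine bookkeeping} deserve to be spelled out. First, the identity $\sigma\bigl(\bigcup_k \F_{M_k}\bigr) = \F_M$ holds provided \enquote{measurable} in the definition of $\F_M$ is read as measurable with respect to $\F_\infty := \sigma\bigl(\bigcup_n \F_n\bigr)$: the inclusion $\subseteq$ is the usual monotonicity of stopped $\sigma$-algebras, and for $\supseteq$ one checks that $A \cap \{M \leq k\} \in \F_{M_k}$ for every $A \in \F_M$, that these events exhaust $A \cap \{M < \infty\}$, and that on the residual event $\{M=\infty\}$ (which itself lies in $\sigma\bigl(\bigcup_k \F_{M_k}\bigr)$) both $\sigma$-algebras have trace equal to that of $\F_\infty$. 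Without this reading the L\'evy upward step would only give the inequality conditioned on $\sigma\bigl(\bigcup_k \F_{M_k}\bigr)$, which is a priori weaker. Second, L\'evy's upward theorem requires $X_N \in L^1$, which you get from the uniform integrability of the stopped process; by contrast you do not need $L^1$-convergence of $X_{M_k}$ to $X_M$, since on the right-hand side you only pass to almost-sure limits along the subsequence you already fixed for the left-hand side. With those clarifications the argument is complete.
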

\end{samepage}

The following variant of the Michael selection theorem appears in \cite{Yannelis_1983} as Theorem 3.1.

\begin{samepage}
\begin{theorem} [Yannelis and Prabhakar]
\label{thm:selection}

Consider $X$ a paracompact Hausdorff topological space and $Y$ a topological vector space. Suppose $B: X \rightarrow 2^Y$ is s.t.

\begin{enumerate}[i.]

\item For each $x \in X$, $B\left(x\right) \ne \varnothing$.
\item For each $x \in X$, $B\left(x\right)$ is convex.
\item For each $y \in Y$, $\{x \in X \mid y \in B\left(x\right)\}$ is open.

\end{enumerate}

Then, there exists $\beta: X \rightarrow Y$ continuous s.t. for all $x \in X$, $\beta\left(x\right) \in B\left(x\right)$.

\end{theorem}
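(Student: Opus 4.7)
The plan is the standard partition-of-unity selection argument, which exploits the open-fiber hypothesis (iii) together with paracompactness. First, I would reinterpret (iii) as saying that, for each $y \in Y$, the set $U_y := \{x \in X \mid y \in B(x)\}$ is open, and (i) as saying $\bigcup_{y \in Y} U_y = X$. Thus $\{U_y\}_{y \in Y}$ is an open cover of $X$, indexed by points of $Y$ themselves.

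Next, using that $X$ is paracompact Hausdorff, I would invoke the standard existence theorem for subordinate partitions of unity: there is a family $\{\phi_\lambda\}_{\lambda \in \Lambda}$ of continuous functions $\phi_\lambda : X \to [0,1]$ which is locally finite (every $x$ has a neighborhood on which all but finitely many $\phi_\lambda$ vanish), satisfies $\sum_\lambda \phi_\lambda \equiv 1$, and for which one can assign a label $y_\lambda \in Y$ to each index $\lambda$ so that $\operatorname{supp}(\phi_\lambda) \subseteq U_{y_\lambda}$.

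I would then define
\[ \beta(x) := \sum_{\lambda \in \Lambda} \phi_\lambda(x)\, y_\lambda. \]
By local finiteness, this is a finite sum on a neighborhood of each point, so it is a well-defined $Y$-valued map; continuity follows from continuity of scalar multiplication and (finite) addition in the topological vector space $Y$ applied on such neighborhoods. To verify the selection property, fix any $x \in X$: the coefficients $\phi_\lambda(x)$ are non-negative, sum to $1$, and are supported on the finite index set $\Lambda_x := \{\lambda \mid \phi_\lambda(x) > 0\}$. For each $\lambda \in \Lambda_x$ we have $x \in \operatorname{supp}(\phi_\lambda) \subseteq U_{y_\lambda}$, hence $y_\lambda \in B(x)$. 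So $\beta(x)$ is a finite convex combination of elements of $B(x)$, and by (ii) we conclude $\beta(x) \in B(x)$.

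The main obstacle is really just the infrastructural input: having, in a purely topological (not metric or normed) setting, subordinate partitions of unity for any open cover of $X$. This is essentially the defining feature of paracompact Hausdorff spaces (via the usual shrinking/refinement construction), and I would cite it rather than reprove it. Everything else is essentially mechanical once that tool is available, and the local finiteness is exactly what lets the convexity of $B(x)$, which is only guaranteed under \emph{finite} combinations, suffice to place $\beta(x)$ inside $B(x)$.
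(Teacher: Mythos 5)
Your argument is correct, and it is essentially the standard proof of this result: the paper itself does not prove Theorem~\ref{thm:selection} but quotes it from Yannelis and Prabhakar (Theorem 3.1 of the cited reference), where the proof is exactly this partition-of-unity construction. Your steps — covering $X$ by the open lower sections $U_y$, taking a locally finite partition of unity subordinate to this cover (available since $X$ is paracompact Hausdorff), and using convexity of the values $B(x)$ to conclude that the finite convex combination $\sum_\lambda \phi_\lambda(x) y_\lambda$ lies in $B(x)$ — are all sound, and no local convexity of $Y$ is needed since the combinations are formed inside the sets $B(x)$ themselves.
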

\end{samepage}

\section{Appendix: Examples of Regular Upper Bounds}
\label{sec:examples}

\begin{proof}[Proof of Example~\ref{exm:update_incomplete_finite}]

In this case, condition~\ref{con:def__update_incomplete__clos} means that $N$ must be \emph{pointwise} closed and condition~\ref{con:def__update_incomplete__cond} means that for every $\mu \in M$ and $y \in Y$ s.t. $\mu\left(\pi^{-1}(y)\right) > 0$, $\mu \mid \pi^{-1}(y) \in N(y)$. Since all three conditions are closed w.r.t. arbitrary set intersections, there is a unique minimum (it is the intersection of all multivalued mappings satisfying the conditions). It remains to show that $N(y)$ defined by equation \ref{eqn:exm__update_incomplete_finite} is convex. Fix $y \in Y$ and denote $A$ the set appearing on the right hand side of equation \ref{eqn:exm__update_incomplete_finite} under the closure. Consider any $\mu_1,\mu_2 \in M$ s.t. $\mu_1\left(\pi^{-1}(y)\right),\mu_2\left(\pi^{-1}(y)\right) > 0$ and $p \in (0,1)$. We have 

\[p\mu_1+(1-p)\mu_2 \in M\]

\[\left(p\mu_1+(1-p)\mu_2\right) \mid \pi^{-1}(y) \in A\]

\[\frac{\mu_1\left(\pi^{-1}(y)\right)p\left(\mu_1\mid \pi^{-1}(y)\right)+\mu_2\left(\pi^{-1}(y)\right)(1-p)\left(\mu_2\mid \pi^{-1}(y)\right)}{\mu_1\left(\pi^{-1}(y)\right)p+\mu_2\left(\pi^{-1}(y)\right)(1-p)} \in A\]

It is easy to see that for any $q \in (0,1)$ there is $p \in (0,1)$ s.t.

\[q = \frac{\mu_1\left(\pi^{-1}(y)\right)p}{\mu_1\left(\pi^{-1}(y)\right)p+\mu_2\left(\pi^{-1}(y)\right)(1-p)}\]

It follows that for any $q \in (0,1)$

\[q\left(\mu_1\mid \pi^{-1}(y)\right)+(1-q)\left(\mu_2\mid \pi^{-1}(y)\right) \in A\]

We got that $A$ is convex and therefore $N(y)=\overline{A}$ is also convex.
\end{proof}

\begin{samepage}
\begin{proposition}
\label{prp:four_factors}

Consider $W_{1,2,3,4}$ Polish spaces, $X:=\prod_{i=1}^4 W_i$, $Y:=\prod_{i=1}^3 W_i$ and $Z = W_1 \times W_2$. Let $\pi^Y: X \rightarrow Y$, $\pi^Z: X \rightarrow Z$ and $\pi^W: X \rightarrow W_1$ be the projection mappings, $\mu \in \PM(X)$ and $K: Z \M W_3$. Assume that $\pi^Y_* \mu = \pi^Z_*\mu \ltimes K$. Then, for $\pi^W_* \mu$-almost any $w \in W_1$

\begin{equation}
\pi^Y_* \left(\mu \mid \pi^W\right)(w) = \pi^Z_* \left(\mu \mid \pi^W\right)(w) \ltimes K
\end{equation}

\end{proposition}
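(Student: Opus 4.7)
The plan is to recognize both sides of the desired equality as versions of the regular conditional probability of the same measure with respect to the same projection, and then invoke the uniqueness part of the disintegration theorem. Let $\pi^{YW}: Y \to W_1$ denote the projection, so that $\pi^W = \pi^{YW} \circ \pi^Y$ and consequently $\pi^{YW}_* \pi^Y_*\mu = \pi^W_*\mu$. Define two candidate kernels $W_1 \M Y$ by
\[
L_1(w) := \pi^Y_*(\mu \mid \pi^W)(w), \qquad L_2(w) := \pi^Z_*(\mu \mid \pi^W)(w) \ltimes K.
\]
The conclusion of the proposition is exactly that $L_1 = L_2$ outside a $\pi^W_*\mu$-null set, and this will follow once each $L_i$ is shown to be a regular conditional probability of $\pi^Y_*\mu$ given $\pi^{YW}$.

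For $L_1$, the support condition is immediate since $(\mu \mid \pi^W)(w)$ is concentrated on $(\pi^W)^{-1}(w)$ and $\pi^Y$ carries this set into $(\pi^{YW})^{-1}(w)$; the pushforward identity $\int L_1(w)\,(\pi^W_*\mu)(dw) = \pi^Y_*\mu$ comes from applying $\pi^Y_*$ to the defining equation $(\mu \mid \pi^W)_*\pi^W_*\mu = \mu$. For $L_2$, the same argument applied to $\pi^Z$ in place of $\pi^Y$ first shows that $w \mapsto \pi^Z_*(\mu \mid \pi^W)(w)$ disintegrates $\pi^Z_*\mu$ along $Z \to W_1$; in particular each $\pi^Z_*(\mu \mid \pi^W)(w)$ is supported on $\{w\} \times W_2$, so forming the semidirect product with $K$ produces a measure supported on $\{w\} \times W_2 \times W_3 = (\pi^{YW})^{-1}(w)$. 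The pushforward condition then reduces to the Fubini-style identity
\[
\int \bigl(\pi^Z_*(\mu \mid \pi^W)(w) \ltimes K\bigr)\,(\pi^W_*\mu)(dw) = \Bigl(\int \pi^Z_*(\mu \mid \pi^W)(w)\,(\pi^W_*\mu)(dw)\Bigr) \ltimes K = (\pi^Z_*\mu) \ltimes K = \pi^Y_*\mu,
\]
the final equality being precisely the hypothesis of the proposition.

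The obstacles are purely measure-theoretic bookkeeping rather than conceptual. One must verify that $w \mapsto L_2(w)$ is Borel measurable as a map into $\PM(Y)$, which follows from measurability of $w \mapsto \pi^Z_*(\mu \mid \pi^W)(w)$ together with measurability of $\nu \mapsto \nu \ltimes K$ on $\PM(Z)$. The Fubini exchange pulling $\ltimes K$ out of the integral is checked on rectangles $A \times B \subseteq Z \times W_3$ by Tonelli's theorem (both sides evaluate to $\int_A K(z)(B)\,\lambda(dz)$ for $\lambda := \int \pi^Z_*(\mu \mid \pi^W)(w)\,(\pi^W_*\mu)(dw)$) and extended to arbitrary Borel subsets of $Y$ by a monotone class argument. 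Once these routine verifications are in place, uniqueness of the regular conditional probability of $\pi^Y_*\mu$ given $\pi^{YW}$ forces $L_1 = L_2$ $\pi^W_*\mu$-almost everywhere, which is precisely the stated conclusion.
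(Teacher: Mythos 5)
Your proof is correct and takes essentially the same approach as the paper: both exhibit the two candidate kernels as regular conditional probabilities of $\pi^Y_*\mu$ given the projection $\pi^{YW}: Y \to W_1$ and invoke the almost-everywhere uniqueness of disintegration. The paper phrases the verification via composition of Markov kernels (introducing the auxiliary kernel $K'(z) := \delta_z \times K(z)$ and checking support conditions on graphs), while you verify the two defining properties of the disintegration directly via the Fubini-type interchange, but the underlying argument is the same.
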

\end{samepage}

\begin{proof}

We know that $\mu = \left(\mu \mid \pi^W\right)_* \pi^W_* \mu$. It follows that

\[\pi^Y_*\mu = \pi^Y_* \left(\mu \mid \pi^W\right)_* \pi^W_* \mu = \pi^Z_*\left(\mu \mid \pi^W\right)_* \pi^W_* \mu \ltimes K\]

Define $K': Z \M Y$ by $K'(z):=\delta_z \times K(z)$ and denote composition of Markov kernels by $\circ$. Since pushforward commutes with composition, we have

\[\pi^Y_*\mu = \left(\pi^Y \circ\mu \mid \pi^W\right)_* \pi^W_* \mu = \left(K'  \circ \pi^Z \circ \mu \mid \pi^W\right)_* \pi^W_* \mu\]

Let $\pi^{YW}: Y \rightarrow W$ be the projection mapping and denote $\nu = \pi^Y_* \mu$. We get

\[\nu = \left(\pi^Y \circ\mu \mid \pi^W\right)_* \pi^{YW}_* \nu = \left(K'  \circ \pi^Z \circ \mu \mid \pi^W\right)_* \pi^{YW}_* \nu\]

We also know that

\[\Sp{\pi^{YW}_* \nu \ltimes \mu \mid \pi^W} = \Sp{\pi^{W}_* \mu \ltimes \mu \mid \pi^W \subseteq \Gr{\pi^W}}\]
 
As easy to see, the above implies

\[\Sp{\pi^{YW}_* \nu \ltimes \left(\pi^Y \circ\mu \mid \pi^W\right)} \subseteq \Gr{\pi^{YW}}\]

\[\Sp{\pi^{YW}_* \nu \ltimes \left(K'  \circ \pi^Z \circ \mu \mid \pi^W\right)} \subseteq \Gr{\pi^{YW}}\]

By the uniqueness of regular conditional probability, we conclude that for $\pi^W_* \mu$-almost any $w \in W_1$

\[\left(\nu \mid \pi^{YW}\right)(w)= \left(\pi^Y \circ\mu \mid \pi^W\right)(w) = \left(K'  \circ \pi^Z \circ \mu \mid \pi^W\right)(w)\]

This immediately implies the desired result.
\end{proof}

\begin{proof}[Proof of Example~\ref{exm:update_incomplete_kernels}]
Fix $\rho$ a metrization of $\OO$. The condition $\Sp{\mu} \subseteq y\OO$ is closed because it is equivalent to the vanishing of the continuous function $\E_{x\sim\mu}\left[\rho\left(x,y\OO\right)\right]$. The other condition in the definition of $M^K_n$ is closed because pushforward by a continuous mapping and semidirect product with a Feller continuous kernel are continuous in the weak topology. This gives us condition~\ref{con:def__update_incomplete__clos}. Condition~\ref{con:def__update_incomplete__conv} holds since the convex combination of measures supported on $y\OO$ is supported on $y\OO$ and since pushforward and semidirect product commute with convex combinations.

Consider any $\mu \in M^K$. We know that for any $m \geq n$, $\PO_{m+1*}\mu=\PO_{m*}\mu \ltimes K_m$. Applying Proposition~\ref{prp:four_factors}, this implies that for $\PO_{n*}\mu$-almost any $y\in\Ob^n$

\[\PO_{m+1*}\left(\mu \mid \PO_n\right)(y)=\PO_{m*}\left(\mu \mid \PO_n\right)(y) \ltimes K_m\]

We conclude that $\left(\mu \mid \PO_n\right)(y) \in M^K_n(y)$, proving condition~\ref{con:def__update_incomplete__cond}.
\end{proof}

\begin{samepage}
\begin{proposition}
\label{prp:differentiation}

Consider $X$ a compact Polish space, $d \in \Nats$, $Y$ a compact subset of $\Reals^d$, $\pi: X \rightarrow Y$ continuous and $\mu \in \PM\left(X\right)$. Then, for $\pi_*\mu$-almost any $y \in Y$

\begin{equation}
\lim_{r \rightarrow 0}{\mu \mid \pi^{-1}\left(\B_r\left(y\right)\right) = \left(\mu \mid \pi\right)\left(y\right)}
\end{equation}

\end{proposition}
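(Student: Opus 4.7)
The plan is to reduce the weak convergence statement to a scalar differentiation statement applied to a countable family of test functions. Since $X$ is compact Polish, $C(X)$ is separable; fix a countable dense subset $\{f_k\}_{k\in\Nats}\subseteq C(X)$. Recall that weak convergence of a net of probability measures on $X$ is determined by its action on a dense subset of $C(X)$: if $\sup_r\N{\nu_r}\leq 1$ and $\E_{\nu_r}[f_k]\to\E_\nu[f_k]$ for every $k$, then by uniform approximation $\E_{\nu_r}[f]\to\E_\nu[f]$ for every $f\in C(X)$, i.e.\ $\nu_r\to\nu$ weakly. So it suffices to prove, for $\pi_*\mu$-almost every $y$ and every $k$,
\begin{equation*}
\lim_{r\to 0}\E_{\mu\mid\pi^{-1}(\B_r(y))}[f_k] \;=\; \E_{(\mu\mid\pi)(y)}[f_k].
\end{equation*}

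For each $k$ define the scalar function $g_k:Y\to\Reals$ by $g_k(y):=\E_{(\mu\mid\pi)(y)}[f_k]$. It is bounded (by $\N{f_k}_\infty$) and Borel-measurable up to a $\pi_*\mu$-null set, and the defining property of the disintegration gives, for every Borel $A\subseteq Y$ with $\pi_*\mu(A)>0$,
\begin{equation*}
\E_{\mu\mid\pi^{-1}(A)}[f_k] \;=\; \frac{1}{\pi_*\mu(A)}\int_{\pi^{-1}(A)}f_k\,d\mu \;=\; \frac{1}{\pi_*\mu(A)}\int_A g_k\,d\pi_*\mu.
\end{equation*}
So the question becomes whether the balls $\B_r(y)$ in $Y\subseteq\Reals^d$ are a valid differentiation basis for the Radon measure $\pi_*\mu$ on $\Reals^d$ applied to each $g_k$.

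This is exactly the content of the Lebesgue--Besicovitch differentiation theorem for Radon measures on $\Reals^d$: for any finite Borel measure $\lambda$ on $\Reals^d$ and any $g\in L^1(\lambda)$, one has $(\lambda(\B_r(y)))^{-1}\int_{\B_r(y)}g\,d\lambda\to g(y)$ for $\lambda$-a.e.\ $y$ (this follows from the Besicovitch covering theorem, which makes the weak-type maximal inequality available for an arbitrary Radon measure, not merely for Lebesgue measure). Apply this to $\lambda=\pi_*\mu$ and $g=g_k$ to obtain, for each $k$, a $\pi_*\mu$-null set $E_k$ outside of which the ratio above converges to $g_k(y)$. Let $E:=\bigcup_k E_k$, still a $\pi_*\mu$-null set, and discard also the $\pi_*\mu$-null set of points outside $\Sp\pi_*\mu$ (at any $y\in\Sp\pi_*\mu$ we have $\pi_*\mu(\B_r(y))>0$ for every $r>0$, so the conditional measure is well-defined for all small $r$). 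For any $y$ outside this combined null set, the convergence of $\E_{\mu\mid\pi^{-1}(\B_r(y))}[f_k]$ to $g_k(y)$ holds simultaneously for all $k$, and by the density argument of the first paragraph we conclude $\mu\mid\pi^{-1}(\B_r(y))\to(\mu\mid\pi)(y)$ weakly.

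The only nontrivial step is the differentiation theorem for Radon measures on $\Reals^d$ (rather than for Lebesgue measure), so the main obstacle is simply invoking it cleanly; the reduction from a weak-topology statement in $\PM(X)$ to a scalar differentiation statement via a countable dense family is routine but must be done carefully to avoid taking uncountable unions of null sets.
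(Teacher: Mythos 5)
Your proof is correct and takes essentially the same route as the paper's: reduce to a scalar differentiation statement for a countable dense family $\{f_k\}\subseteq C(X)$ via the disintegration formula, apply the Lebesgue differentiation theorem for the Radon measure $\pi_*\mu$ on $\Reals^d$, take a countable union of exceptional null sets, and extend to all of $C(X)$ by density. You are slightly more explicit than the paper in flagging that the Besicovitch form of the differentiation theorem for arbitrary Radon measures (rather than just Lebesgue measure) is what is needed, a point the paper's invocation of the ``Lebesgue differentiation theorem'' leaves implicit.
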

\end{samepage}

\begin{proof}

Let $\Sq{f_k \in C\left(X\right)}{k}$ be dense in $C\left(X\right)$. For any $y \in Y$ and $r > 0$ we denote $\chi_{yr}: Y \rightarrow \{0,1\}$ the characteristic function of $\B_r\left(y\right)$ and $\chi^\pi_{yr}: X \rightarrow \{0,1\}$ the characteristic function of $\pi^{-1}\left(\B_r\left(y\right)\right)$. For any $k \in \Nats$ and $y_0 \in \Sp \pi_* \mu$ we have $\mu\left(\pi^{-1}\left(\B_r\left(y_0\right)\right)\right) = \pi_*\mu\left(\B_r\left(y_0\right)\right) > 0$ and

$$\E_{\mu}\left[f_k \mid \pi^{-1}\left(\B_r\left(y_0\right)\right)\right] = \frac{\E_{\mu}\left[\chi^\pi_{y_0r} f_k\right]}{\mu\left(\pi^{-1}\left(\B_r\left(y_0\right)\right)\right)} = \frac{\E_{y \sim \pi_* \mu}\left[\E_{\mu}\left[\chi^\pi_{y_0r} f_k \mid \pi^{-1}\left(y\right)\right]\right]}{\pi_*\mu\left(\B_r\left(y_0\right)\right)}$$

$$\E_{\mu}\left[f_k \mid \pi^{-1}\left(\B_r\left(y_0\right)\right)\right] = \frac{\E_{y \sim \pi_* \mu}\left[\chi_{y_0r} \E_{\mu}\left[f_k \mid \pi^{-1}\left(y\right)\right]\right]}{\pi_*\mu\left(\B_r\left(y_0\right)\right)}$$

In particular, the above holds for $\pi_* \mu$-almost any $y_0 \in Y$. Applying the Lebesgue differentiation theorem, we conclude that there is $A \subseteq Y$ s.t. $\pi_*\mu\left(A\right) = 1$ and for any $y \in A$

$$\forall k \in \Nats: \lim_{r \rightarrow 0} \E_{\mu}\left[f_k \mid \pi^{-1}\left(\B_r\left(y\right)\right)\right] = \E_{\mu}\left[f_k \mid \pi^{-1}\left(y\right)\right]$$

Now consider any $f \in C\left(X\right)$. For any $\epsilon > 0$, there is $k \in \Nats$ s.t. $\N{f-f_k} < \epsilon$ and therefore, for any $y \in A$

$$\limsup_{r \rightarrow 0} \E_{\mu}\left[f \mid \pi^{-1}\left(\B_r\left(y\right)\right)\right] \leq \limsup_{r \rightarrow 0} \E_{\mu}\left[f_k \mid \pi^{-1}\left(\B_r\left(y\right)\right)\right] + \epsilon = \E_{\mu}\left[f_k \mid \pi^{-1}\left(y\right)\right] + \epsilon  $$

$$\limsup_{r \rightarrow 0} \E_{\mu}\left[f \mid \pi^{-1}\left(\B_r\left(y\right)\right)\right] \leq \E_{\mu}\left[f \mid \pi^{-1}\left(y\right)\right] + 2\epsilon$$

Similarly

$$\liminf_{r \rightarrow 0} \E_{\mu}\left[f \mid \pi^{-1}\left(\B_r\left(y\right)\right)\right] \geq \E_{\mu}\left[f \mid \pi^{-1}\left(y\right)\right] - 2\epsilon$$

Taking $\epsilon$ to 0, we conclude that

$$\lim_{r \rightarrow 0} \E_{\mu}\left[f \mid \pi^{-1}\left(\B_r\left(y\right)\right)\right] = \E_{\mu}\left[f \mid \pi^{-1}\left(y\right)\right]$$

$$\lim_{r \rightarrow 0} \mu \mid \pi^{-1}\left(\B_r\left(y\right)\right) = \left(\mu \mid \pi\right)\left(y\right)$$
\end{proof}

\begin{proof}[Proof of Example~\ref{exm:update_incomplete_euclid}]
There is a unique minimum since all three defining properties of $N$ are closed w.r.t. arbitrary set intersections. To see $N$ is a regular upper bound, note that conditions $\ref{con:def__update_incomplete__clos}$ and $\ref{con:def__update_incomplete__conv}$ are part of the definition whereas condition $\ref{con:def__update_incomplete__cond}$ follows immediately from Proposition~\ref{prp:differentiation}.
\end{proof}

\section*{Acknowledgments}

This work was supported by the Machine Intelligence Research Institute in Berkeley, California.

We wish to thank Michael Greinecker for pointing out Theorem~\ref{thm:selection} to us.

\bibliographystyle{unsrt}
\bibliography{Incomplete_Models}

\end{document}